\documentclass{article}

\PassOptionsToPackage{authoryear,round}{natbib}
\usepackage{paper_style}

\usepackage[utf8]{inputenc}
\usepackage[T1]{fontenc}
\usepackage{amsmath,amssymb,amsthm,mathtools}
\usepackage{amsfonts}
\usepackage{bbm}
\usepackage{algorithm}
\usepackage[noend]{algpseudocode}
\usepackage{booktabs}
\usepackage{enumitem}
\usepackage{xcolor}
\usepackage[colorlinks=true,linkcolor=blue,citecolor=blue,urlcolor=blue,hyperfootnotes=false]{hyperref}
\usepackage[nameinlink,noabbrev]{cleveref}
\makeatletter
\renewcommand{\theHALG@line}{\thealgorithm.\arabic{ALG@line}}
\makeatother

\theoremstyle{plain}
\newtheorem{theorem}{Theorem}
\newtheorem{lemma}{Lemma}
\newtheorem{proposition}{Proposition}
\newtheorem{corollary}{Corollary}

\theoremstyle{definition}
\newtheorem{remark}{Remark}
\newtheorem{assumption}{Assumption}

\crefname{assumption}{Assumption}{Assumptions}
\Crefname{assumption}{Assumption}{Assumptions}
\crefname{lemma}{Lemma}{Lemmas}
\Crefname{lemma}{Lemma}{Lemmas}

\newcommand{\E}{\mathbb{E}}
\newcommand{\Var}{\mathrm{Var}}
\newcommand{\Prob}{\mathbb{P}}
\newcommand{\R}{\mathbb{R}}
\newcommand{\1}{\mathbbm{1}}

\newcommand{\psihat}{\widehat{\psi}_{\mathrm{cal}}}
\newcommand{\mhat}{m}
\newcommand{\fhat}{f_n}
\newcommand{\what}{\widehat w}

\newcommand{\rhozero}{\rho_0}
\newcommand{\rhon}{\rho_n}
\newcommand{\mtilde}{m_n^\star}
\newcommand{\muzero}{\mu_0}
\newcommand{\Pl}{\mathbb{P}^{L}_n}
\newcommand{\Pu}{\mathbb{P}^{U}_N}
\newcommand{\Px}{P_{0,X}}

\newcommand{\Cov}{\operatorname{Cov}}

\usepackage{fvextra}

\DefineVerbatimEnvironment{codeblock}{Verbatim}{
  breaklines=true,
  breakanywhere=true,
  fontsize=\small
}

\title{Calibeating Prediction-Powered Inference}

\author{
Lars van der Laan\\
Department of Statistics, University of Washington\\[0.75em]
Mark van der Laan\thanks{Targeted ML Solutions, \href{https://www.targetedml.com/}{targetedml.com}}\\
Department of Statistics and Division of Biostatistics,\\
Center for Targeted Machine Learning and Causal Inference,\\
University of California, Berkeley
}
\date{\today}

\begin{document}
\maketitle
 
\begin{abstract}
We study semisupervised mean estimation with a small labeled sample, a large unlabeled sample, and a black-box prediction model whose output may be miscalibrated. A standard approach in this setting is augmented inverse-probability weighting (AIPW) \citep{robins1994estimation}, which protects against prediction-model misspecification but may be inefficient when the prediction score is poorly aligned with the outcome scale. We introduce \emph{Calibrated Prediction-Powered Inference}, which post-hoc calibrates the prediction score on the labeled sample before using it for semisupervised estimation. This simple step requires no retraining and can \emph{calibeat} the original score, improving it both as a predictor of the outcome and as a regression adjustment for semisupervised inference. We study both linear and isotonic calibration. For isotonic calibration, we establish first-order optimality guarantees: isotonic post-processing can improve predictive accuracy and estimator efficiency relative to the original score and simpler post-processing rules, while no further post-processing of the fitted isotonic score yields additional first-order gains. For linear calibration, we show first-order equivalence to PPI++. We also clarify the relationship between existing estimators, showing that the original PPI estimator is a special case of AIPW and can be inefficient when the prediction model is accurate, while PPI++ is AIPW with empirical efficiency maximization \citep{rubin2008empirical}. In simulations and real-data experiments, our calibrated estimators often outperform PPI and are competitive with, or outperform, AIPW and PPI++. We provide an accompanying \texttt{Python} package, \href{https://larsvanderlaan.github.io/ppi-aipw/}{\texttt{ppi\_aipw}}.
\end{abstract}

\section{Introduction}

We study semisupervised mean estimation with one labeled dataset \(\{(X_i, Y_i)\}_{i=1}^n\) and one unlabeled dataset \(\{\widetilde X_i\}_{i=1}^N\), where outcomes are observed only for a subset sampled completely at random. This is a classical missing-outcome problem: rich covariate information is available for many individuals, but the outcome of interest is observed only for a smaller subset. Such settings arise when gold-standard outcomes are expensive to collect but prediction scores are readily available, for example in randomized trials with prognostic models trained on historical data \citep{schuler2022increasing}, biobank analyses using phenotype predictors built from EHR or omics measurements \citep{hong2019semisupervised,zhang2019phecap}, and computational biology pipelines that supplement slow or costly experimental measurements with model-based predictions \citep{ppi2023,crossppi2024}.

The goal is to use the larger unlabeled sample to improve estimation of a population mean while accounting for prediction error. This problem has long been studied in missing-data and semiparametric statistics, where methods such as AIPW, TMLE, and related debiased estimators combine flexible machine-learning prediction with valid inference \citep{robins1994estimation,vanderlaanunified,van2006targeted,vanderLaanRose2011,DoubleML}. More recently, closely related ideas have reappeared in the machine-learning literature under the name prediction-powered inference \citep{ppi2023,angelopoulos2023ppi,crossppi2024}. Similar questions also arise in covariate adjustment for randomized trials, where machine-learning predictors trained on auxiliary data can improve precision in small samples \citep{hansen2008prognostic,rubin2008empirical,moore2009covariate,lin2013agnostic,benkeser2021improving,schuler2022increasing,demirel2024prediction,jin2026prognostic}.

A standard approach is to use the labeled sample to debias a plug-in estimator based on a prediction model. If \(m(X)\) is a black-box prediction score, the plug-in estimator averages \(m(X)\) over the pooled covariate sample, while the labeled sample is used to estimate the residual correction term \(\E\{Y-m(X)\}\). For example, the augmented inverse-probability weighted (AIPW) estimator \citep{robins1994estimation} averages \(m(X)\) over both labeled and unlabeled samples and then adds the average residual over the labeled sample, whereas the standard PPI estimator \citep{ppi2023} averages \(m(X)\) only over the unlabeled covariates before applying the same residual correction. The latter is generally less efficient because it discards information in the labeled covariates.

These estimators protect against bias from an imperfect prediction model, but they typically treat \(m(X)\) as fixed. This can be problematic because efficiency depends on how well \(m(X)\) serves as a regression adjustment, that is, on how well it approximates the conditional mean of \(Y\), not merely on whether it ranks observations correctly. A score may be informative for ranking individuals while still being poorly calibrated, in the sense that its numerical values systematically overstate or understate the outcome. In that case, the residuals \(Y-m(X)\) remain unnecessarily variable, and the resulting semisupervised estimator can be far from efficient. Such miscalibration may arise because the model was trained on a different population, or because of temporal drift, distribution shift, or regularization bias.

This motivates post-hoc calibration. Rather than correcting only the marginal bias \(\E\{Y-m(X)\}\), one can use the labeled sample to transform \(m(X)\) so that its numerical values better align with the outcomes they are meant to predict \citep{zadrozny2001obtaining,zadrozny2002transforming,niculescu2005predicting}. An ideal target is a \textit{perfectly calibrated} score \(m_n^\star(X)\) satisfying
\[
\E\!\left[Y \mid m_n^\star(X)\right] = m_n^\star(X).
\]
Post-hoc calibration can improve the regression adjustment, and thus efficiency, without retraining the underlying model. Classical post-processing methods include Platt scaling \citep{platt1999probabilistic}, linear calibration \citep{mincer1969evaluation}, histogram binning \citep{zadrozny2001obtaining}, and isotonic regression \citep{zadrozny2002transforming}. Such methods are especially relevant for modern machine-learning predictors, such as neural networks, which are often informative yet systematically miscalibrated \citep{bella2010calibration,guo2017calibration,wang2023calibration}. Figure~\ref{fig:toy-calibration-ppi} illustrates this mechanism in a simple example: post-hoc calibrating a mis-scaled prediction score improves both predictive accuracy and the efficiency of prediction-powered mean estimation.

\paragraph{Contributions.}
We make three main contributions.

First, in Section~\ref{sec:calppi}, we introduce \emph{calibrated prediction-powered inference} for semisupervised mean estimation with a small labeled sample and a larger unlabeled one. The basic idea is to use the labeled data to post-hoc calibrate an existing prediction score before using it for semisupervised estimation. This framework accommodates familiar calibration methods, including linear calibration, Platt scaling, histogram binning, and isotonic regression; see, in particular, Sections~\ref{sec:linear} and \ref{sec:iso}. We focus especially on linearly calibrated PPI and isotonic-calibrated PPI, which are simple to implement, require no tuning, and can be added to an existing prediction pipeline without retraining the original model.

Second, we show that calibration addresses an important limitation of standard AIPW and PPI estimators: a prediction score may rank observations well while still be poorly aligned with the outcome scale, thereby leaving efficiency gains unrealized. In our setting, post-hoc calibration can improve the score as a regression adjustment, yielding sharper point estimates and confidence intervals, while still leading to a simple plug-in estimator. It can also make the fitted score first-order optimal against further post-processing within a specified class, both for prediction and for semisupervised estimation. In the spirit of \citet{foster2023calibeating}, we refer to these two properties as \emph{prediction calibeating} and \emph{functional-estimation calibeating}, respectively.

Third, we develop asymptotic theory for linearly calibrated and isotonic-calibrated PPI estimators. For linear calibration, we show that PPI++ is asymptotically equivalent to linearly calibrated PPI, a form of prognostic-score regression adjustment. For isotonic calibration, Section~\ref{sec:theory} establishes asymptotic normality, valid inference, and efficiency guarantees. In particular, isotonic-calibrated PPI achieves the smallest asymptotic variance among estimators based on monotone transformations of the original score, and is therefore at least as efficient as AIPW and PPI based on the uncalibrated score. We also show that the fitted isotonic score cannot be improved by further post-processing for prediction, and that the resulting mean estimator is first-order equivalent to the oracle estimator based on the conditional mean of the outcome given the calibrated score.

Our calibrated PPI framework can be viewed as a semisupervised specialization of the calibrated debiased machine learning framework developed by \citet{van2024automatic}. Our estimators can also be interpreted as targeted minimum loss estimators \citep{van2006targeted,vanderLaanRose2011}, with calibration playing the role of the targeting step.

Section~\ref{sec:setup} reviews the two-sample semisupervised model, the AIPW, PPI, and PPI++ estimators, and the main efficiency concepts used throughout the paper. Section~\ref{sec:exp} presents synthetic and real-data experiments.

\textbf{Code.} The accompanying \texttt{Python} package, reproduction code, and documentation are publicly available at
\href{https://github.com/Larsvanderlaan/ppi-aipw}{github.com/Larsvanderlaan/ppi-aipw}. An interactive package website with examples and API documentation is available at
\href{https://larsvanderlaan.github.io/ppi-aipw/}{larsvanderlaan.github.io/ppi-aipw/}. Self-contained code is provided in Appendix~\ref{appendix:code}.

 \begin{figure*}[h]
\centering
\includegraphics[width=\textwidth]{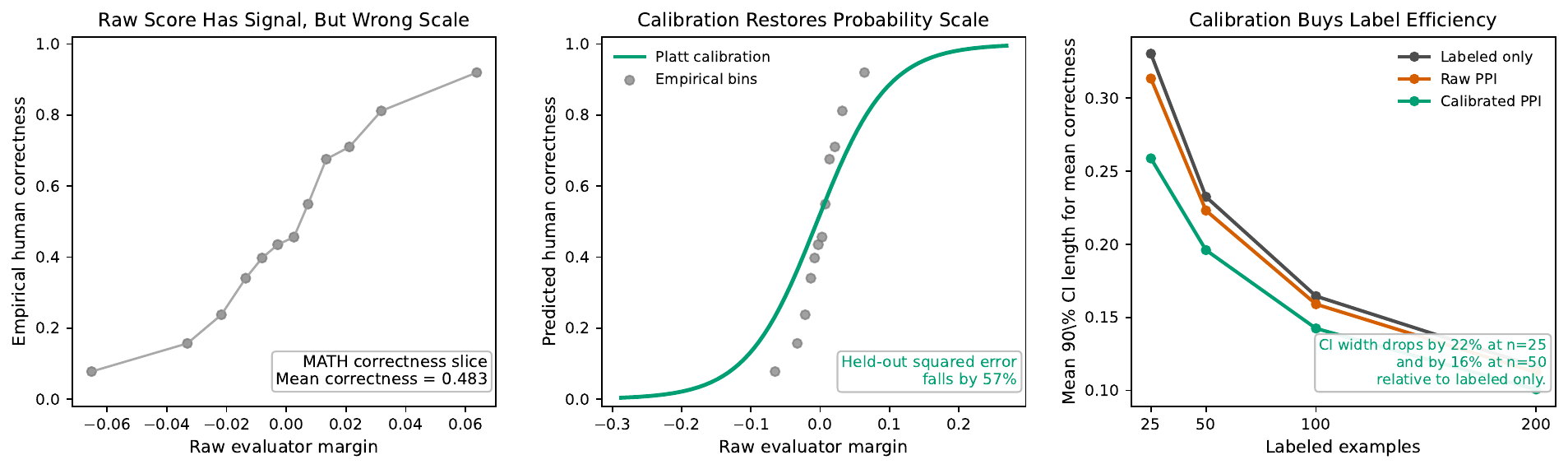}
\caption{Toy example illustrating how calibration can improve prediction-powered mean estimation. The example is derived from the MATH slice of a public LLM-evaluation benchmark with binary human correctness labels, using the ArmoRM evaluator's raw margin as the prediction score. The target is the average human correctness \(\psi=\mathbb{E}[Y]\), estimated from a small labeled sample together with many unlabeled evaluator scores. Left: the raw margin contains clear signal about correctness, but it is not on the correct numerical scale. Middle: a one-dimensional Platt calibration fit on 100 labeled examples maps the raw margin to estimated correctness probabilities and reduces held-out squared error by about \(57\%\). Right: in Monte Carlo experiments with \(N=2000\) unlabeled scores, calibrated PPI yields substantially shorter \(90\%\) confidence intervals for \(\psi\) than both labeled-only estimation and raw PPI; the mean interval length is about \(17\%\) smaller at \(n=25\) labeled examples and about \(15\%\) smaller at \(n=50\).}
\label{fig:toy-calibration-ppi}
\end{figure*}

\subsection{Related work}
Although PPI has been presented as a recent development, the underlying statistical problem and several closely related estimators lie within a much broader literature on semiparametric debiasing and regression adjustment, including work on missing data, causal inference, and debiased machine learning. In particular, the use of machine-learning predictions to improve efficiency under missing at random outcomes is closely connected to the literature on covariate adjustment in randomized trials and causal inference \citep{moore2009covariate,benkeser2021improving,hojbjerre2025powering}, while power-tuning ideas can be viewed as a form of empirical efficiency maximization \citep{rubin2008empirical}.

\textbf{Semiparametric statistics and missing data.} This paper builds on foundational ideas from the missing-data and semiparametric literatures. \citet{rubin1976} formalized missing at random as a key identification condition for recovering full-data targets from partially observed outcomes. Semiparametric efficiency theory then characterized efficient influence functions, regular asymptotically linear estimators, and one-step debiased procedures \citep{levit1975efficiency,pfanzagl1985contributions,hasminskii1979nonparametric,klaassen1987consistent,bickel1993efficient}. In missing-data and causal-inference problems, these ideas led to influence-function-based estimating equations \citep{robins1994estimation,robins1995analysis,robins1995semiparametric,vanderlaanunified,tsiatis2006semiparametric}, often with doubly robust structure \citep{bang2005doubly}. As a result, the efficiency theory for missing-data models under missingness or coarsening at random is now well understood, and the class of regular asymptotically linear estimators in these models can be characterized through their influence functions \citep{robins1994estimation,robins1995analysis,robins1995semiparametric,vanderlaanunified,tsiatis2006semiparametric}.

\textbf{Debiased machine learning and targeted learning.}
A large body of work combines the semiparametric framework above with machine-learning-based nuisance estimation, often under the label of debiased machine learning. Flexible learning methods within estimating equations and AIPW-style procedures appear in early work such as \citet{vanderlaanunified}. Targeted minimum loss estimation (TMLE), or targeted learning, updates an initial nuisance estimator in a parameter-targeted way to produce a plug-in estimator with reduced first-order bias; in the present setting, the primary nuisance estimated by machine learning is the outcome regression \citep{van2006targeted,vanderLaanRose2011,gruber2009targeted,hoffman2020tmle,ross2025constructing}. These approaches are often combined with sample splitting and cross-fitting to accommodate generic machine-learning estimators \citep{schick1986asymptotically,klaassen1987consistent,zheng2010asymptotic,vanderLaanRose2011,DoubleML}. Double machine learning emphasizes similar ingredients---orthogonality, cross-fitting, and influence-function expansions---for inference with modern nuisance estimators \citep{DoubleML,chernozhukov2022automatic}, and is a closely related semiparametric framework \citep{vanderlaan2019cvtmle,diaz2020machine,chen2026equivalence}. For sufficiently stable estimators, cross-fitting can itself be relaxed \citep{chen2022stable}. Closely related work studies balancing-weight estimators \citep{hainmueller2012entropy,imai2014covariate,zubizarreta2015stable,lendle2015balancing,chattopadhyay2020balancing,hejazi2023revisiting}, which often admit equivalent regression-adjustment or augmented representations \citep{chattopadhyay2023implied,bruns2025augmented,rotnitzky2025note}. In missing-at-random settings with known missingness probability, including semisupervised learning and randomized trials, doubly robust estimators typically permit valid inference under flexible nuisance estimation even when the learned regression adjustment is misspecified \citep{moore2009covariate,moore2009increasing,rosenblum2009using,hojbjerre2025powering,hojbjerre2026within}.

\textbf{Semisupervised inference and improving power in randomized trials.}
Related work studies semisupervised or surrogate-assisted estimation in settings where the primary outcome is observed only on a limited subset, while auxiliary outcomes or surrogate measurements are available more broadly \citep{pepe1992inference,pepe1994auxiliary,chen2000miscellanea,chen2008improving,cheng2021robust,ji2025predictions,kallus2025role}. Two-sample semisupervised inference is a special case of two-stage sampling designs \citep{scott1982effect,rose2011targeted,hejazi2021efficient,qiu2026efficient} and, more generally, of data fusion, for which a substantial semiparametric efficiency literature has been developed \citep{li2023efficient,li2025data,graham2024towards,xu2025unified}. More recent papers have used the term \emph{prediction-powered inference} (PPI) for the semisupervised mean-estimation setting in which a black-box predictor is combined with a small labeled sample and a large unlabeled sample \citep{ppi2023,angelopoulos2023ppi,crossppi2024,xu2025unified,song2026demystifying,poulet2025prediction,wang2025efficient,lee2026mec}. These papers also brought such methods to a broader machine-learning audience and provided accessible software. Closely related questions have long been studied in randomized trials, where prognostic scores or other regression adjustments learned from larger historical, external-control, or auxiliary data sources are used to improve precision in smaller trials \citep{hansen2008prognostic,rosenblum2009using,moore2009covariate,moore2009increasing,moore2011robust,vanderweele2019selecting,benkeser2021improving,burger2021use,schuler2022increasing,li2023estimating,balzer2024adaptive,van2024covariate,hojbjerre2025powering,yang2026improving}. Methodologically, however, PPI estimators can often be viewed as instances or refinements of existing semiparametric procedures. For example, PPI++ is an AIPW estimator \citep{robins1994estimation,vanderlaanunified} obtained by empirical efficiency maximization \citep{rubin2008empirical} over a scaling parameter, while cross-PPI \citep{crossppi2024} is a cross-fitted AIPW estimator \citep{zheng2010asymptotic,DoubleML}. More fundamentally, in settings with known missingness probability, the ability to combine black-box prediction models with valid semiparametric inference is closely tied to the robustness properties of AIPW and is therefore not unique to semisupervised mean estimation \citep{bang2005doubly,seaman2018introduction,rotnitzky2021characterization}.

\textbf{Calibrated DML and Recalibrated PPI.}
In our mean-estimation setting, the closest precursor is the calibrated debiased machine-learning framework of \citet{van2024automatic}, which studies calibration of the outcome regression in AIPW estimators for doubly robust inference. Our setting can be viewed as a semisupervised specialization of that perspective, adapted to prediction-powered inference and aimed at improving efficiency in randomized designs. This connection also motivates our term \emph{calibrated PPI}. A related paper is \citet{ji2025predictions}, who study recalibrated prediction-powered inference for general estimating equations. Rather than correcting the predictive model itself, their approach modifies the imputed loss in the PPI estimating equation, thereby ``recalibrating” the estimator rather than the predictions. In the mean-estimation setting under squared loss, the target correction reduces to the conditional mean of the outcome given the initial prediction score, which is also the efficient regression adjustment in the reduced model based only on that score. Thus, this conditional mean may be viewed as an oracle calibrator. The main differences lie in the estimators, emphasis, and theory. \citet{ji2025predictions} use flexible machine learning to estimate this oracle target and thereby pursue semiparametric efficiency \citep{robins1994estimation}. They do not study specific calibration procedures, and their method remains in AIPW form. By contrast, we study simple, widely used post-hoc calibration maps applied directly to a fixed score, and ask what can be guaranteed even without consistent estimation of the oracle target. Our estimators retain a simple plug-in form, yielding lightweight methods that can be incorporated into an existing prediction pipeline without retraining. We also develop calibration-specific results: an exact AIPW interpretation of the plug-in estimator, first-order equivalence between linear calibration and PPI++, and a first-order calibeating guarantee for isotonic calibration.

\textbf{From missing data to causal inference.} From the perspective of estimation and efficiency theory, semisupervised mean estimation with missing-at-random outcomes is equivalent to randomized experiments, controlled trials, and survey-sampling designs once the missingness indicator is reinterpreted as a treatment or sampling indicator, so that the relevant potential outcomes are viewed as missing \citep{rubin1978bayesian,vanderlaanunified,licausal,mozer2026ppi}. This equivalence allows tools from covariate adjustment, causal inference, and model-assisted estimation to be transferred directly to the semisupervised setting; see, for example, \citet{mozer2026ppi} for results relating prediction-powered inference to model-assisted survey-sampling estimators. The two-sample formulation we study is also equivalent to a one-sample i.i.d.\ model \citep{li2023efficient}. In particular, our methods apply to inference on the counterfactual mean \(\mathbb{E}[Y(1)]\) in a randomized trial with binary treatment, where treated units form the labeled sample and control units contribute covariate information only; see Appendix~\ref{app:causal}.

\textbf{Calibration.} A final relevant thread concerns calibration. Classical post-processing methods such as Platt scaling \citep{platt1999probabilistic,cox1958two,gupta2023online}, linear calibration \citep{mincer1969evaluation,chernozhukov2018generic,leng2021calibration}, isotonic regression \citep{zadrozny2002transforming,niculescu2005predicting}, and histogram binning \citep{zadrozny2001obtaining,gupta2021distribution} improve predictive reliability without retraining the underlying model. The idea has deeper roots in the forecasting literature \citep{mincer1969evaluation,lichtenstein1977calibration,vovk1992universal,vovk2005algorithmic,gneiting2007probabilistic,lambert2011elicitation}; see also the historical discussion in \citet{lee2023t}. More recent work studies calibration under broader losses and prediction tasks \citep{jung2021moment,noarov2023statistical,van2023causal,whitehouse2024orthogonal,van2025venn}, as well as applications to conformal prediction \citep{van2024self,van2025venn}, importance-weight stabilization \citep{gutman2022propensity,deshpande2023calibrated,ballinari2024improving,van2024stabilized}, treatment effect estimation \citep{chernozhukov2018generic,van2023causal,whitehouse2024orthogonal}, and value prediction in offline reinforcement learning \citep{van2025bellman,van2025semiparametric}. Distribution-free guarantees for histogram binning are developed by \citet{gupta2021distribution}, while \citet{van2023causal,van2025venn} establish asymptotic distribution-free guarantees for isotonic calibration.
 
 \section{Two-Sample Semisupervised Setup}
\label{sec:setup}

\subsection{Data structure and notation}

We observe two independent datasets,
\[
\mathcal D_L=\{(X_i,Y_i)\}_{i=1}^n
\qquad\text{and}\qquad
\mathcal D_U=\{\widetilde X_j\}_{j=1}^N.
\]
The labeled sample consists of i.i.d.\ draws from a distribution \(P_0\) on \(\mathcal X\times\mathbb R\), while the unlabeled sample consists of i.i.d.\ draws from the distribution \(\Px\) of \(X\) under \(P_0\). Our target is the population mean
\[
\psi_0 := \E_{P_0}[Y],
\qquad
\muzero(x):=\E_{P_0}[Y\mid X=x].
\]
This can also be written as
\[
\psi_0=\int y\,dP_0(x,y)=\int \muzero(x)\,d\Px(x).
\]

For functions \(a:\mathcal X\times\mathbb R\to\mathbb R\) and \(b:\mathcal X\to\mathbb R\), define the empirical means over the labeled and unlabeled samples by
\[
\Pl a := \frac{1}{n}\sum_{i=1}^n a(X_i,Y_i),
\qquad
\Pu b := \frac{1}{N}\sum_{j=1}^N b(\widetilde X_j).
\]
For readability, we will sometimes write \(\Pl\{a(X,Y)\}\) instead of \(\Pl a\). When a function depends only on \(X\), we also write
\[
\Pl\{b(X)\}:=\frac{1}{n}\sum_{i=1}^n b(X_i).
\]
We write expectations and variances as \(\E_{P_0}[\cdot]\), \(\E_{\Px}[\cdot]\), \(\Var_{P_0}[\cdot]\), and \(\Var_{\Px}[\cdot]\). 

Throughout, let \(M:=n+N\) and \(\rho_n:=n/M\) denote the labeled fraction. Then
\[
\frac{1}{n+N}\left\{\sum_{i=1}^n f(X_i)+\sum_{j=1}^N f(\widetilde X_j)\right\}
=
\rho_n \Pl\{f(X)\} + (1-\rho_n)\Pu\{f(\widetilde X)\}.
\]
In other words, the empirical mean of \(f(X)\) over the pooled covariate sample is a weighted average of the empirical means from the labeled and unlabeled samples.

\begin{assumption}[Two-sample design]
\label{assump:design}
The following hold:
\begin{enumerate}[leftmargin=6mm,label=(\roman*)]
\item \(\mathcal D_L\) is i.i.d.\ from \(P_0\), and \(\mathcal D_U\) is i.i.d.\ from \(\Px\).
\item \(\mathcal D_L\) and \(\mathcal D_U\) are independent.
\item \(Y\) has finite second moment under \(P_0\).
\item With \(M:=n+N\), the labeled fraction \(\rhon=n/M\) satisfies \(\rhon\to\rhozero\) for some \(\rhozero\in(0,1)\).
\end{enumerate}
\end{assumption}

\begin{remark}
The labeled sample carries the outcome information, while the unlabeled sample sharpens estimation of the marginal covariate distribution. The pooled i.i.d.\ missing-data formulation in Appendix~\ref{app:iid} is a special case of this two-sample setup.
\end{remark}

\subsection{Review of AIPW, PPI, and semiparametric efficiency}
\label{sec:effreview}

\textbf{The AIPW class.}
We begin with the augmented inverse-probability weighted (AIPW) estimator in the unrestricted two-sample model \citep{robins1994estimation}. A useful perspective is that AIPW is not a single estimator, but a family indexed by a score function \(f:\mathcal X\to\R\) with finite second moment. For any such \(f\), define
\begin{equation}
\label{eq:aipw-family}
\widehat\psi(f)
:=
\rhon \Pl\{f(X)\}
+
(1-\rhon)\Pu\{f(\widetilde X)\}
+
\Pl\{Y-f(X)\}.
\end{equation}
This estimator combines a plug-in term based on \(f(X)\) with a residual correction from the labeled sample. For any fixed \(f\), \(\widehat\psi(f)\) is unbiased in finite samples for \(\psi_0=\E_{P_0}[Y]\). In particular, taking \(f(X)=0\) yields the empirical mean estimator \(\Pl\{Y\}\). Thus, within the AIPW class, the choice of \(f\) affects efficiency but not validity: even a poor score does not induce bias, but it can leave substantial precision gains unrealized. In this sense, AIPW is a ``safe'' estimator \citep{xu2025unified}. This property can be viewed as a particularly simple instance of double robustness\footnote{In more general missing-at-random settings with unknown, covariate-dependent missingness probabilities, the analogous AIPW estimator remains doubly robust asymptotically, but it no longer enjoys the same finite-sample unbiasedness as in the present known-\(\rho_n\) setting \citep{seaman2018introduction}.} \citep{bang2005doubly}.

This viewpoint is important for our purposes. Since validity is guaranteed for any \(f\), the central question is how to choose or improve the score to make \(\widehat\psi(f)\) as efficient as possible. In particular, efficiency depends on how well \(f(X)\) approximates the conditional mean of \(Y\), not merely on whether it ranks observations correctly. The same principle underlies the use of flexible machine learning for covariate adjustment in randomized trials while retaining valid inference \citep{moore2009covariate,moore2009increasing,schuler2022increasing,hojbjerre2025powering}.

\textbf{Asymptotic linearity, efficiency, and the role of residual variance.} To characterize the leading large-sample error in the two-sample model, we decompose the estimation error into one contribution from the labeled sample and one from the unlabeled sample. We say that \(\widehat\psi\) is asymptotically linear with influence pair \((D^L,D^U)\) if
\[
\widehat\psi-\psi_0
=
\frac{1}{M}\sum_{i=1}^n D^L(X_i,Y_i)
+
\frac{1}{M}\sum_{j=1}^N D^U(\widetilde X_j)
+
o_p(M^{-1/2}),
\]
where \(\E_{P_0}[D^L(X,Y)]=0\) and \(\E_{\Px}[D^U(\widetilde X)]=0\). By the central limit theorem and the independence of the two samples,
\[
\sqrt{M}\,(\widehat\psi-\psi_0)
\rightsquigarrow
N\!\left(0,\,
\rhozero \Var_{P_0}[D^L(X,Y)]
+
(1-\rhozero)\Var_{\Px}[D^U(\widetilde X)]\right).
\]
The influence pair therefore determines the estimator's asymptotic variance and provides a convenient way to compare efficiency across different choices of \(f\). The next result uses this representation to characterize the AIPW class, identify the efficient influence pair in the unrestricted two-sample model, and thereby obtain the semiparametric efficiency bound \citep{bickel1993efficient}.

\begin{proposition}[Influence pair for the AIPW class]
\label{prop:aipw-class}
Under \cref{assump:design}, for any fixed \(f:\mathcal X\to\R\) with finite second moment, the estimator \(\widehat\psi(f)\) is asymptotically linear with influence pair \((D_f^L,D_f^U)\), where
\begin{align*}
D_f^L(X,Y)
&=
\tilde f(X)-\psi_0+\rhozero^{-1}\{Y-\tilde f(X)\},\\
D_f^U(\widetilde X)
&=
\tilde f(\widetilde X)-\psi_0,
\end{align*}
with \(\tilde f(X):=f(X)-\E_{\Px}[f(X)]+\psi_0\)
denoting the recentered version of \(f(X)\). The resulting asymptotic variance of \(\sqrt{M}\,(\widehat\psi(f)-\psi_0)\) is
\begin{align*}
\Var[\mu_0(X)]
+
\rhozero^{-1}\E_{P_0}\!\bigl[\Var[Y\mid X]\bigr]
+
\frac{1-\rhozero}{\rhozero}\Var_{P_0}[f(X)-\mu_0(X)].
\end{align*}
This variance is minimized over the AIPW class when \(\tilde f=\mu_0\), in particular at \(f=\mu_0\). Therefore, the efficient influence pair for \(\psi_0=\E_{P_0}[Y]\) in the unrestricted two-sample model is \((D_{\mu_0}^L,D_{\mu_0}^U)\), and the corresponding asymptotic variance is the efficiency bound\footnote{Appendix~\ref{section::plugin} shows that every regular estimator is asymptotically equivalent to \(\widehat\psi(f)\) for some score function \(f\). Thus, the AIPW family characterizes the full class of regular estimators and the semiparametric efficiency bound in this model.}.
\end{proposition}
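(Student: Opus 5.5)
Since $f$ is fixed, $\widehat\psi(f)-\psi_0$ is a sum of sample means of centered functions, so the plan is a direct algebraic decomposition followed by a variance computation. Write $c_f:=\E_{\Px}[f(X)]$, so that $\tilde f=f-c_f+\psi_0$ and $f-\tilde f$ is constant. Each term of \eqref{eq:aipw-family} is invariant to adding a constant to $f$, because the weights on $f$ cancel: $\rhon+(1-\rhon)-1=0$. Hence $\widehat\psi(f)=\widehat\psi(\tilde f)$, and I may work with $\tilde f$, which satisfies $\E_{\Px}[\tilde f(X)]=\psi_0$ and $\E_{P_0}[Y-\tilde f(X)]=0$.

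Next I expand
\[
\widehat\psi(\tilde f)-\psi_0=\rhon\Pl\{\tilde f(X)-\psi_0\}+(1-\rhon)\Pu\{\tilde f(\widetilde X)-\psi_0\}+\Pl\{Y-\tilde f(X)\}.
\]
Using $\rhon\Pl=\frac1M\sum_{i\le n}$ and $(1-\rhon)\Pu=\frac1M\sum_{j\le N}$, the first two terms are exactly $\frac1M\sum D^L$-part and $\frac1M\sum D_f^U$. For the last term I write $\Pl\{Y-\tilde f\}=\rhon^{-1}\frac1M\sum_i\{Y_i-\tilde f(X_i)\}$ and replace $\rhon^{-1}$ by $\rhozero^{-1}$. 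The error is $(\rhon^{-1}-\rhozero^{-1})\,\Pl\{Y-\tilde f\}=o(1)\cdot O_p(n^{-1/2})=o_p(M^{-1/2})$. Here I use the CLT (finite second moments of $Y$ and $f$), mean zero, $\rhon\to\rhozero$, and $n\asymp M$ from \cref{assump:design}(iv). This yields the stated influence pair, and both components have mean zero. The only point needing care is this $\rhon$ versus $\rhozero$ substitution, which is the mild obstacle.

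For the variance, I use the stated CLT display:
\[
\rhozero\Var(D^L)+(1-\rhozero)\Var(D^U).
\]
Write $g:=\tilde f-\mu_0$, which has mean zero, and $\varepsilon:=Y-\mu_0(X)$. Then
\[
D^L=\mu_0-\psi_0+g+\rhozero^{-1}(\varepsilon-g)=(\mu_0-\psi_0)+\rhozero^{-1}\varepsilon-\tfrac{1-\rhozero}{\rhozero}g
\]
and $D^U=(\mu_0-\psi_0)+g$. Since $\varepsilon$ is conditionally mean zero given $X$, it is uncorrelated with functions of $X$. So
\[
\Var(D^L)=\Var\mu_0+\rhozero^{-2}\E\Var(Y\mid X)+\Big(\tfrac{1-\rhozero}{\rhozero}\Big)^2\Var g-2\tfrac{1-\rhozero}{\rhozero}\Cov(\mu_0,g),
\]
and $\Var(D^U)=\Var\mu_0+\Var g+2\Cov(\mu_0,g)$. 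Weighting by $\rhozero$ and $1-\rhozero$, the covariance terms cancel, and the $\Var g$ coefficient is
\[
\tfrac{(1-\rhozero)^2}{\rhozero}+(1-\rhozero)=\tfrac{1-\rhozero}{\rhozero}.
\]
Since $\Var g=\Var(f-\mu_0)$, this gives the displayed formula.

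For minimization, the first two terms do not depend on $f$, and the third term is nonnegative. It vanishes iff $f-\mu_0$ is a.s. constant, i.e.\ iff $\tilde f=\mu_0$, in particular at $f=\mu_0$. The claim that this is the efficiency bound in the unrestricted model would rest on Appendix~\ref{section::plugin}, as the footnote indicates: every regular estimator is equivalent to some $\widehat\psi(f)$, so the minimum over the AIPW class is the bound. Alternatively, one could verify directly that $(D_{\mu_0}^L,D_{\mu_0}^U)$ lies in the tangent space of the two-sample model, i.e.\ is a pathwise-derivative gradient. I would cite the appendix rather than redo that argument.
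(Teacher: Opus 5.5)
Your proposal is correct and follows essentially the same route as the paper: constant-shift invariance to pass to $\tilde f$, the exact decomposition into labeled and unlabeled sample means, the $\rho_n\to\rho_0$ substitution, and a variance expansion in which the $\Cov(\mu_0,g)$ terms cancel. The efficiency-bound claim is likewise deferred to known characterization results, as in the paper. One harmless slip: the substitution error is $(1-\rho_n/\rho_0)\,\Pl\{Y-\tilde f(X)\}$, so your expression is missing a factor $\rho_n$, which does not affect the $o_p(M^{-1/2})$ conclusion.
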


A key implication of Proposition~\ref{prop:aipw-class} is that the asymptotic variance of the AIPW estimator based on \(f\) exceeds the efficiency bound by
\[
\frac{1-\rho_0}{\rho_0}\Var_{P_0}[f(X)-\mu_0(X)].
\]
Thus, within the AIPW class, efficiency is governed entirely by how well \(f(X)\) approximates the conditional mean \(\mu_0(X)\), up to an additive constant; a related observation was made in \cite{wang2025efficient}. In particular, ranking alone is not enough: a score may order observations correctly and still be inefficient if its numerical values are poorly aligned with the outcome scale. Full efficiency therefore generally requires access to the full covariate information \(X\), or at least to a summary rich enough to recover \(\mu_0(X)\) \citep{robins1994estimation,robins1995semiparametric}. This motivates using a flexible black-box prediction score \(m(X)\) for regression adjustment in practice, and motivates post-hoc calibration when that score is informative but misaligned with the conditional mean.

Two particularly popular choices in \eqref{eq:aipw-family} are AIPW and PPI. The most common choice is \(f(X)=m(X)\), where \(m(X)\) is a prediction score, yielding the usual augmented inverse-probability weighted (AIPW) estimator \citep{robins1994estimation}:
\[
\widehat{\psi}_{\mathrm{AIPW}}
=
\frac{1}{M}\left\{
\sum_{i=1}^n m(X_i)
+
\sum_{j=1}^N m(\widetilde X_j)
+
\frac{1}{\rho_n}\sum_{i=1}^n \{Y_i-m(X_i)\}
\right\},
\]
where \(\rho_n^{-1}\) is the inverse probability of observing the outcome.

Another estimator in this class was considered by \citet{ppi2023} under the name PPI,\footnote{The PPI and PPI++ estimators are algebraically equivalent to the model-assisted survey-design estimators of \citet{cassel1976some} and \citet{sarndal2003model}, respectively; see \citet{mozer2026ppi} for a review.} defined by
\[
\widehat{\psi}_{\mathrm{PPI}}
=
\Pu\{m(\widetilde X)\}+\Pl\{Y-m(X)\},
\]
which implicitly corresponds to the choice \(f(X)=m(X)/(1-\rho_n)\). Relative to \(\widehat{\psi}_{\mathrm{AIPW}}\), \(\widehat{\psi}_{\mathrm{PPI}}\) places the plug-in term entirely on the unlabeled sample and rescales the score by the factor \((1-\rho_n)^{-1}>1\). By Proposition~\ref{prop:aipw-class}, this improves efficiency only when the rescaled score is closer to \(\mu_0(X)\) in mean squared error. Consequently, PPI is inefficient when \(m=\mu_0\) and is typically less efficient when \(m\) is already reasonably accurate \citep{han2012note,rotnitzky2012improved}. For this reason, we view AIPW as the more natural default in practice.

\textbf{Improving over a class of AIPW estimators through empirical efficiency maximization.}  Because both \(\widehat{\psi}_{\mathrm{AIPW}}\) and \(\widehat{\psi}_{\mathrm{PPI}}\) use scores in the scaling class \(\{\lambda m(X):\lambda\in\mathbb R\}\), it is natural to choose the scaling factor data-adaptively rather than fixing it in advance. One way to do so is to minimize the estimator's empirical influence-function variance over this class:
\[
\sigma_n^2(f)
:=
\rho_n \Pl\left[\left\{f(X)-\widehat{\psi}(f)+\rho_n^{-1}(Y-f(X))\right\}^2\right]
+
(1-\rho_n)\Pu\left[\left\{f(\widetilde X)-\widehat{\psi}(f)\right\}^2\right].
\]
That is, one chooses \(\lambda\) to minimize \(\sigma_n^2(f)\) over \(f(X)=\lambda m(X)\), a procedure known as empirical efficiency maximization \citep{rubin2008empirical,vanderlaanunified}. The resulting estimator, called PPI++ by \citet{angelopoulos2023ppi}, is simply an AIPW estimator with an empirically rescaled prediction rule \(\hat\lambda m(X)\), where \(\hat\lambda\) estimates the population-optimal scaling coefficient
\begin{equation}
\label{opt:scale}
\lambda^\star
=
\frac{\Cov\bigl(Y,m(X)\bigr)}{\Var\bigl(m(X)\bigr)}.
\end{equation}
The same principle extends beyond simple rescaling. For example, one can minimize \(\sigma_n^2(f)\) over richer classes such as
\[
\{\lambda m(X)+\beta_{\mathrm{lin}}^\top X:
\lambda \in \mathbb R,\ \beta_{\mathrm{lin}} \in \mathbb R^d\},
\]
over monotone transformations of the score, or over a reproducing kernel Hilbert space; see Appendix~\ref{appendix:remarks}. More generally, the same criterion can be used in cross-validation to choose among candidate estimator classes \citep{rubin2008empirical}. In particular, if many prediction models \(\{m_1(X),m_2(X),\dots\}\) are available, one could learn a weighted linear combination by minimizing the empirical variance.

\section{Calibrated Prediction-Powered Inference}
\label{sec:calppi}

Standard AIPW and prediction-powered estimators treat the prediction score \(m(X)\) as fixed. In many applications, however, an available score may be useful for ranking individuals while still being poorly calibrated, in the sense that its numerical values do not accurately reflect the outcome itself. For example, a score may correctly assign higher values to individuals with larger outcomes on average, yet still systematically understate or overstate the conditional mean. Since the efficiency of a semisupervised estimator depends on the quality of the regression adjustment, such miscalibration can leave substantial efficiency gains unrealized.

Motivated by this observation, we study a simple post-hoc strategy: calibrate \(m(X)\) using the labeled sample, then plug the calibrated score into the semisupervised mean estimator. We call the resulting approach \emph{calibrated prediction-powered inference}. As we show below, this yields a simple plug-in estimator with an exact AIPW representation. Different choices of calibration class yield different calibrated PPI estimators, including the linear and isotonic procedures studied below.

\subsection{General estimator and exact AIPW representation}

Let \(m:\mathcal X\to\mathbb R\) be a black-box regression model fit using external data or a separate training sample. We post-process this score on the labeled sample by learning a map \(f_n:\mathbb R\to\mathbb R\), and define the calibrated score
\[
\mtilde := f_n \circ m.
\]
Our estimator averages the calibrated predictions over the pooled labeled and unlabeled samples:
\begin{equation}
\label{eq:plugin}
\psihat
:=
\frac{1}{n+N}\left\{\sum_{i=1}^n \mtilde(X_i) + \sum_{j=1}^N \mtilde(\widetilde X_j)\right\}.
\end{equation}
Equivalently,
\[
\psihat
=
\rhon \Pl\{\mtilde(X)\} + (1-\rhon)\Pu\{\mtilde(\widetilde X)\}.
\]

At a high level, calibration seeks to transform the original score \(m(X)\) into a new score \(\mtilde(X)\) that is better aligned with the outcome. Ideally,
\[
\mtilde(X)\approx \E_{P_0}[Y\mid \mtilde(X)],
\]
so that among individuals with the same predicted value, the average outcome is close to that value. Under squared error loss, calibration is also closely tied to optimal post-processing, since replacing \(\mtilde(X)\) by \(\E_{P_0}[Y\mid \mtilde(X)]\) cannot increase mean squared error. In our setting, this matters because a better regression adjustment yields a more efficient semisupervised estimator. A natural population benchmark is therefore $\E_{P_0}[Y\mid m(X)],$
the best predictor obtainable from the original score alone. Our goal is to move \(m(X)\) toward this benchmark through post-processing, without retraining the underlying model and without requiring that it be estimated accurately.

We learn the post-processing map by empirical risk minimization over a class \(\mathcal H\) of functions \(h:\mathbb R\to\mathbb R\):
\begin{equation}
\label{eq:H-erm}
f_n \in \arg\min_{h\in\mathcal H} \Pl\bigl[(Y-h(m(X)))^2\bigr].
\end{equation}
Assume that \(\mathcal H\) contains the constant functions. Then the corresponding first-order condition implies
\[
\Pl\!\left\{Y-\mtilde(X)\right\}=0.
\]
Therefore, \(\psihat\) is exactly the AIPW estimator \(\widehat{\psi}(\mtilde)\) based on the post-processed score \(\mtilde(X)\):
\begin{align}
\label{eqn::aipwcal}
\psihat
=
\rhon \Pl\{\mtilde(X)\}
+ (1-\rhon)\Pu\{\mtilde(\widetilde X)\}
+ \Pl\{Y-\mtilde(X)\}.
\end{align}
Important examples we will consider later include linear and isotonic calibration, corresponding to \(\mathcal H\) equal to the classes of affine and monotone functions, respectively.

So far, we have described calibration informally as a population property: the score should be interpretable as a prediction of the outcome and should admit no further improvement within a prescribed transformation class. Since the true distribution \(P_0\) is unknown, we instead impose an empirical analogue on the labeled sample. Let \(\mathcal F\) be a linear class of maps \(f:\mathbb R\to\mathbb R\) containing the constants. We say that \(\mtilde\) is \textit{empirically calibrated} over \(\mathcal F\) if
\begin{equation}
\label{eq:cal-risk}
\Pl\bigl[(Y-\mtilde(X))^2\bigr]
\le
\Pl\bigl[(Y-f(\mtilde(X)))^2\bigr],
\qquad
\text{for all } f\in\mathcal F.
\end{equation}
Because \(\mathcal F\) is linear, this is equivalent to the orthogonality condition
\begin{equation}
\label{eq:cal-score}
\Pl\bigl[f(\mtilde(X))\{Y-\mtilde(X)\}\bigr]=0,
\qquad
\text{for all } f\in\mathcal F.
\end{equation}
As we show next, empirical calibration yields not only the exact AIPW representation in \eqref{eqn::aipwcal}, but also a first-order equivalence to an oracle AIPW estimator based on the best population transformation of the fitted score within \(\mathcal F\).

\begin{theorem}[Representation relative to the population-optimal transformation]
\label{thm:repr}
Let \(m_n^\star\) satisfy \eqref{eq:cal-score} over a linear class \(\mathcal F\) containing the constant functions. Define
\[
m_{0,\mathcal F}^\dagger := f_{0,\mathcal F}\circ m_n^\star, \qquad
f_{0,\mathcal F}\in \arg\min_{f\in\mathcal F}
\E_{P_0}\!\left[(Y-f(m_n^\star(X)))^2\right].
\]
Then
\begin{align*}
\psihat
&=
\rhon \Pl\{m_{0,\mathcal F}^\dagger(X)\}
+ (1-\rhon)\Pu\{m_{0,\mathcal F}^\dagger(\widetilde X)\}
+ \Pl\{Y-m_{0,\mathcal F}^\dagger(X)\} \\
&\qquad
+ (1-\rhon)(\Pu-\Pl)\{m_n^\star-m_{0,\mathcal F}^\dagger\}.
\end{align*}
\end{theorem}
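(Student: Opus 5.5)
The identity is purely algebraic. I will not need the population optimality of \(f_{0,\mathcal F}\) at all, only that \(m_{0,\mathcal F}^\dagger\) is some fixed function of \(X\) for which the empirical averages are defined. I treat \(m_n^\star\) as given, so \(f_{0,\mathcal F}\) is defined conditionally on the fitted score. The only property of \(m_n^\star\) the argument uses is the single consequence of \eqref{eq:cal-score} obtained by taking \(f\equiv 1\in\mathcal F\):
\[
\Pl\{Y-m_n^\star(X)\}=0, \qquad\text{that is,}\qquad \Pl\{Y\}=\Pl\{m_n^\star(X)\}.
\]
This is the same first-order condition that gave the exact AIPW representation \eqref{eqn::aipwcal}.

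The plan is to expand the right-hand side of the claimed display and show that it collapses to \(\psihat\). Abbreviate \(m^\dagger:=m_{0,\mathcal F}^\dagger\) and \(m^\star:=m_n^\star\). Distribute the last term as
\[
(1-\rhon)\Pu m^\star-(1-\rhon)\Pu m^\dagger-(1-\rhon)\Pl m^\star+(1-\rhon)\Pl m^\dagger .
\]
Then collect coefficients function by function.

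\begin{itemize}
\item The \(\Pu m^\dagger\) terms cancel: \((1-\rhon)-(1-\rhon)=0\).
\item The \(\Pl m^\dagger\) terms cancel: \(\rhon-1+(1-\rhon)=0\).
\item What remains is \(\Pl\{Y\}+(1-\rhon)\Pu m^\star-(1-\rhon)\Pl m^\star\).
\end{itemize}

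Substituting \(\Pl\{Y\}=\Pl m^\star\) gives
\[
\rhon\Pl m^\star+(1-\rhon)\Pu m^\star ,
\]
which is \(\psihat\) by the definition \eqref{eq:plugin} and its equivalent weighted form.

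There is no real obstacle; the main point is conceptual bookkeeping. First, note explicitly that \(f_{0,\mathcal F}\) may depend on the data through \(m_n^\star\), and that this does not affect the identity, because no expectation is taken. Second, note that existence of a minimizer \(f_{0,\mathcal F}\) is implicit in the statement. Finally, I would remark why the decomposition is useful even though it holds for any \(m^\dagger\). The first three terms form the AIPW estimator \(\widehat\psi(m^\dagger)\) in \eqref{eq:aipw-family}. The remainder \((1-\rhon)(\Pu-\Pl)\{m_n^\star-m^\dagger\}\) is a difference of two empirical means of the same function under the same marginal \(\Px\). It is therefore expected to be \(o_p(M^{-1/2})\) once \(m_n^\star-m^\dagger\to 0\) in \(L^2(\Px)\) and suitable empirical-process conditions hold. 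This is where the choice of \(m^\dagger\) as the population-optimal transformation matters, in later results rather than in this one.
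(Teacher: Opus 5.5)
Your proof is correct and matches the paper's argument. Both use only the mean-calibration identity \(\Pl\{Y-m_n^\star(X)\}=0\), obtained by taking \(f\equiv 1\) in \eqref{eq:cal-score}, followed by a purely algebraic add-and-subtract of \(m_{0,\mathcal F}^\dagger\). The paper runs the algebra from \(\psihat\) to the right-hand side, while you expand the right-hand side back to \(\psihat\). As you observe, the population optimality of \(f_{0,\mathcal F}\) plays no role in the identity itself.
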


Theorem~\ref{thm:repr} is the key structural result. It shows that \(\psihat\) differs from the AIPW estimator based on \(m_{0,\mathcal F}^\dagger\), the population-optimal transformation of \(m_n^\star\) within \(\mathcal F\), only through the remainder
\[
(1-\rhon)(\Pu-\Pl)\{m_n^\star-m_{0,\mathcal F}^\dagger\}.
\]
If \(m_n^\star\) is close to \(m_{0,\mathcal F}^\dagger\) in mean squared error, this term is higher order, so \(\psihat\) and the oracle AIPW estimator have the same first-order asymptotic behavior. In this sense, empirical calibration makes the fitted score behave as though it had already been optimally transformed within \(\mathcal F\). It also yields a balancing-weights representation of the form \(\Pl\{\hat w_{\mathrm{bal}}(X)Y\}\); see Appendix~\ref{app:repr}. We make this precise for isotonic calibration in Section~\ref{sec:theory}.

\textbf{Why calibration can improve efficiency.}
Calibration can improve efficiency by improving the regression adjustment in the exact AIPW representation \eqref{eqn::aipwcal}. If \(f_n\) converges to a population minimizer \(g_0\in\mathcal H\), then the post-processed score \(\mtilde=f_n\circ m\) typically converges to
\[
m_0(X):=g_0\{m(X)\},
\qquad
g_0\in\arg\min_{g\in\mathcal H}
\E_{P_0}\!\left[\{Y-g(m(X))\}^2\right].
\]
Thus, \(m_0\) is the best regression adjustment obtainable from the original score \(m\) using transformations in \(\mathcal H\). By Proposition~\ref{prop:aipw-class}, when \(\mathcal H\) contains the constant functions, minimizing prediction error over the class \(\{\theta\circ m:\theta\in\mathcal H\}\) is first-order equivalent to maximizing AIPW efficiency over the same class. Consequently, the resulting AIPW estimator has asymptotic variance no larger than that of any estimator based on \(\theta\circ m\) with \(\theta\in\mathcal H\). In particular, if the identity map belongs to \(\mathcal H\), then calibration cannot do worse than using the raw score \(m\). If \(\mathcal H\) contains the oracle map \(t\mapsto \E_{P_0}[Y\mid m(X)=t]\), then \(m_0(X)=\E_{P_0}[Y\mid m(X)]\), and the resulting estimator is semiparametrically efficient in the reduced model based on the score \(m(X)\) \citep{ji2025predictions}.

\textbf{Full versus linear calibration.}
We focus on two main cases. In \emph{full calibration}, the target class \(\mathcal F\) is unrestricted, so that at the population level no further transformation of the calibrated score can reduce mean squared prediction error. In practice, however, the fitted map \(f_n\) is learned over a restricted optimization class \(\mathcal H\), which serves as a regularizer. Important examples include piecewise-constant and monotone classes, corresponding to histogram and isotonic calibration; see Section~\ref{sec:iso}. In \emph{linear calibration}, both \(\mathcal H\) and \(\mathcal F\) are restricted to affine maps \(t\mapsto a+bt\), corresponding to least-squares fitting of the working model \(a+b\,m(X)\); see Section~\ref{sec:linear}.

\subsection{Isotonic-calibrated PPI}
\label{sec:iso}

We focus on isotonic regression as a simple nonparametric post-hoc calibration method \citep{zadrozny2001obtaining,zadrozny2002transforming,niculescu2005predicting}. Let \(\mathcal F_{\mathrm{iso}}\) denote the class of nondecreasing real-valued functions. We fit the calibrator using the labeled pairs \(\{(\mhat(X_i),Y_i)\}_{i=1}^n\) by solving
\[
\hat f \in \arg\min_{f \in \mathcal F_{\mathrm{iso}}}
\sum_{i=1}^n \bigl\{Y_i - f(\mhat(X_i))\bigr\}^2,
\]
and define the calibrated score
\[
m_{n,\mathrm{iso}}^\star(X) = \hat f\{\mhat(X)\}.
\]
The resulting isotonic-calibrated PPI estimator is
\[
\widehat{\psi}_{\mathrm{iso}}
:=
\frac{1}{n+N}
\left\{
\sum_{i=1}^n m_{n,\mathrm{iso}}^\star(X_i)
+
\sum_{j=1}^N m_{n,\mathrm{iso}}^\star(\widetilde X_j)
\right\}.
\]
Isotonic-calibrated PPI is particularly attractive because it is lightweight, tuning-free, and easy to implement with widely available software. Its population target is the monotone transformation of the prediction score \(m(X)\) that minimizes mean squared error, and therefore performs at least as well as the original score and any other monotone transformation of \(m(X)\), including positive rescalings. Thus it yields the best regression adjustment available within the class of monotone transformations of \(m(X)\).

Isotonic regression fits naturally into the \((\mathcal H,\mathcal F)\) framework with \(\mathcal H=\mathcal F=\mathcal F_{\mathrm{iso}}\), the class of nondecreasing functions. Moreover, because the fitted isotonic score is piecewise constant on the blocks learned by the pooled-adjacent-violators algorithm (PAVA) \citep{barlow1972isotonic}, any further transformation of the fitted score is constant on those same blocks. In turn, isotonic calibration yields full empirical calibration.

\begin{proposition}[Isotonic calibration is fully calibrated]
\label{prop:iso-repr}
For every function \(h:\R\to\R\),
\begin{equation}
\label{eq:orth}
\Pl\bigl[h(m_{n,\mathrm{iso}}^\star(X))\{Y-m_{n,\mathrm{iso}}^\star(X)\}\bigr]=0.
\end{equation}
Moreover, for any \(\theta \in \mathcal{F}_{\mathrm{iso}}\),
\[
\Pl\bigl[(Y-m_{n,\mathrm{iso}}^\star(X))^2\bigr]
\le
\Pl\bigl[(Y-\theta(\mhat(X)))^2\bigr].
\]
In particular,
\[
\Pl\bigl[(Y-m_{n,\mathrm{iso}}^\star(X))^2\bigr]
\le
\Pl\bigl[(Y-\mhat(X))^2\bigr].
\]
\end{proposition}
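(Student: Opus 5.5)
The plan is to use the block structure of the isotonic least-squares solution. Sort the labeled scores $m(X_i)$ and let the pooled-adjacent-violators algorithm partition the labeled indices into consecutive blocks $B_1,\dots,B_K$. Labeled points with tied scores fall in the same block, since the fit is a function of $m(X_i)$. The fitted value on each block is constant, say $c_k$, with $c_1<\dots<c_K$ after merging adjacent blocks that share a value. The one property I need is the standard characterization of the isotonic solution: each level set of the fit is a union of PAVA blocks, and on each such level set the fitted value equals the average of the $Y_i$ in it. So I define the blocks directly as the level sets $\{i: \hat f(m(X_i))=c_k\}$, and the claim to establish is that $c_k$ is the mean of the $Y_i$ over that level set.

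To prove this averaging property without relying on PAVA internals, I will use a perturbation argument. Fix a level $c_k$ and perturb the fitted vector by $\varepsilon$ on exactly the indices with fitted value $c_k$. Because the distinct levels are separated by a positive gap, the perturbed vector is still nondecreasing in $m(X_i)$ for all sufficiently small $|\varepsilon|$, in both directions. Such a perturbation is realized by some nondecreasing $f$, obtained by interpolating between the sorted score values. Optimality of $\hat f$ then forces the derivative in $\varepsilon$ at $0$ to vanish:
\[
\sum_{i:\hat f(m(X_i))=c_k}\{Y_i-c_k\}=0 .
\]
This step is the main obstacle, and it is resolved by choosing the level sets of the fit rather than the PAVA blocks. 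Perturbing a whole level set never creates a monotonicity violation.

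The orthogonality identity \eqref{eq:orth} then follows directly. For any $h:\R\to\R$, the function $h(m_{n,\mathrm{iso}}^\star(X_i))$ equals $h(c_k)$ on level set $k$. Hence
\[
\Pl\bigl[h(m_{n,\mathrm{iso}}^\star(X))\{Y-m_{n,\mathrm{iso}}^\star(X)\}\bigr]=\frac1n\sum_k h(c_k)\sum_{i\in B_k}(Y_i-c_k)=0 .
\]
No measurability or linearity of $h$ is needed, because only finitely many values $h(c_k)$ enter the sum.

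The risk inequality is exactly the ERM optimality of $\hat f$ over $\mathcal F_{\mathrm{iso}}$. For any $\theta\in\mathcal F_{\mathrm{iso}}$, the empirical risk of $\theta\circ m$ is at least that of $\hat f\circ m$ by definition of $\hat f$; dividing by $n$ gives the stated inequality. The final inequality is the special case where $\theta$ is the identity map, which is nondecreasing. Existence of a minimizer is standard: the problem is a projection onto a closed convex cone in $\R^n$.
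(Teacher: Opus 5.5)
Your proof is correct and follows the paper's route. The paper's Lemma~\ref{lem:block} also shows that residuals sum to zero on each constant piece of the isotonic fit, then sums $h(c_j)$ against those zero block sums, and the risk inequality is plain ERM optimality with the identity in $\mathcal F_{\mathrm{iso}}$. The one difference is that the paper simply cites the KKT conditions for the PAVA blocks, whereas your first-order perturbation on whole level sets, which never breaks monotonicity, derives that same stationarity condition in a self-contained way.
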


\begin{algorithm}[t]
\caption{Isotonic-calibrated PPI}
\label{alg:isoppi}
\begin{algorithmic}[1]
\Require \(\mathcal D_L=\{(X_i,Y_i)\}_{i=1}^n\), \(\mathcal D_U=\{\widetilde X_j\}_{j=1}^N\), score \(\mhat\), level \(1-\alpha\)
\State Compute \(T_i=\mhat(X_i)\) and \(\widetilde T_j=\mhat(\widetilde X_j)\)
\State Fit \(\hat f\in\arg\min_{f\in\mathcal F_{\mathrm{iso}}}\sum_{i=1}^n\{Y_i-f(T_i)\}^2\)
\State Set \(m_{n,\mathrm{iso}}^\star(X_i)=\hat f(T_i)\) and \(m_{n,\mathrm{iso}}^\star(\widetilde X_j)=\hat f(\widetilde T_j)\)
\State Form \(\widehat\psi_{\mathrm{iso}}\gets (n+N)^{-1}\{\sum_{i=1}^n m_{n,\mathrm{iso}}^\star(X_i)+\sum_{j=1}^N m_{n,\mathrm{iso}}^\star(\widetilde X_j)\}\)
\State Set \(\widehat D_i^L=m_{n,\mathrm{iso}}^\star(X_i)-\widehat\psi_{\mathrm{iso}}+\rhon^{-1}\{Y_i-m_{n,\mathrm{iso}}^\star(X_i)\}\) and \(\widehat D_j^U=m_{n,\mathrm{iso}}^\star(\widetilde X_j)-\widehat\psi_{\mathrm{iso}}\)
\State Set \(\widehat{\mathrm{SE}}_{\mathrm{iso}}^2=(n+N)^{-2}\{\sum_{i=1}^n(\widehat D_i^L)^2+\sum_{j=1}^N(\widehat D_j^U)^2\}\)
\State Return \(\widehat\psi_{\mathrm{iso}} \pm z_{1-\alpha/2}\widehat{\mathrm{SE}}_{\mathrm{iso}}\)
\end{algorithmic}
\end{algorithm}

\noindent {\textbf{Implementation.}}
Isotonic regression is typically fit using the pooled-adjacent-violators algorithm (PAVA) \citep{barlow1972isotonic}, which is fast in standard settings but can become burdensome at very large sample sizes. A scalable alternative is to use tree-based software with monotonicity constraints, such as XGBoost or LightGBM; see Appendix~\ref{appendix:code}. These implementations also allow regularization through parameters such as the number of leaves, maximum depth, and minimum leaf size.

\noindent {\textbf{Histogram binning.}}
Histogram binning calibrates a score by partitioning its range into prespecified bins and replacing the score within each bin by a constant value \citep{zadrozny2001obtaining,zadrozny2002transforming}. Thus \(\mathcal H\) is the class of piecewise-constant maps on a fixed partition of the score space \citep{gupta2020distribution,gupta2021distribution}, and one may take \(\mathcal F=\mathcal H\). Like isotonic regression, histogram binning is fully empirically calibrated. The key difference is that histogram binning uses a fixed partition, whereas isotonic calibration learns the partition adaptively from the data.

 \subsection{Linearly calibrated PPI}
\label{sec:linear}

When the labeled sample is small, a simple parametric calibration rule can be more stable than histogram binning or isotonic regression. We therefore consider \emph{linear calibration}, which rescales and shifts the prediction score using an affine map. This is similar in spirit to Platt scaling or temperature-style recalibration \citep{platt1999probabilistic,guo2017calibration}. Unlike isotonic calibration, linear calibration cannot fix general nonlinear miscalibration, but it is often more stable in small samples and is optimal within the class of affine transformations.

We fit the calibration map by least squares on the labeled data:
\begin{equation}
\label{eq:linear-cal}
\bigl(\hat a_{\mathrm{lin}},\hat b_{\mathrm{lin}}\bigr)\in\arg\min_{a,b\in\R}\sum_{i=1}^n \{Y_i-a\,\mhat(X_i)-b\}^2,
\qquad
m_{n,\mathrm{lin}}^\star(X):=\hat a_{\mathrm{lin}}\,\mhat(X)+\hat b_{\mathrm{lin}}.
\end{equation}
The resulting estimator averages the calibrated predictions over the labeled and unlabeled covariates:
\begin{equation}
\label{eq:linear-plugin}
\widehat{\psi}_{\mathrm{lin}}
:= \frac{1}{n+N}
\left\{
\sum_{i=1}^n m_{n,\mathrm{lin}}^\star(X_i)
+
\sum_{j=1}^N m_{n,\mathrm{lin}}^\star(\widetilde X_j)
\right\}.
\end{equation}
Linear calibration is practically appealing, as it ensures that the empirical mean squared error of the prediction score cannot be improved by a shift or rescaling. That is, \(m_{n,\mathrm{lin}}^\star(X)\) is empirically calibrated over the class of affine transformations. This yields a family of AIPW representations.

\begin{proposition}[Linear calibration as AIPW]
\label{prop:linear}
The normal equations for \cref{eq:linear-cal} imply
\[
\Pl\{Y-m_{n,\mathrm{lin}}^\star(X)\}=0
\qquad\text{and}\qquad
\Pl\bigl[m_{n,\mathrm{lin}}^\star(X)\{Y-m_{n,\mathrm{lin}}^\star(X)\}\bigr]=0.
\]
Hence, for any \(\hat w(X)=a+b\,m_{n,\mathrm{lin}}^\star(X)\),
\begin{equation}
\label{eq:linear-ppi}
\widehat{\psi}_{\mathrm{lin}}
=
\rhon \Pl\{m_{n,\mathrm{lin}}^\star(X)\}
+(1-\rhon)\Pu\{m_{n,\mathrm{lin}}^\star(\widetilde X)\}
+
\Pl\bigl[\hat w(X)\{Y-m_{n,\mathrm{lin}}^\star(X)\}\bigr].
\end{equation}
\end{proposition}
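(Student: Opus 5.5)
The argument has two steps: first derive the two empirical orthogonality conditions from the least-squares normal equations, then add a zero term to the plug-in estimator to obtain the family of AIPW representations.

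\textbf{Step 1: orthogonality conditions.} Write the least-squares objective in \eqref{eq:linear-cal} as \(L(a,b)=\sum_{i=1}^n\{Y_i-a\,\mhat(X_i)-b\}^2\). This is a convex quadratic, so any minimizer \((\hat a_{\mathrm{lin}},\hat b_{\mathrm{lin}})\) satisfies \(\partial_a L=\partial_b L=0\). This holds even when the minimizer is not unique, for example when \(\mhat\) is constant on the labeled sample. Dividing by \(-2n\) gives two equations:
\[
\Pl\{Y-m_{n,\mathrm{lin}}^\star(X)\}=0,
\qquad
\Pl\bigl[\mhat(X)\{Y-m_{n,\mathrm{lin}}^\star(X)\}\bigr]=0.
\]
The first is the first claimed identity. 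For the second, note that \(m_{n,\mathrm{lin}}^\star(X)=\hat a_{\mathrm{lin}}\mhat(X)+\hat b_{\mathrm{lin}}\) is a linear combination of \(\mhat(X)\) and \(1\). Multiplying the two equations by \(\hat b_{\mathrm{lin}}\) and \(\hat a_{\mathrm{lin}}\) respectively and summing gives
\[
\Pl\bigl[m_{n,\mathrm{lin}}^\star(X)\{Y-m_{n,\mathrm{lin}}^\star(X)\}\bigr]=0.
\]

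\textbf{Step 2: AIPW representation.} Fix \(\hat w(X)=a+b\,m_{n,\mathrm{lin}}^\star(X)\). By linearity of \(\Pl\) and the two identities from Step 1,
\[
\Pl\bigl[\hat w(X)\{Y-m_{n,\mathrm{lin}}^\star(X)\}\bigr]
= a\,\Pl\{Y-m_{n,\mathrm{lin}}^\star(X)\}+b\,\Pl\bigl[m_{n,\mathrm{lin}}^\star(X)\{Y-m_{n,\mathrm{lin}}^\star(X)\}\bigr]=0 .
\]
As noted in Section~\ref{sec:setup}, the pooled average in \eqref{eq:linear-plugin} equals \(\rhon\Pl\{m_{n,\mathrm{lin}}^\star(X)\}+(1-\rhon)\Pu\{m_{n,\mathrm{lin}}^\star(\widetilde X)\}\). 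Adding the zero term above to this expression gives \eqref{eq:linear-ppi}.

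The argument is a direct computation with no real obstacle. The one point needing care is that \(a\) and \(b\) may be data-dependent, for instance random weights built from the fitted score. This causes no difficulty, because the identity is exact and algebraic for each realization of the sample rather than holding only in expectation.
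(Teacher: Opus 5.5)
Your proposal is correct and follows essentially the same route as the paper's proof: the two least-squares normal equations are combined linearly to get \(\Pl[m_{n,\mathrm{lin}}^\star(X)\{Y-m_{n,\mathrm{lin}}^\star(X)\}]=0\), which makes \(\Pl[\hat w(X)\{Y-m_{n,\mathrm{lin}}^\star(X)\}]\) vanish, and this zero term is then added to the pooled plug-in form. Your side remarks (non-uniqueness of the minimizer, data-dependent \(a,b\)) are correct extra care rather than a different argument.
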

 
One can also extend the linear calibration step by including additional covariates in the regression adjustment when enough labeled data are available; see Algorithm~\ref{alg:linearppi}.

The next result connects linear calibration to PPI++, a closely related estimator obtained by rescaling the original score via empirical efficiency maximization. In the mean-estimation setting, the two methods are first-order equivalent, though they need not coincide exactly in finite samples. The reason is that both target the same one-dimensional regression adjustment: the optimal population rescaling of \(m(X)\) by the slope \(\lambda_0\) in \eqref{opt:scale}, namely, the coefficient from the population linear regression of \(Y\) on \(m(X)\). This equivalence, however, relies on optimizing over all \(\lambda \in \mathbb{R}\). If \(\lambda\) is instead constrained to lie in \([0,(1-\rho_n)^{-1}]\), as appears to be the case in the official \texttt{ppi\_py} implementation, then the equivalence need not hold. Our experiments also suggest that this restriction can reduce performance.

Let \(\widehat{\psi}_{++}(\widehat{\lambda}_{++})\) denote the PPI++ estimator of \citet{angelopoulos2023ppi}, that is, the AIPW estimator based on the score \(\widehat{\lambda}_{++} m(X)\), where \(\widehat{\lambda}_{++}\) is chosen by empirical efficiency maximization over the class \(\{\lambda m(X): \lambda \in \mathbb{R}\}\) \citep{rubin2008empirical}.
 
\begin{proposition}[First-order equivalence of PPI++ and linear calibration]
\label{prop:ppi++-linear}
Assume Assumption \ref{assump:design} and that \(\Var[m(X)]>0\) and \(\E_{P_0}[m(X)^2]<\infty\). Then
\[
\widehat\psi_{++}(\hat\lambda_{++})-\widehat\psi_{\mathrm{lin}}
=
(1-\rho_n)\bigl(\hat\lambda_{++}-\hat a_{\mathrm{lin}}\bigr)
\left\{\Pu\{m(\widetilde X)\}-\Pl\{m(X)\}\right\}.
\]
Moreover, \(\hat\lambda_{++}-\hat a_{\mathrm{lin}} = O_p(n^{-1/2})\), and 
\[
\widehat\psi_{++}(\hat\lambda_{++})-\widehat\psi_{\mathrm{lin}}
=
O_p\!\left(
\frac{1-\rho_n}{\sqrt{\rho_n}}\,(n+N)^{-1}
\right)
=
O_p\bigl((n+N)^{-1}\bigr).
\]
In particular,
\[
\widehat\psi_{++}(\hat\lambda_{++})-\widehat\psi_{\mathrm{lin}}
=
o_p\bigl(M^{-1/2}\bigr).
\]
\end{proposition}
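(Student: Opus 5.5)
The difference formula follows from algebra alone, because both estimators are AIPW estimators \eqref{eq:aipw-family} with scores in the span of \(\{1,m\}\). The rate statement then follows from two root-\(n\) facts: both slopes converge at rate \(n^{-1/2}\) to the same limit \(\lambda^\star\) in \eqref{opt:scale}, and \(\Pu\{m\}-\Pl\{m\}\) is \(O_p(n^{-1/2})\).

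\textbf{Step 1: exact identity.} For a score \(f=\lambda m\), expanding \eqref{eq:aipw-family} gives
\[
\widehat\psi(\lambda m)=\Pl\{Y\}+(1-\rhon)\lambda\bigl\{\Pu\{m(\widetilde X)\}-\Pl\{m(X)\}\bigr\}.
\]
For the linear calibration, Proposition~\ref{prop:linear} gives \(\Pl\{Y-m_{n,\mathrm{lin}}^\star(X)\}=0\). Hence \(\widehat\psi_{\mathrm{lin}}=\widehat\psi(m_{n,\mathrm{lin}}^\star)\). Expanding in the same way, the intercept \(\hat b_{\mathrm{lin}}\) cancels between the plug-in and residual terms, which leaves
\[
\widehat\psi_{\mathrm{lin}}=\Pl\{Y\}+(1-\rhon)\hat a_{\mathrm{lin}}\bigl\{\Pu\{m\}-\Pl\{m\}\bigr\}.
\]
Subtracting the two displays yields the claimed identity.

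\textbf{Step 2: both slopes are root-\(n\) consistent for \(\lambda^\star\).} The least-squares slope is the ratio of empirical moments \(\hat a_{\mathrm{lin}}=\widehat{\Cov}_n(Y,m)/\widehat{\Var}_n(m)\). By the CLT under finite second moments, each empirical moment is within \(O_p(n^{-1/2})\) of its population value. Since \(\Var[m(X)]>0\), the delta method gives \(\hat a_{\mathrm{lin}}-\lambda^\star=O_p(n^{-1/2})\).

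For PPI++, I would write \(\sigma_n^2(\lambda m)\) explicitly. Because \(\widehat\psi(\lambda m)\) is affine in \(\lambda\), every centered term in \(\sigma_n^2\) is affine in \(\lambda\). So \(\sigma_n^2(\lambda m)=A_n\lambda^2-2B_n\lambda+C_n\), and the coefficients are polynomials in empirical first and second moments of \((Y,m)\) on \(\mathcal D_L\) and of \(m\) on \(\mathcal D_U\). Therefore \(\hat\lambda_{++}=B_n/A_n\).

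The coefficients converge at rate \(O_p(n^{-1/2})\), the unlabeled sample being larger, to \(A_0,B_0\). These limits are defined by the population asymptotic variance of \(\widehat\psi(\lambda m)\) as a quadratic in \(\lambda\). By Proposition~\ref{prop:aipw-class}, that variance equals a constant plus \(\frac{1-\rhozero}{\rhozero}\Var(\lambda m-\mu_0)\), up to terms not depending on \(\lambda\). Hence \(A_0=\frac{1-\rhozero}{\rhozero}\Var(m)>0\) and \(B_0/A_0=\Cov(m,\mu_0)/\Var(m)=\lambda^\star\).

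A second delta-method step then gives \(\hat\lambda_{++}-\lambda^\star=O_p(n^{-1/2})\). Combining the two results, \(\hat\lambda_{++}-\hat a_{\mathrm{lin}}=O_p(n^{-1/2})\).

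\textbf{Step 3: rates.} \(\Pu\{m\}\) and \(\Pl\{m\}\) are independent unbiased estimators of \(\E[m(X)]\). Their difference therefore has variance \(\Var(m)(1/n+1/N)\), so it is \(O_p(n^{-1/2})\). Plugging this and Step 2 into the identity gives a difference of order \((1-\rhon)\,O_p(n^{-1})\). Since \(n=\rhon M\) and \(\rhon\to\rhozero\in(0,1)\), this is \(O_p(M^{-1})=o_p(M^{-1/2})\). The finer form \(\frac{1-\rhon}{\sqrt{\rhon}}M^{-1}\) comes from tracking the \(\rho\)-dependence of the two \(n^{-1/2}\) factors; it is immaterial under Assumption~\ref{assump:design}.

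\textbf{Main obstacle.} The only delicate step is the explicit quadratic form of \(\sigma_n^2(\lambda m)\) and the identification of its population minimizer with \(\lambda^\star\). I would do this by matching the limit of \(\sigma_n^2(\lambda m)\) to the variance formula in Proposition~\ref{prop:aipw-class}. This avoids expanding all cross terms by hand; the only additional check is that the empirical centering at \(\widehat\psi\) contributes only \(O_p(n^{-1/2})\) perturbations to the coefficients.
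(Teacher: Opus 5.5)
Your Steps 1 and 3, and the OLS part of Step 2, are exactly the paper's argument. The PPI++ half of Step 2, however, fails at the very check you call routine.

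In \(\sigma_n^2\) as displayed in Section~\ref{sec:effreview}, each term is centered at \(\widehat\psi(f)\), but the score \(\lambda m\) itself is not recentered. Proposition~\ref{prop:aipw-class} is stated for \(\tilde f\), not \(f\). With \(\Delta_n:=\Pu\{m(\widetilde X)\}-\Pl\{m(X)\}\), the labeled term has sample mean
\[
(\rho_n^{-1}-1)\,\Pl\{Y-\lambda m(X)\}-(1-\rho_n)\lambda\Delta_n
\;\to_p\;
\frac{1-\rho_0}{\rho_0}\bigl(\psi_0-\lambda\E_{P_0}[m(X)]\bigr),
\]
and the unlabeled term has sample mean tending to \(\lambda\E_{P_0}[m(X)]-\psi_0\). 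These are \(O(1)\) and depend on \(\lambda\), so they are not \(O_p(n^{-1/2})\) perturbations. Your \(A_0\) picks up the extra term \(\frac{1-\rho_0}{\rho_0}\bigl(\E_{P_0}[m(X)]\bigr)^2\), and your \(B_0\) picks up \(\frac{1-\rho_0}{\rho_0}\psi_0\E_{P_0}[m(X)]\). The limit of \(\sigma_n^2(\lambda m)\) is then
\[
V(\lambda m)+\frac{1-\rho_0}{\rho_0}\bigl(\lambda\E_{P_0}[m(X)]-\psi_0\bigr)^2
=
\Var[Y]+\frac{1-\rho_0}{\rho_0}\E_{P_0}\bigl[\{Y-\lambda m(X)\}^2\bigr].
\]
This is minimized at the no-intercept slope \(\E_{P_0}[m(X)Y]/\E_{P_0}[m(X)^2]\), which differs from \(\lambda^\star\) unless the population regression line of \(Y\) on \(m(X)\) passes through the origin. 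So for this criterion \(B_0/A_0=\lambda^\star\) is false. Since \(\sqrt{M}\,\Delta_n\) has a nondegenerate limit, the proposition's conclusion would then fail in general.

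The missing ingredient is that the selection criterion must be a genuine, shift-invariant variance estimate, as in the actual PPI++. You should center each labeled and each unlabeled term at its own sample mean, which is the empirical analogue of using \(\tilde f\). Write \(\widehat{\mathrm{Var}}_L(m)\) and \(\widehat{\mathrm{Var}}_U(m)\) for the labeled and unlabeled sample variances of the score, and \(\widehat{\mathrm{Cov}}_L(Y,m)\) for the labeled sample covariance. Then
\[
A_n=\frac{1-\rho_n}{\rho_n}\{(1-\rho_n)\widehat{\mathrm{Var}}_L(m)+\rho_n\widehat{\mathrm{Var}}_U(m)\},
\qquad
B_n=\frac{1-\rho_n}{\rho_n}\widehat{\mathrm{Cov}}_L(Y,m).
\]
This is precisely the explicit minimizer the paper's proof writes down, and with it your matching to Proposition~\ref{prop:aipw-class} is valid. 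It also makes the second delta-method step unnecessary:
\[
\hat\lambda_{++}-\hat a_{\mathrm{lin}}
=
\hat a_{\mathrm{lin}}\,
\frac{\rho_n\{\widehat{\mathrm{Var}}_L(m)-\widehat{\mathrm{Var}}_U(m)\}}
{(1-\rho_n)\widehat{\mathrm{Var}}_L(m)+\rho_n\widehat{\mathrm{Var}}_U(m)}.
\]

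One caveat, which the paper shares: a CLT for empirical \emph{second} moments needs fourth moments, so ``CLT under finite second moments'' does not deliver the \(O_p(n^{-1/2})\) rates. By the display above, \(\E_{P_0}[m(X)^4]<\infty\) suffices for \(\hat\lambda_{++}-\hat a_{\mathrm{lin}}=O_p(n^{-1/2})\). Under second moments alone the display is only guaranteed to be \(o_p(1)\). That still yields the final \(o_p(M^{-1/2})\) claim, because \(\Delta_n=O_p(M^{-1/2})\).
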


\begin{algorithm}[t]
\caption{Linearly calibrated PPI}
\label{alg:linearppi-basic}
\begin{algorithmic}[1]
\Require \(\mathcal D_L=\{(X_i,Y_i)\}_{i=1}^n\), \(\mathcal D_U=\{\widetilde X_j\}_{j=1}^N\), score \(\mhat\), level \(1-\alpha\)
\State Compute \(T_i=\mhat(X_i)\) and \(\widetilde T_j=\mhat(\widetilde X_j)\)
\State Fit \((\hat a,\hat b)\in\arg\min_{a,b}\sum_{i=1}^n\{Y_i-(aT_i+b)\}^2\)
\State Set \(m_{n,\mathrm{lin}}^\star(X_i)=\hat aT_i+\hat b\) and \(m_{n,\mathrm{lin}}^\star(\widetilde X_j)=\hat a\widetilde T_j+\hat b\)
\State Form \(\widehat\psi_{\mathrm{lin}}\gets (n+N)^{-1}\{\sum_{i=1}^n m_{n,\mathrm{lin}}^\star(X_i)+\sum_{j=1}^N m_{n,\mathrm{lin}}^\star(\widetilde X_j)\}\)
\State Set \(\widehat D_i^L=m_{n,\mathrm{lin}}^\star(X_i)-\widehat\psi_{\mathrm{lin}}+\rhon^{-1}\{Y_i-m_{n,\mathrm{lin}}^\star(X_i)\}\) and \(\widehat D_j^U=m_{n,\mathrm{lin}}^\star(\widetilde X_j)-\widehat\psi_{\mathrm{lin}}\)
\State Set \(\widehat{\mathrm{SE}}_{\mathrm{lin}}^2=(n+N)^{-2}\{\sum_{i=1}^n(\widehat D_i^L)^2+\sum_{j=1}^N(\widehat D_j^U)^2\}\)
\State Return \(\widehat\psi_{\mathrm{lin}} \pm z_{1-\alpha/2}\widehat{\mathrm{SE}}_{\mathrm{lin}}\)
\end{algorithmic}
\end{algorithm}

\begin{algorithm}[t]
\caption{Linearly calibrated PPI with covariate adjustment}
\label{alg:linearppi}
\begin{algorithmic}[1]
\Require \(\mathcal D_L=\{(X_i,Y_i)\}_{i=1}^n\), \(\mathcal D_U=\{\widetilde X_j\}_{j=1}^N\), score \(\mhat\), level \(1-\alpha\)
\State Compute \(T_i=\mhat(X_i)\) and \(\widetilde T_j=\mhat(\widetilde X_j)\)
\State Fit \((\hat a,\hat\beta,\hat b)\in\arg\min_{a,\beta,b}\sum_{i=1}^n\{Y_i-(a+X_i^\top\beta+bT_i)\}^2\)
\State Set \(m_{n,\mathrm{lin}}^\star(X_i)=\hat a+X_i^\top\hat\beta+\hat bT_i\) and \(m_{n,\mathrm{lin}}^\star(\widetilde X_j)=\hat a+\widetilde X_j^\top\hat\beta+\hat b\widetilde T_j\)
\State Form \(\widehat\psi_{\mathrm{lin}}\gets (n+N)^{-1}\{\sum_{i=1}^n m_{n,\mathrm{lin}}^\star(X_i)+\sum_{j=1}^N m_{n,\mathrm{lin}}^\star(\widetilde X_j)\}\)
\State Set \(\widehat D_i^L=m_{n,\mathrm{lin}}^\star(X_i)-\widehat\psi_{\mathrm{lin}}+\rhon^{-1}\{Y_i-m_{n,\mathrm{lin}}^\star(X_i)\}\) and \(\widehat D_j^U=m_{n,\mathrm{lin}}^\star(\widetilde X_j)-\widehat\psi_{\mathrm{lin}}\)
\State Set \(\widehat{\mathrm{SE}}_{\mathrm{lin}}^2=(n+N)^{-2}\{\sum_{i=1}^n(\widehat D_i^L)^2+\sum_{j=1}^N(\widehat D_j^U)^2\}\)
\State Return \(\widehat\psi_{\mathrm{lin}} \pm z_{1-\alpha/2}\widehat{\mathrm{SE}}_{\mathrm{lin}}\)
\end{algorithmic}
\end{algorithm}
 
\begin{remark}[Relation to prognostic-score adjustment]
Linear calibration can be viewed as prognostic-score regression adjustment, a classical approach to improving precision through covariate adjustment in small-sample settings \citep{hansen2008prognostic,rosenblum2009using,lin2013agnostic,schuler2022increasing,balzer2024adaptive,hojbjerre2025powering}. Proposition~\ref{prop:ppi++-linear} shows that PPI++ admits the same first-order interpretation. Linear calibration may also be viewed as a special case of TMLE \citep{hojbjerre2026within}.
\end{remark}

\begin{remark}[Platt-scaling adjustment]
The linear-calibration idea extends beyond squared-error loss. One can instead use logistic loss, Poisson loss, or, more generally, losses corresponding to canonical-link generalized linear models \citep{rosenblum2009using}. In particular, when \(Y\in\{0,1\}\) and \(\mhat(X)\in(0,1)\) is an initial predictor, one may apply Platt scaling \citep{cox1958two,platt1999probabilistic} by fitting a logistic regression of \(Y\) on the logit of \(\mhat(X)\):
\begin{equation}
\label{eq:platt-cal}
\bigl(\hat a_{\mathrm{platt}},\hat b_{\mathrm{platt}}\bigr)
\in
\arg\min_{a,b\in\R}
\sum_{i=1}^n
\Bigl[
-Y_i \log \sigma\bigl(a\,\mathrm{logit}(\mhat(X_i))+b\bigr)
-(1-Y_i)\log\Bigl(1-\sigma\bigl(a\,\mathrm{logit}(\mhat(X_i))+b\bigr)\Bigr)
\Bigr],
\end{equation}
with calibrated predictor
\[
m_{n,\mathrm{platt}}^\star(X)
:=
\sigma\bigl(\hat a_{\mathrm{platt}}\,\mathrm{logit}(\mhat(X))+\hat b_{\mathrm{platt}}\bigr),
\]
where \(\sigma(t):=(1+e^{-t})^{-1}\). Platt scaling may be preferred to linear regression calibration when predicted probabilities are systematically over- or under-confident, so that the calibration curve is approximately sigmoidal. More generally, one may first rescale outcomes to \([0,1]\) before applying this procedure. In online settings, calibration properties of Platt scaling are developed in \citet{gupta2023online}.
\end{remark}

\subsection{Cross-Fitting and Cross-Calibration}
\label{sec:crossfit}

In the main exposition, we assume for simplicity that the initial black-box predictor is fixed. In practice, however, it may be trained on the labeled sample itself, so that the same labeled observations are used both to fit the predictor and to calibrate its scores. A cross-fitted implementation avoids this reuse by replacing the single prediction function with out-of-fold predictions and then fitting the calibrator to the resulting labeled scores.

\begin{enumerate}[leftmargin=6mm,label=\arabic*.]
\item Partition the labeled sample indices into folds \(I_1,\dots,I_K\).
\item For each fold \(k\), fit the black-box regression \(\mhat^{(-k)}\) using the labeled observations outside \(I_k\). Then compute out-of-fold predictions \(\mhat^{(-k)}(X_i)\) for \(i\in I_k\), and predictions \(\mhat^{(-k)}(\widetilde X_j)\) for the unlabeled observations.
\item Pool the labeled out-of-fold predictions \(\{\mhat^{(-k(i))}(X_i)\}_{i=1}^n\), and fit a calibrator \(\fhat\) to the labeled pairs \((\mhat^{(-k(i))}(X_i),Y_i)\).
\item Form calibrated predictions
\[
\mtilde(X_i)=\fhat(\mhat^{(-k(i))}(X_i))
\]
for the labeled sample, and
\[
\mtilde(\widetilde X_j)=\frac{1}{K}\sum_{k=1}^K \fhat(\mhat^{(-k)}(\widetilde X_j))
\]
for the unlabeled sample. The final estimator is then obtained by plugging these calibrated predictions into \cref{eq:plugin}.
\end{enumerate}

This cross-fitted version is the natural DML-style implementation when one wants to reduce overfitting from using the same labeled data both to train the initial predictor and to calibrate it for inference \citep{zheng2010asymptotic,vanderLaanRose2011,DoubleML}. Related procedures have been used for debiased inference in \citet{van2024stabilized,van2024automatic,rabenseifner2025causal}, and a related cross-calibration variant appears in \citet{van2023causal}.

\section{Theory for isotonic-calibrated PPI}
\label{sec:theory}

We now show that isotonic-calibrated PPI can \emph{calibeat} the original score---that is, improve it both as a predictor of the outcome and as a regression adjustment for semisupervised mean estimation. It also typically calibeats simpler post-processing rules, such as linear calibration.

We analyze the isotonic-calibrated estimator \(\widehat{\psi}_{\mathrm{iso}}\), establishing asymptotic linearity, asymptotic normality, and valid Wald inference. We then compare its efficiency with that of AIPW, PPI, PPI++, and linearly calibrated estimators, and identify conditions under which it attains the smallest asymptotic variance among regular estimators based on \(m(X)\) or on the full covariate vector \(X\) \citep{bickel1993efficient,van2000asymptotic}. Finally, we show that isotonic calibration is first-order optimal in a natural sense: once a score has been isotonically calibrated, no further post-processing based only on the fitted score can further improve either mean squared prediction error or the leading asymptotic behavior of the resulting plug-in estimator.

 \subsection{Asymptotic normality, inference, and efficiency}
Let \(f_0\) denote a population isotonic calibration map for the fixed score \(\mhat\), defined by
\begin{equation}
\label{eq:pop-iso}
f_0 \in \arg\min_{f\in\mathcal F_{\mathrm{iso}}}
\E_{P_0}\bigl[\{Y-f(\mhat(X))\}^2\bigr],
\qquad
m_0 = f_0 \circ \mhat.
\end{equation}
Thus, \(m_0\) is the population \(L^2\)-optimal monotone transformation of the initial score \(\mhat(X)\). Since the identity map belongs to \(\mathcal F_{\mathrm{iso}}\), \(m_0\) has mean squared error no larger than that of the original score \(\mhat(X)\). Under the following conditions, the isotonic-calibrated estimator \(m_{n,\mathrm{iso}}^\star\) attains the classical isotonic rate in mean squared error; see, for example, \citet{chatterjee2015risk,yang2018contraction}.

\begin{assumption}[Regularity conditions for isotonic calibration]
\label{assump:isoreg}
Assume:
\begin{enumerate}[leftmargin=6mm,label=(\roman*)]
\item \textbf{Boundedness:} There exists \(C_0<\infty\) such that \(|\E_{P_0}[Y \mid \mhat(X)]|\le C_0\) almost surely and \(\sup_x |m_{n,\mathrm{iso}}^\star(x)| = O_p(1)\). \label{cond::bound}
\item \textbf{Controlled tails:} The conditional error \(Y-\E_{P_0}[Y\mid \mhat(X)]\) is sub-Gaussian or sub-exponential under \(P_0\). \label{cond::subgaus}
\end{enumerate}
\end{assumption}

Condition~\ref{cond::bound} requires only that the regression target be bounded and that the isotonic estimator remain bounded in probability. Condition~\ref{cond::subgaus} allows \(Y\) to be unbounded, covering both Gaussian-like tails and somewhat heavier exponential-type tails. In particular, it holds for binary, Gaussian, Poisson, and bounded outcomes. Existing results on isotonic regression under heavy-tailed errors also suggest some robustness to violations of these conditions \citep{han2018robustness,han2019convergence}.

\begin{theorem}[Asymptotic linearity under the two-sample model]
\label{thm:al}
Suppose \cref{assump:design,assump:isoreg} hold. Then 
\[
\|m_{n,\mathrm{iso}}^\star-m_0\|_{2,\Px}^2 = O_p(n^{-2/3});
\]
Furthermore,
\begin{align}
\widehat{\psi}_{\mathrm{iso}}-\psi_0
&=
\rhon(\Pl-P_0)\Bigl\{m_0-\psi_0+\rhon^{-1}(Y-m_0)\Bigr\}
\nonumber\\
&\quad +
(1-\rhon)(\Pu-\Px)\{m_0-\psi_0\}
+
O_p(n^{-2/3})
+
O_p\bigl(n^{-1/3}N^{-1/2}\bigr).
\label{eq:al}
\end{align}
Thus, under the \(\sqrt{M}\) scaling, \(\widehat{\psi}_{\mathrm{iso}}\) is asymptotically linear with influence pair \((D_{m_0}^L,D_{m_0}^U)\) given by
\[
D_{m_0}^L(X,Y)=m_0(X)-\psi_0+\rhozero^{-1}\{Y-m_0(X)\},
\qquad
D_{m_0}^U(\widetilde X)=m_0(\widetilde X)-\psi_0.
\]
\end{theorem}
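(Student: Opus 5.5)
We split the argument into a rate step and an expansion step. For the rate, we view the calibrator as least-squares isotonic regression of \(Y\) on the one-dimensional covariate \(T=\mhat(X)\) under the labeled sample. Its target is \(f_0\), and \(f_0(t)=\E_{P_0}[Y\mid T=t]\) whenever that regression is monotone; in general \(f_0\) is the \(L^2(P_{0,T})\) projection onto \(\mathcal F_{\mathrm{iso}}\). We would use the standard entropy bound for uniformly bounded monotone functions on the real line, \(\log N_{[\,]}(\epsilon,\mathcal F_{\mathrm{iso}}\cap\{\|f\|_\infty\le C\},L^2(Q))\lesssim \epsilon^{-1}\), uniformly in \(Q\). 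Together with the boundedness in Assumption~\ref{assump:isoreg}\ref{cond::bound} and the sub-Gaussian/sub-exponential tails in \ref{cond::subgaus}, a peeling / basic-inequality argument for least squares over a convex class then yields \(\|\hat f-f_0\|_{2,P_{0,T}}^2=O_p(n^{-2/3})\). This is as in \citet{chatterjee2015risk,yang2018contraction}, or the usual M-estimation rate theorem with modulus \(\phi(\delta)=\delta^{1/2}\). Because \(m_{n,\mathrm{iso}}^\star-m_0=(\hat f-f_0)\circ\mhat\), this is exactly the claimed \(\|m_{n,\mathrm{iso}}^\star-m_0\|_{2,\Px}^2=O_p(n^{-2/3})\). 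One subtlety is misspecification, where \(f_0\) is only a projection. Here we would use the convexity inequality \(P_0[(Y-f)^2]-P_0[(Y-f_0)^2]\ge\|f-f_0\|^2\) over \(\mathcal F_{\mathrm{iso}}\), which gives the required curvature.

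For the expansion, we apply Theorem~\ref{thm:repr} with \(\mathcal F\) the class of all functions; this is legitimate by Proposition~\ref{prop:iso-repr}. However, we would \emph{not} use the \(m^\dagger\) of that theorem. Instead we redo its algebra with \(m_0\). Since \(\Pl\{Y-m^\star_{n,\mathrm{iso}}\}=0\), equation~\eqref{eqn::aipwcal} gives
\[
\widehat\psi_{\mathrm{iso}}=\widehat\psi(m_0)+(1-\rhon)(\Pu-\Pl)\{m^\star_{n,\mathrm{iso}}-m_0\},
\]
exactly. The first term is linear: \(\widehat\psi(m_0)-\psi_0\) equals the two empirical-process terms in \eqref{eq:al}. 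Here we use \(P_0[Y-m_0]=0\), which holds because constants can be added to monotone functions. Rewriting \((\Pu-\Pl)=(\Pu-\Px)-(\Pl-P_0)\), the remainder becomes \((1-\rhon)(\Pu-\Px)\{m^\star_{n,\mathrm{iso}}-m_0\}-(1-\rhon)(\Pl-P_0)\{m^\star_{n,\mathrm{iso}}-m_0\}\).

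The unlabeled piece is easy. Conditional on \(\mathcal D_L\), the function \(m^\star_{n,\mathrm{iso}}-m_0\) is fixed and \(\mathcal D_U\) is independent of it. Chebyshev's inequality therefore bounds this piece by \(N^{-1/2}\|m^\star_{n,\mathrm{iso}}-m_0\|_{2,\Px}=O_p(N^{-1/2}n^{-1/3})\).

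The labeled piece \((\Pl-P_0)\{(\hat f-f_0)\circ\mhat\}\) is the main obstacle, because \(\hat f\) depends on the same data. We would handle it with a local maximal inequality over \(\{f-f_0: f\in\mathcal F_{\mathrm{iso}},\ \|f\|_\infty\le C,\ \|f-f_0\|\le\delta\}\). The uniform entropy integral is \(J(\delta)\lesssim\delta^{1/2}\), so
\[
\E\sup\lvert(\Pl-P_0)g\rvert\lesssim n^{-1/2}J(\delta)\bigl\{1+J(\delta)/(\delta^2\sqrt n)\bigr\}.
\]
Taking \(\delta\asymp n^{-1/3}\) from the rate step gives \(n^{-1/2}n^{-1/6}=n^{-2/3}\); the second factor is \(O(1)\) at this \(\delta\). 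Boundedness in probability of \(\sup|m^\star_{n,\mathrm{iso}}|\) lets us restrict to \(\|f\|_\infty\le C\) on an event of probability near one.

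Finally, multiplying by \(\sqrt M\): since \(n\asymp M\), we get \(\sqrt M n^{-2/3}\to0\) and \(\sqrt M n^{-1/3}N^{-1/2}\to0\). Also \(\rhon\to\rhozero\), so we may replace \(\rhon\) by \(\rhozero\) in the influence functions, at a cost of \(o_p(M^{-1/2})\) times an \(O_p(1)\) empirical-process term. This yields the stated influence pair.
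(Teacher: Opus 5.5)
Your proposal is correct and follows essentially the same route as the paper. The shared steps are: the \(n^{-1/3}\) rate from the M-estimation rate theorem with modulus \(\delta^{1/2}\) and curvature from the projection inequality (Lemma~\ref{lem:isorate}); centering \(\E_{\Px}[m_0]=\psi_0\) by adding constants (Lemma~\ref{lem:center}); the exact identity \(\widehat\psi_{\mathrm{iso}}=\widehat\psi(m_0)+(1-\rhon)(\Pu-\Pl)\{m_{n,\mathrm{iso}}^\star-m_0\}\); and a localized maximal inequality at radius \(n^{-1/3}\) plus conditional Chebyshev for the two remainder pieces (Lemma~\ref{lem:eprate}). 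The one detail you leave implicit is that \(f_0\) is itself bounded by \(C_0\), which the paper makes explicit: clipping a monotone function to \([-C_0,C_0]\) keeps it monotone and cannot increase its distance to the \(C_0\)-bounded regression \(\E_{P_0}[Y\mid \mhat(X)]\), and this is what makes the localized class \(\{f-f_0\}\) uniformly bounded.
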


The theorem above shows that \(\sqrt{M}(\widehat{\psi}_{\mathrm{iso}} - \psi_0)\) converges in distribution to a mean-zero normal random variable with variance
\[
\sigma_0^2
:=
\rhozero\,\Var[D_{m_0}^L(X,Y)]
+
(1-\rhozero)\,\Var[D_{m_0}^U(\widetilde X)].
\]
The following corollary records the resulting asymptotic normality and validity of standard Wald confidence intervals.

\begin{corollary}[Asymptotic normality and Wald inference]
\label{cor:normal}
Under the conditions of \cref{thm:al}, assume that \(\sigma_0^2>0\). Define the empirical variance
\[
\widehat \sigma^2 := \rhon \Pl\{(\widehat D^L)^2\} + (1-\rhon)\Pu\{(\widehat D^U)^2\},
\]
where
\[
\widehat D_i^L := m_{n,\mathrm{iso}}^\star(X_i)-\widehat{\psi}_{\mathrm{iso}}+\rhon^{-1}\{Y_i-m_{n,\mathrm{iso}}^\star(X_i)\},
\qquad
\widehat D_j^U := m_{n,\mathrm{iso}}^\star(\widetilde X_j)-\widehat{\psi}_{\mathrm{iso}}.
\]
Then
\[
\frac{\sqrt{M}\,(\widehat{\psi}_{\mathrm{iso}}-\psi_0)}{\widehat\sigma}
\overset{d}{\longrightarrow}
N(0,1).
\]
\end{corollary}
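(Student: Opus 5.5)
The plan is to combine the asymptotic linearity of \cref{thm:al} with a consistency argument for the variance estimator, and then apply Slutsky's lemma. By \cref{thm:al}, $\sqrt{M}(\widehat\psi_{\mathrm{iso}}-\psi_0)$ equals
\[
\tfrac{1}{\sqrt M}\sum_{i=1}^n D_{m_0}^L(X_i,Y_i)+\tfrac{1}{\sqrt M}\sum_{j=1}^N D_{m_0}^U(\widetilde X_j)+o_p(1).
\]
The two coefficient differences are asymptotically negligible. The coefficient $\rhon^{-1}$ versus $\rhozero^{-1}$ on the centered empirical process contributes $o(1)\cdot O_p(1)$, since $\rhon\to\rhozero$. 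The terms $\sqrt M\, O_p(n^{-2/3})$ and $\sqrt M\,O_p(n^{-1/3}N^{-1/2})$ are $o_p(1)$ because $n\asymp N\asymp M$ under \cref{assump:design}(iv).

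The two sums are independent, mean zero, and have finite variance. Finiteness uses the finite second moment of $Y$ together with the boundedness of $m_0$, which follows from \cref{assump:isoreg}\ref{cond::bound}, since $m_0$ is the isotonic projection of a bounded regression function and so inherits its bounds. Applying the Lindeberg--L\'evy CLT to each sum, scaled by $\sqrt{n}$ and $\sqrt N$ respectively, and multiplying by $\sqrt{\rhon}\to\sqrt{\rhozero}$ and $\sqrt{1-\rhon}\to\sqrt{1-\rhozero}$, gives by independence
\[
\sqrt M(\widehat\psi_{\mathrm{iso}}-\psi_0)\rightsquigarrow N(0,\sigma_0^2).
\]

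The main step is showing $\widehat\sigma^2\to_p\sigma_0^2$. Define the oracle quantities $D_i^{L,0}$ and $D_j^{U,0}$ using $m_0$, $\psi_0$, and $\rhozero$. The law of large numbers gives $\rhon\Pl\{(D^{L,0})^2\}+(1-\rhon)\Pu\{(D^{U,0})^2\}\to_p\sigma_0^2$, since $\E[D^{L,0}]=0$, so the second moment equals the variance, and likewise for $D^{U,0}$. It therefore suffices to show $\Pl\{(\widehat D^L-D^{L,0})^2\}=o_p(1)$ and $\Pu\{(\widehat D^U-D^{U,0})^2\}=o_p(1)$. With these in hand, Cauchy--Schwarz together with the $O_p(1)$ bounds on $\Pl\{(D^{L,0})^2\}$ and $\Pu\{(D^{U,0})^2\}$ controls the cross terms.

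The difference $\widehat D^L-D^{L,0}$ is a combination of $(1-\rhon^{-1})(m^\star_{n,\mathrm{iso}}-m_0)(X)$, of $\widehat\psi_{\mathrm{iso}}-\psi_0=o_p(1)$, and of $(\rhon^{-1}-\rhozero^{-1})(Y-m_0)$. The last two pieces are trivially $o_p(1)$ in empirical $L^2$. The only real difficulty is the empirical norms $\Pl\{(m^\star_{n,\mathrm{iso}}-m_0)^2\}$ and $\Pu\{(m^\star_{n,\mathrm{iso}}-m_0)^2\}$, because \cref{thm:al} controls only the population norm $\|m^\star_{n,\mathrm{iso}}-m_0\|_{2,\Px}^2=O_p(n^{-2/3})$.

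\textbf{Unlabeled sample.} Here the argument is direct. Conditionally on $\mathcal D_L$, the function $m^\star_{n,\mathrm{iso}}-m_0$ is fixed, and $\mathcal D_U$ is independent of $\mathcal D_L$. Hence $\E[\Pu\{(m^\star_{n,\mathrm{iso}}-m_0)^2\}\mid\mathcal D_L]=\|m^\star_{n,\mathrm{iso}}-m_0\|_{2,\Px}^2=o_p(1)$, and conditional Markov's inequality finishes this piece.

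\textbf{Labeled sample.} Here the fitted function depends on the data. The function $m^\star_{n,\mathrm{iso}}-m_0$ is a difference of two monotone functions of $\mhat(X)$, each uniformly bounded with probability approaching one by \cref{assump:isoreg}\ref{cond::bound}. Squares of such differences lie in a uniformly bounded class of bounded-variation functions of the scalar $\mhat(X)$. That class is $P_0$-Glivenko--Cantelli, since bounded monotone functions have bracketing entropy of order $1/\epsilon$. Therefore
\[
\sup_f|(\Pl-P_0)f|=o_p(1),
\]
so $\Pl\{(m^\star_{n,\mathrm{iso}}-m_0)^2\}\le \|m^\star_{n,\mathrm{iso}}-m_0\|_{2,\Px}^2+o_p(1)=o_p(1)$.

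Combining the pieces gives $\widehat\sigma^2\to_p\sigma_0^2>0$. Slutsky's lemma then yields the standard normal limit for $\sqrt M(\widehat\psi_{\mathrm{iso}}-\psi_0)/\widehat\sigma$.
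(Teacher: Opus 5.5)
Your proposal is correct and matches the paper's proof step for step. Both use the CLT from \cref{thm:al}, the same decomposition of $\widehat D^L-D^L_{m_0}$ into $(1-\rhon^{-1})g_n$, $-(\widehat\psi_{\mathrm{iso}}-\psi_0)$ and $(\rhon^{-1}-\rhozero^{-1})(Y-m_0)$ with $g_n:=m_{n,\mathrm{iso}}^\star-m_0$, the reduction to $\Pl(g_n^2),\Pu(g_n^2)=o_p(1)$, and Slutsky. The one technical difference is how $\Pl(g_n^2)$ and $\Pu(g_n^2)$ are controlled: you use a Glivenko--Cantelli argument for uniformly bounded functions of bounded variation in $\mhat(X)$, plus conditional Markov on the unlabeled sample, which is more elementary and gives only $o_p(1)$ (enough here), whereas the paper reuses the localized maximal inequality from \cref{lem:isorate} and a conditional variance bound to obtain the sharper rate $O_p(n^{-2/3})$.
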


The next corollary records two important efficiency regimes: full semiparametric efficiency when the calibrated score recovers the full regression function, and reduced-experiment efficiency when the oracle regression depends on \(X\) only through the one-dimensional score \(\mhat(X)\).

\begin{corollary}[Efficiency regimes]
\label{cor:efficiency}
Under the conditions of \cref{thm:al}, the following hold.
\begin{enumerate}[leftmargin=6mm,label=(\roman*)]
\item If \(m_0=\muzero\) almost surely, then \(\widehat{\psi}_{\mathrm{iso}}\) is efficient for \(\psi_0\) in the full two-sample experiment.
\item More generally, \(\widehat{\psi}_{\mathrm{iso}}\) is asymptotically at least as efficient, to first order, as any AIPW estimator of the form \(\widehat{\psi}(\theta\circ\mhat)\) for a monotone nondecreasing transformation \(\theta\).
\item If \(\E_{P_0}[Y\mid \mhat(X)]\) is monotone nondecreasing in \(\mhat(X)\), then \(m_0(X)=\E_{P_0}[Y\mid \mhat(X)]\) almost surely. Hence no AIPW estimator based on a post-processing \(\theta\circ\mhat\), with \(\theta:\mathbb{R}\rightarrow\mathbb{R}\), has smaller asymptotic variance than \(\widehat{\psi}_{\mathrm{iso}}\). Equivalently, \(\widehat{\psi}_{\mathrm{iso}}\) is efficient for \(\psi_0\) in the reduced two-sample experiment obtained by replacing \(X\) with the scalar score \(\mhat(X)\).
\end{enumerate}
\end{corollary}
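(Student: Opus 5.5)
The plan is to read all three parts off Theorem~\ref{thm:al}, which shows that \(\widehat\psi_{\mathrm{iso}}\) is asymptotically linear with influence pair \((D_{m_0}^L,D_{m_0}^U)\). Its asymptotic variance is therefore exactly that of the oracle AIPW estimator \(\widehat\psi(m_0)\). Proposition~\ref{prop:aipw-class} then gives this variance explicitly as
\[
\Var[\mu_0(X)]+\rhozero^{-1}\E_{P_0}[\Var[Y\mid X]]+\frac{1-\rhozero}{\rhozero}\Var_{P_0}[m_0(X)-\mu_0(X)].
\]
One small point needs checking before applying the proposition: its recentering \(\tilde f\) must agree with \(m_0\). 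Since \(\mathcal F_{\mathrm{iso}}\) contains constant shifts, the first-order condition of \eqref{eq:pop-iso} gives \(\E_{P_0}[m_0(X)]=\psi_0\). Hence \(\tilde m_0=m_0\), and the influence pair from Theorem~\ref{thm:al} coincides with the one in Proposition~\ref{prop:aipw-class}. Every comparison below then reduces to comparing the last term, \(\Var_{P_0}[f(X)-\mu_0(X)]\), across choices of \(f\).

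\textbf{Part (i).} If \(m_0=\mu_0\) almost surely, the last term vanishes. The influence pair is then the efficient pair \((D_{\mu_0}^L,D_{\mu_0}^U)\) identified in Proposition~\ref{prop:aipw-class}, so \(\widehat\psi_{\mathrm{iso}}\) attains the efficiency bound.

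\textbf{Part (ii).} Fix a nondecreasing \(\theta\) with \(\theta\circ\mhat\in L^2\). By Proposition~\ref{prop:aipw-class}, it suffices to show
\[
\Var[m_0(X)-\mu_0(X)]\le\Var[\theta(\mhat(X))-\mu_0(X)].
\]
The map \(\theta+c\) with \(c=\psi_0-\E[\theta(\mhat(X))]\) is still monotone, and variance is shift invariant, so I may assume \(\E[\theta(\mhat(X))]=\psi_0\). Each variance then equals a second moment, because \(\E[\mu_0(X)]=\psi_0\) as well. For any \(g\) of \(X\), \(\E[(Y-g(X))^2]=\E[(Y-\mu_0(X))^2]+\E[(\mu_0(X)-g(X))^2]\). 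Minimizing \(\E[(Y-g)^2]\) over \(g=\theta\circ\mhat\) is therefore the same as minimizing \(\|g-\mu_0\|_{2}\). Since \(m_0\) is the population minimizer over \(\mathcal F_{\mathrm{iso}}\), the inequality follows.

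\textbf{Part (iii).} Let \(\mu_{\mhat}(t):=\E[Y\mid \mhat(X)=t]\). This is the \(L^2\) minimizer of \(\E[(Y-\theta(\mhat(X)))^2]\) over all measurable \(\theta\). If \(\mu_{\mhat}\) is nondecreasing, it lies in \(\mathcal F_{\mathrm{iso}}\), so it is also the minimizer over that class. Uniqueness of the minimizer, in the almost-sure sense, then gives \(m_0=\mu_{\mhat}(\mhat(X))\); uniqueness holds because the projection of \(\mu_0(X)\) onto the set \(\{\theta\circ\mhat\}\) is unique. Repeating the part (ii) argument over all measurable \(\theta\) shows that no AIPW estimator based on \(\theta\circ\mhat\) has smaller asymptotic variance.

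For reduced-experiment efficiency, I apply Proposition~\ref{prop:aipw-class} to the two-sample experiment in which \(X\) is replaced by \(T=\mhat(X)\). The assumptions still hold, since \(T\) is a fixed measurable function of \(X\). In that experiment the conditional mean is \(\mu_{\mhat}(T)\), and the efficient pair is \((D_{\mu_{\mhat}}^L,D_{\mu_{\mhat}}^U)\), which equals \((D_{m_0}^L,D_{m_0}^U)\). The footnote result that the AIPW family exhausts the regular estimators then yields efficiency in the reduced model.

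\textbf{Main obstacle.} I expect the delicate step to be part (ii), specifically the identification between the MSE criterion defining \(m_0\) and the variance criterion. This requires the centering argument above, together with the fact that monotone classes are closed under adding constants. The rest is bookkeeping on top of Theorem~\ref{thm:al}.
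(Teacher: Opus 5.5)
Your proposal is correct and follows the paper's proof essentially step for step. You read off the influence pair \((D_{m_0}^L,D_{m_0}^U)\) from Theorem~\ref{thm:al}, use the Pythagorean identity to turn the isotonic least-squares criterion into an \(L^2\) comparison with \(\mu_0\) in part (ii), and identify \(m_0\) with \(\E_{P_0}[Y\mid m(X)]\) by the same decomposition in part (iii). Your explicit recentering of \(\theta\circ m\), using closure of \(\mathcal F_{\mathrm{iso}}\) under constant shifts and \(\E_{P_0}[m_0(X)]=\psi_0\), is slightly more careful than the paper: the paper compares \(\E_{P_0}[\{\mu_0(X)-f(X)\}^2]\) directly, whereas Proposition~\ref{prop:aipw-class} controls the excess variance through \(\Var_{P_0}[f(X)-\mu_0(X)]\).
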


The main implication is that isotonic calibration weakly improves on standard monotone recalibrations, including AIPW, PPI, PPI++, and linear calibration whenever the corresponding scaling or slope coefficient is nonnegative. This improvement can be strict when the best calibration is genuinely nonlinear. The nonnegative-slope restriction is mild: if the score has the wrong sign, one can first apply a linear sign correction and then perform isotonic calibration. The corollary also clarifies the limit of what post-processing can achieve: full efficiency is possible only if the calibrated score recovers all regression information relevant to \(\psi_0\). Otherwise, some information is inevitably lost when the covariates are compressed to the scalar score \(\mhat(X)\).

\subsection{Calibeating and no further first-order gain from post-processing}

We now ask whether the isotonically calibrated score can be improved by an additional post-processing step. Since \(\widehat{\psi}_{\mathrm{iso}}\) depends on the covariates only through the one-dimensional fitted score \(m_{n,\mathrm{iso}}^\star(X)\), it is natural to study the reduced two-sample experiment
\[
\mathcal E_{m_{n,\mathrm{iso}}^\star}
:=
\Bigl(
\{(m_{n,\mathrm{iso}}^\star(X_i),Y_i)\}_{i=1}^n,\,
\{m_{n,\mathrm{iso}}^\star(\widetilde X_j)\}_{j=1}^N
\Bigr),
\]
in which the full covariate vector \(X\) is replaced by the scalar score \(m_{n,\mathrm{iso}}^\star(X)\).

Within this reduced experiment, a natural benchmark is the oracle regression function \(\E_{P_0}[Y \mid m_{n,\mathrm{iso}}^\star(X)]\), where the conditioning is understood to be conditional on the training data used to construct \(m_{n,\mathrm{iso}}^\star\). The result below shows that no additional post-processing of the fitted score can improve the first-order performance of \(\widehat{\psi}_{\mathrm{iso}}\) relative to this benchmark.

\begin{assumption}[Bounded-variation score regression]
\label{assump:eff-bv}
The map \(t\mapsto \E_{P_0}[Y\mid \mhat(X)=t]\)
has bounded variation on the support of \(\mhat(X)\).
\end{assumption}

Assumption~\ref{assump:eff-bv} is mild. It requires only that the population regression of \(Y\) on the initial score \(\mhat(X)\), viewed as a one-dimensional function of the score, have bounded total variation. In particular, it allows jumps and is weaker than differentiability or Lipschitz conditions \citep{van2023causal}.

\begin{theorem}[No further first-order improvement from post-processing]
\label{prop:eif-red}
Suppose \cref{assump:design,assump:isoreg,assump:eff-bv} hold. Let
\[
\widehat{\psi}_{\mathrm{eff}}:=\widehat{\psi}(\widehat{m}_0), \qquad
\widehat{m}_0(X):=\E_{P_0}[Y \mid m_{n,\mathrm{iso}}^\star(X)]
\]
denote the benchmark estimator in the reduced experiment based on the isotonic score \(m_{n,\mathrm{iso}}^\star(X)\). Then
\[
\|\widehat{m}_0-m_{n,\mathrm{iso}}^\star\|_{2,\Px}^2=O_p(n^{-2/3}),
\]
and
\[
\widehat{\psi}_{\mathrm{eff}}-\widehat{\psi}_{\mathrm{iso}}
=
o_p(M^{-1/2}).
\]
In particular, \(\widehat{\psi}_{\mathrm{iso}}\) and \(\widehat{\psi}_{\mathrm{eff}}\) have the same asymptotic variance.
\end{theorem}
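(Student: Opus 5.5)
Write $\widehat m_n:=m_{n,\mathrm{iso}}^\star$ and $\widehat m_0:=\E_{P_0}[Y\mid \widehat m_n(X)]$, with the conditioning taken given the labeled data. The argument has three steps. First, bound $\|\widehat m_0-\widehat m_n\|_{2,\Px}$. Second, reduce $\widehat\psi_{\mathrm{eff}}-\widehat\psi_{\mathrm{iso}}$ to an empirical-process remainder, using the full empirical calibration of Proposition~\ref{prop:iso-repr}. Third, control that remainder by combining the rate from the first step with the structure of $\widehat m_n$.

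\textbf{Step 1 (the $L^2$ rate).} Since $\widehat m_n=\hat f\circ\mhat$ with $\hat f$ monotone, $\widehat m_0$ is the $L^2(P_0)$-projection of $Y$ onto the $\sigma$-field generated by $\widehat m_n(X)$. That $\sigma$-field is contained in the one generated by $\mhat(X)$. Hence $\widehat m_0=\E_{P_0}[\mu_{\mhat}(X)\mid \widehat m_n(X)]$ with $\mu_{\mhat}(X):=\E_{P_0}[Y\mid\mhat(X)]$. Because $\widehat m_n$ is itself $\widehat m_n(X)$-measurable, the projection is a contraction, and
\[
\|\widehat m_0-\widehat m_n\|_{2,\Px}\le \|\mu_{\mhat}-\widehat m_n\|_{2,\Px}\le \|\mu_{\mhat}-m_0\|_{2,\Px}+\|m_0-\widehat m_n\|_{2,\Px}.
\]
This bound is too crude when $\mu_{\mhat}\ne m_0$. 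I will instead use that $m_0$ is itself a calibrated isotonic target, $\E[Y\mid m_0(X)]=m_0(X)$, and note that $\widehat m_n$ has level sets that are unions of intervals of $\mhat$. I therefore compare $\widehat m_0$ with $\E[m_0(X)\mid\widehat m_n(X)]$:
\[
\widehat m_0-\widehat m_n=\E[Y-m_0\mid \widehat m_n]+\E[m_0-\widehat m_n\mid\widehat m_n].
\]
The second term has norm at most $\|m_0-\widehat m_n\|_{2,\Px}=O_p(n^{-1/3})$ by Theorem~\ref{thm:al}. For the first term, $\widehat m_n$ is a step function whose blocks are intervals in $\mhat$. The residual $Y-m_0$ is orthogonal to functions of $\mhat$ that are constant on $m_0$-level sets, but not to arbitrary interval indicators. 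Its conditional mean over a block therefore reflects the mismatch between the $\widehat m_n$ blocks and the $m_0$ blocks, which is where Assumption~\ref{assump:eff-bv} enters. I would follow the calibration-error argument of \citet{van2023causal} for isotonic calibration. A function of bounded variation composed with a monotone step function has conditional-mean error controlled by the variation times the size of the boundary regions. Together with the $n^{-1/3}$ rate, this yields $\|\widehat m_0-\widehat m_n\|_{2,\Px}^2=O_p(n^{-2/3})$. This is the main obstacle. I would first try to prove it via $\|\E[\mu_{\mhat}-\widehat m_n\mid\widehat m_n]\|$, with a uniform entropy bound over the class of monotone step functions.

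\textbf{Step 2 (algebraic reduction).} Apply Theorem~\ref{thm:repr} with $\mathcal F$ the class of all functions of $\widehat m_n$, which is valid by \eqref{eq:orth}. The population minimizer is then exactly $m^\dagger_{0,\mathcal F}=\widehat m_0$. The theorem gives
\[
\widehat\psi_{\mathrm{iso}}-\widehat\psi_{\mathrm{eff}}=(1-\rhon)(\Pu-\Pl)\{\widehat m_n-\widehat m_0\}=(1-\rhon)\bigl[(\Pu-\Px)-(\Pl-\Px)\bigr]\{\widehat m_n-\widehat m_0\}.
\]

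\textbf{Step 3 (the remainder).} The unlabeled term is handled by conditioning on $\mathcal D_L$. The function $\widehat m_n-\widehat m_0$ is then fixed, so the term has conditional variance $N^{-1}\|\widehat m_n-\widehat m_0\|^2_{2,\Px}$, and by Step 1 it is $O_p(N^{-1/2}n^{-1/3})=o_p(M^{-1/2})$.

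The labeled term needs an empirical-process bound, because $\widehat m_n$ depends on the labeled data. Both $\widehat m_n$ and $\widehat m_0$ are functions of $\mhat(X)$. The function $\widehat m_n$ is monotone and bounded in probability by Assumption~\ref{assump:isoreg}. The function $\widehat m_0$ equals $\E[\mu_{\mhat}\mid\widehat m_n]$, and its total variation is bounded by that of $\mu_{\mhat}$ (Assumption~\ref{assump:eff-bv}) plus the sup norm. So the difference lies in a uniformly bounded-variation class of functions of $\mhat(X)$. This class is Donsker, with bracketing entropy of order $\epsilon^{-1}$. A maximal inequality under the local $L^2$ radius $\delta_n=n^{-1/3}$ then gives a modulus of order $n^{-1/2}\delta_n^{1/2}=n^{-2/3}$, so $(\Pl-\Px)\{\widehat m_n-\widehat m_0\}=O_p(n^{-2/3})=o_p(M^{-1/2})$ under Assumption~\ref{assump:design}(iv).

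Combining the three steps gives $\widehat\psi_{\mathrm{eff}}-\widehat\psi_{\mathrm{iso}}=o_p(M^{-1/2})$. Both estimators are therefore asymptotically linear with the same influence pair as in Theorem~\ref{thm:al}, and hence have the same asymptotic variance.
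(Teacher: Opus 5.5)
Your Steps 2 and 3 match the paper's argument. Both rely on the identity \((\widehat\psi_{\mathrm{iso}}-\widehat\psi_{\mathrm{eff}}=(1-\rhon)(\Pu-\Pl)\{\widehat m_n-\widehat m_0\}\)): you obtain it from Theorem~\ref{thm:repr}, and the paper derives it directly from $\Pl\{Y-\widehat m_n\}=0$. You then condition on $\mathcal D_L$ for the unlabeled term and apply a local maximal inequality over a bounded-variation class (Lemma~\ref{lem:bv-post}) for the labeled term, as the paper does. But both steps need $\|\widehat m_0-\widehat m_n\|_{2,\Px}=O_p(n^{-1/3})$ as input. Even the bare $o_p(M^{-1/2})$ conclusion needs $o_p(1)$; with mere boundedness the unlabeled term is only $O_p(N^{-1/2})$. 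Step 1 does not deliver this rate.

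\textbf{Why your Step 1 fails.} It tries to control $\E[Y-m_0\mid\widehat m_n]$ using only three facts: $\widehat m_n$ is monotone, it is $L^2$-close to $m_0$, and $\mu_{\mhat}$ has bounded variation. These facts are not enough. Suppose $\mhat(X)$ is supported on $[a,b]$, with $\mu_{\mhat}=v+1$ on $[a,c)$ and $v-1$ on $[c,b]$, the two pieces having equal mass, so that $m_0\equiv v$. The monotone function $g$ equal to $v-\epsilon$ on $[a,c)$ and $v+\epsilon$ on $[c,b]$ satisfies $\|g-m_0\|_{2,\Px}=\epsilon$. Yet $\E[\mu_{\mhat}-m_0\mid g]=\pm1$. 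The ``boundary region'' here is an entire flat block of $m_0$, and bounded variation of $\mu_{\mhat}$ does not help. What rules this out for the actual fit is its empirical calibration, which you invoke only in Step 2. Your fallback, bounding $\|\E[\mu_{\mhat}-\widehat m_n\mid\widehat m_n]\|$ via an entropy bound, just restates the target quantity. It gives no mechanism that turns entropy into a rate.

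\textbf{The missing idea: a self-bounding identity.} Write \(S_n=\widehat m_n(X)\), \(\eta_n(s)=\E_{P_0}[Y\mid S_n=s]\), and \(\Delta_n=\eta_n(S_n)-S_n\).
\begin{itemize}
\item Given the labeled data, $\eta_n$ is a fixed function. Applying Proposition~\ref{prop:iso-repr} pathwise with $h=\eta_n-\mathrm{id}$ gives $\Pl\{\Delta_n(Y-S_n)\}=0$.
\item The tower property gives $\E_{P_0}[\Delta_n(Y-S_n)]=\E_{P_0}[\Delta_n\,\E_{P_0}(Y-S_n\mid S_n)]=\|\Delta_n\|_{2,\Px}^2$.
\item Combining the two, $\|\Delta_n\|_{2,\Px}^2=(P_0-\Pl)\{\Delta_n(Y-S_n)\}$.
\end{itemize}
The right side is an empirical process over a product class: bounded-variation minus bounded monotone functions of $\mhat(X)$, multiplied by the sub-exponential residual $Y-S_n$. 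Its localized modulus is of order $n^{-1/2}\delta^{1/2}$. Peeling over the shells $2^s n^{-1/3}\le\|\Delta_n\|_{2,\Px}<2^{s+1}n^{-1/3}$ then yields $\|\Delta_n\|_{2,\Px}=O_p(n^{-1/3})$; this is Theorem~\ref{thm:iso-postcal} in the paper. With this argument replacing your Step 1, the rest of your proposal goes through.
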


Theorem \ref{prop:eif-red} implies that \(m_{n,\mathrm{iso}}^\star\) is calibrated in the sense that \(\|\widehat{m}_0-m_{n,\mathrm{iso}}^\star\|_{2,\Px}^2 = O_p(n^{-2/3})\). Theorem \ref{thm:al} further shows that \(\|m_{n,\mathrm{iso}}^\star-m_0\|_{2,\Px}^2 = O_p(n^{-2/3})\), so this calibration does not asymptotically degrade predictive performance. We therefore say that \(m_{n,\mathrm{iso}}^\star\) \emph{prediction-calibeats} the original predictor \(m\).

Theorem~\ref{prop:eif-red} also shows that the asymptotic variance of \(\widehat{\psi}_{\mathrm{iso}}\) matches that of the oracle estimator \(\widehat{\psi}_{\mathrm{eff}}\), which uses the oracle score \(\widehat{m}_0(X):=\E_{P_0}[Y \mid m_{n,\mathrm{iso}}^\star(X)]\). Equivalently, within the reduced experiment based on \(m_{n,\mathrm{iso}}^\star(X)\), \(\widehat{m}_0(X)\) is the oracle post-processing that minimizes the variance of the AIPW influence function. Moreover, Theorem~\ref{thm:al} combined with Proposition \ref{prop:aipw-class} implies that \(\widehat{\psi}_{\mathrm{iso}}\) achieves the optimal limiting variance within the class of monotone post-processings, and is therefore at least as efficient as AIPW based on the original score \(m(X)\). Hence, \(\widehat{\psi}_{\mathrm{iso}}\) \emph{functional-estimation-calibeats} \(\widehat{\psi}_{\mathrm{AIPW}}\) and \(\widehat{\psi}_{\mathrm{PPI}}\).

\section{Experiments}
\label{sec:exp}

We use the same comparison hierarchy throughout all experiments. The labeled-only estimatorserves as the safety baseline. AIPW is the primary raw-score baseline, with PPI included as a weaker comparator. PPI++ and \textsc{AIPW-EM} are the efficiency-rescaling competitors, while \texttt{LinearCal}, \texttt{IsoCal}, \texttt{AutoCal}, and related calibration rules are the proposed remedies when the score is miscalibrated. The three subsections below therefore move from a controlled mechanism check, to reproduced benchmark evidence, to a practical LLM evaluation setting.

\subsection{Simulation study}
\label{sec:simulation}

We begin with a controlled binary-outcome simulation designed to isolate score miscalibration. Let \(S\sim N(0,1)\), define
\[
\mu_0(S)=\{1+\exp(-5S)\}^{-1},
\]
and draw \(Y\mid S\sim \mathrm{Bernoulli}\{\mu_0(S)\}\). To induce substantial miscalibration, we apply an affine-biased monotone distortion on the logit scale,
\[
m(X)=\mathrm{clip}\left[-0.15+0.75\,\sigma\bigl\{0.8\,z(S)+0.1\,z(S)^3-1\bigr\},\,0.01,\,0.99\right],
\]
where \(z(S):=\mathrm{logit}\{\mu_0(S)\}\). This transformation preserves ranking information while introducing nonlinear distortion on the probability scale. We compare the labeled-only mean (\textsc{Labeled-only}), PPI, AIPW, PPI++, an unclipped efficiency-maximized AIPW benchmark (\textsc{AIPW-EM}), linear calibration (\textsc{LinearCal}), monotone spline calibration (\textsc{MonoSpline}) \citep{ramsay1988monotone,jiang2011smooth}, isotonic calibration (\textsc{IsoCal}), and an adaptive selector (\textsc{AutoCal}) that chooses among AIPW and the main calibration rules by cross-validated empirical efficiency. Here, PPI++ refers to the official \texttt{ppi\_py} implementation, which rescales the prediction score by one-dimensional empirical efficiency maximization and then clips the resulting coefficient to \([0,(1-\rho_n)^{-1}]\), so that the estimator shrinks PPI towards the labeled-only estimator. By contrast, \textsc{AIPW-EM} applies the same criterion without clipping the selected coefficient. The labeled sample size ranges over \(n\in\{50,100,200,400,800,1200,2400\}\), with \(N/n=1\) and \(N/n=16\). All figures and summaries are based on 500 Monte Carlo repetitions.

\begin{figure}[htb]
\centering
\includegraphics[width=\textwidth]{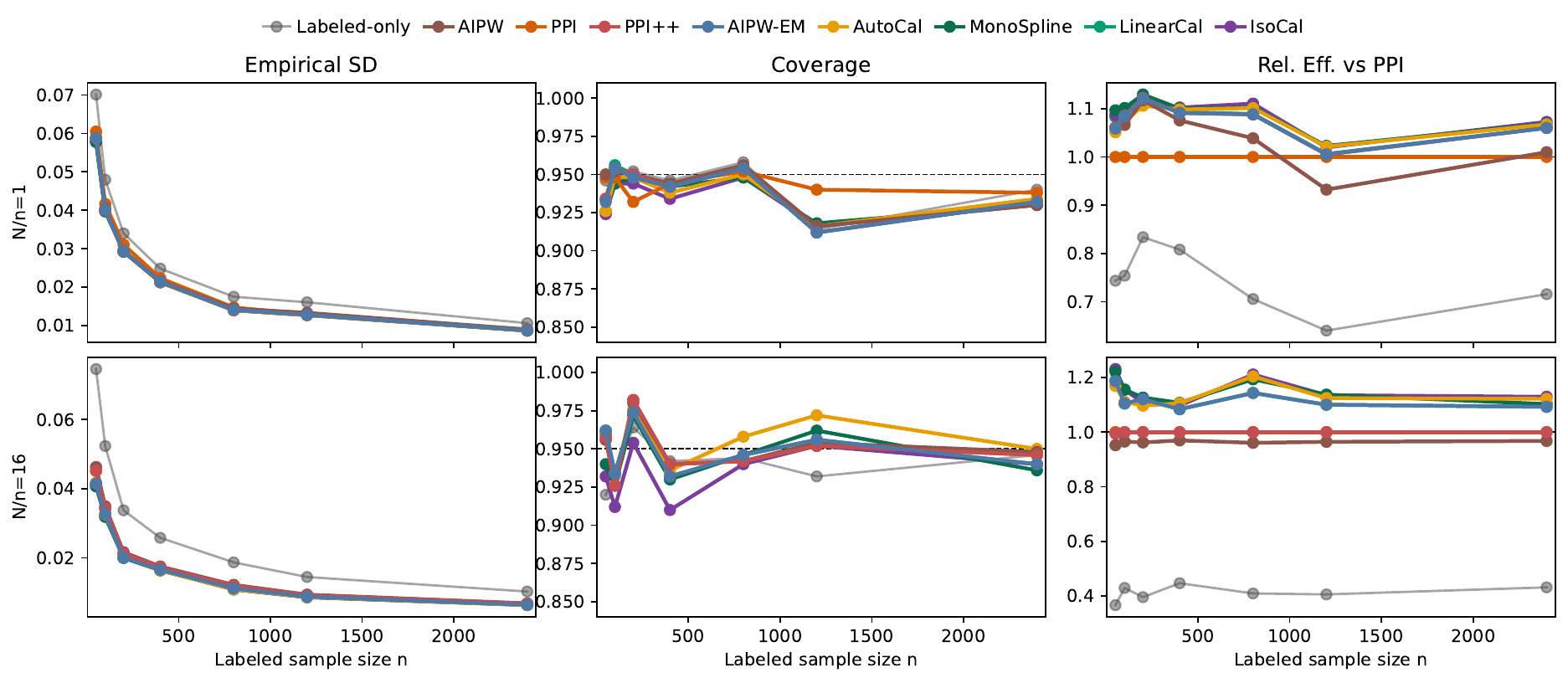}
\caption{Synthetic simulation study with a poorly calibrated score. The top row shows the balanced case \(N/n=1\), and the bottom row shows the large-unlabeled case \(N/n=16\). All estimators had negligible bias relative to their standard errors, so we omit the bias panel. The three columns report empirical Monte Carlo standard deviation, coverage, and relative efficiency versus PPI. Lower standard deviation is better, coverage closer to the nominal \(0.95\) is better, and relative efficiency above \(1\) indicates an estimator is more efficient than PPI (so higher is better).}
\label{fig:simulation-rmse}
\end{figure}

Figure \ref{fig:simulation-rmse} isolates the main mechanism. All score-based AIPW-type estimators are essentially unbiased, so the key comparison is efficiency: AIPW improves on PPI, and calibration helps when the score is genuinely miscalibrated. At \(n=400\) and \(N/n=16\), the calibrated methods reduce RMSE from about \(0.0172\) for PPI to about \(0.0164\); by \(n=1200\) in the same regime, RMSE is about \(0.0087\) for \textsc{AutoCal}, \textsc{MonoSpline}, and \textsc{IsoCal}, compared with \(0.0092\) for PPI. The PPI++ comparison is most revealing when the unlabeled sample is large: \textsc{AIPW-EM} remains essentially indistinguishable from \texttt{LinearCal}, whereas clipped PPI++ falls back toward plain PPI. Averaging over the poorly calibrated designs with \(N/n=16\), RMSE is about \(0.0196\) for \textsc{AIPW-EM} and \texttt{LinearCal}, versus \(0.0208\) for PPI++, reflecting the fact that the official \texttt{ppi\_py} implementation clips the optimized scaling coefficient to \([0,1]\), whereas \textsc{AIPW-EM} does not. \textsc{AutoCal} remains close to the best fixed score-based option, and \texttt{MonoSpline} provides a smooth monotone alternative that is usually close to isotonic calibration.

\subsection{Empirical Illustration}
\label{sec:empirical}

We evaluate the proposed calibration-based estimators on the mean-estimation benchmarks used in the original experiments of \citet{ppi2023,angelopoulos2023ppi}, as provided in the \texttt{ppi\_py} package. We use the same \texttt{ppi\_py} datasets and labeled-sample-size grids, repeatedly split each fully labeled benchmark into labeled and unlabeled subsets, evaluate all estimators on the same split, and treat the full-sample mean as ground truth.\footnote{Because repeated sample splits of a fixed finite benchmark dataset are not i.i.d.\ draws from a superpopulation, asymptotic Wald confidence intervals can exhibit either undercoverage or overcoverage. This is not specific to our calibration extensions, but is a feature of evaluating Wald-type intervals under finite-population resampling.} The current draft summaries average over 500 random splits at each \(n\). We report the official \texttt{forest}, \texttt{galaxies}, and \texttt{census\_income} benchmarks and focus the main text on the primary comparator set already shown in the figure: PPI, AIPW, PPI++, \textsc{AIPW-EM}, \texttt{LinearCal}, and \texttt{AutoCal}. Appendix~\ref{app:empirical} gives the fuller method inventory, calibration-specific comparisons, and dataset-by-regime summary.

\begin{figure}[htb]
\centering
\includegraphics[width=\textwidth]{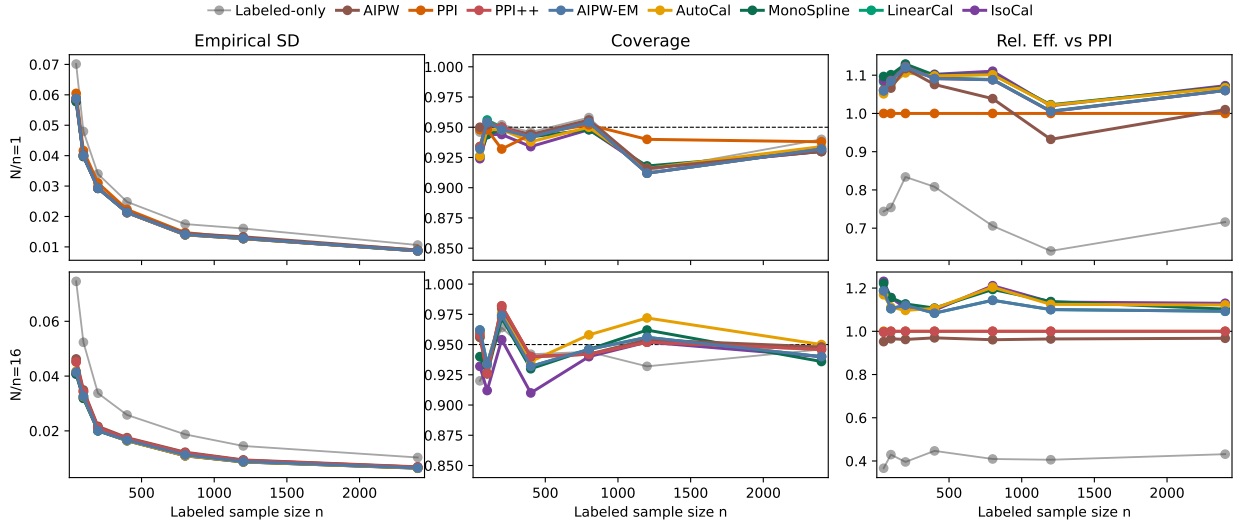}
\caption{Main-text benchmark summary for the reproduced \texttt{ppi\_py} datasets. Panels report normalized MSE relative to PPI, relative efficiency versus PPI, and coverage for the main comparator set.}
\label{fig:ppi-main}
\end{figure}

Figure~\ref{fig:ppi-main} shows the same pattern on the reproduced benchmarks. AIPW is the stronger raw-score baseline, with PPI competitive only when the unlabeled sample is extremely large. The clearest calibration gain appears in \texttt{census\_income}, where \texttt{LinearCal} performs best on average, and the appendix comparison in \cref{fig:ppi-calibration-appendix} shows that \texttt{IsoCal} and \texttt{MonoSpline} remain broadly competitive without changing the main ranking. The added efficiency-maximization baseline also sharpens the comparison with PPI++: across the 30 dataset-size cells in \cref{fig:ppi-main}, \textsc{AIPW-EM} has lower MSE than clipped PPI++ in 29 and is nearly tied with \texttt{LinearCal}. Averaged across all cells, the mean MSE is about \(4.75\times 10^6\) for \textsc{AIPW-EM}, \(4.75\times 10^6\) for \texttt{LinearCal}, and \(5.03\times 10^6\) for PPI++, with essentially identical coverage near \(0.919\). This mirrors the simulations: once the empirical efficiency-maximization step is left unclipped, its finite-sample behavior moves much closer to \texttt{LinearCal}. \texttt{AutoCal} serves as a practical hedge that tends to remain close to the best fixed score-based method. Appendix~\ref{app:empirical} provides a fuller summary by dataset and regime.

\subsection{LLM evaluation benchmark}
\label{sec:llm-benchmark}

We now turn to a practical LLM-evaluation setting in which we apply the same comparison hierarchy to real public model outputs, public evaluator scores, and public labels. The first track uses PPE Human Preference V1 \citep{frick2025evaluate}, where the estimand for each target model is its human win rate across prompt--response comparisons, counting ties as half-wins. The second track uses a PPE correctness suite \citep{frick2025evaluate} built from MMLU-Pro \citep{wang2024mmlupro}, MATH \citep{hendrycksmath2021}, GPQA \citep{rein2024gpqa}, IFEval \citep{zhou2023instructionfollowing}, and MBPP+ \citep{liu2023evalplus}, where each unit consists of a sampled conflict pair from the same generator model and the estimand is the correctness probability of the first answer in the pair. In both tracks, the prompts, outputs, judgments, and evaluator scores are real; only the labeled/unlabeled split is rerandomized. Across 200 repeated splits, we compare labeled-only estimation with raw PPI and AIPW, their efficiency-maximized variants, and the calibration-based estimators from our main experiments.

\IfFileExists{assets/fig_llm_eval_main.pdf}{
\begin{figure}[htb]
\centering
\includegraphics[width=\textwidth]{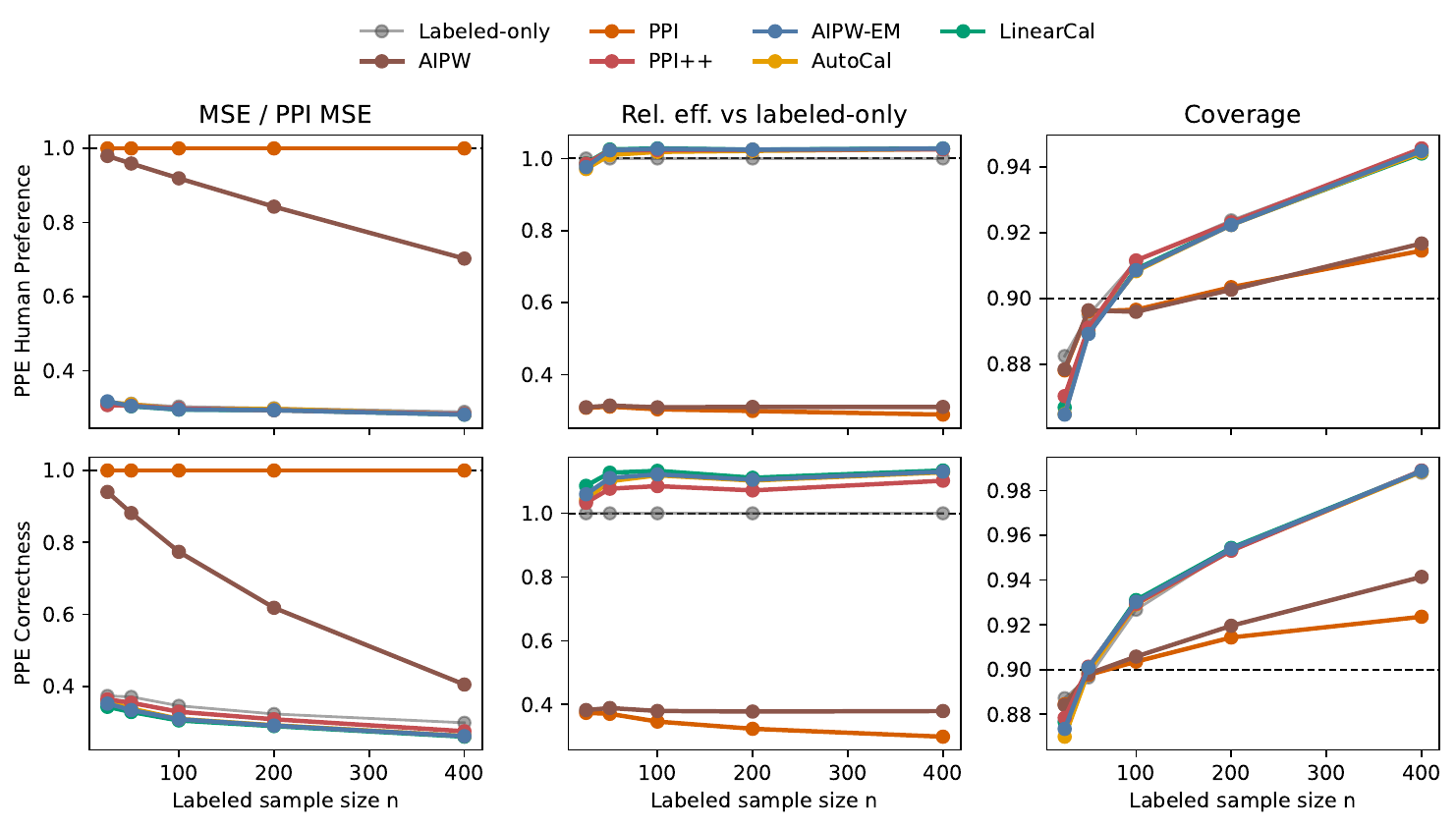}
\caption{PPE-centered LLM-evaluation benchmark. Panels report normalized MSE relative to PPI, relative efficiency versus labeled-only, and coverage after macro-averaging across the public PPE evaluator models. We omit RewardBench from the main-text figure until its separate robustness run completes cleanly.}
\label{fig:llm-eval-main}
\end{figure}
}{}

\IfFileExists{assets/table_llm_summary.tex}{
\begin{table}[htb]
\centering
\small
\resizebox{\textwidth}{!}{\begin{tabular}{lllccc}
\toprule
Track & $n$ & Estimator & MSE / PPI & Label savings & Coverage \\
\midrule
PPE Correctness & 100 & Labeled-only & 0.346 & 0.000 & 0.927 \\
PPE Correctness & 100 & PPI & 1.000 & 0.010 & 0.904 \\
PPE Correctness & 100 & AIPW & 0.774 & 0.009 & 0.906 \\
PPE Correctness & 100 & LinearCal & 0.305 & 0.101 & 0.931 \\
PPE Correctness & 100 & AutoCal & 0.310 & 0.094 & 0.929 \\
PPE Correctness & 400 & Labeled-only & 0.299 & 0.000 & 0.988 \\
PPE Correctness & 400 & PPI & 1.000 & 0.020 & 0.924 \\
PPE Correctness & 400 & AIPW & 0.405 & 0.009 & 0.941 \\
PPE Correctness & 400 & LinearCal & 0.260 & 0.105 & 0.989 \\
PPE Correctness & 400 & AutoCal & 0.263 & 0.102 & 0.988 \\
PPE Human Preference & 100 & Labeled-only & 0.303 & 0.000 & 0.911 \\
PPE Human Preference & 100 & PPI & 1.000 & 0.004 & 0.897 \\
PPE Human Preference & 100 & AIPW & 0.919 & 0.004 & 0.896 \\
PPE Human Preference & 100 & LinearCal & 0.295 & 0.031 & 0.909 \\
PPE Human Preference & 100 & AutoCal & 0.297 & 0.028 & 0.908 \\
PPE Human Preference & 400 & Labeled-only & 0.289 & 0.000 & 0.945 \\
PPE Human Preference & 400 & PPI & 1.000 & 0.005 & 0.915 \\
PPE Human Preference & 400 & AIPW & 0.703 & 0.004 & 0.917 \\
PPE Human Preference & 400 & LinearCal & 0.282 & 0.028 & 0.944 \\
PPE Human Preference & 400 & AutoCal & 0.282 & 0.027 & 0.944 \\
\bottomrule
\end{tabular}
}
\caption{Compact PPE-only numerical summary at \(n\in\{100,400\}\), restricted to the headline estimators from the main-text discussion. Each row reports normalized MSE relative to PPI, implied label savings relative to labeled-only, and coverage for one estimator-track pair. The fuller estimator comparison appears in Appendix~\ref{tab:llm-eval-summary-full}.}
\label{tab:llm-eval-summary}
\end{table}
}{}

Two messages stand out. First, naive use of the proxy can fail badly. Raw AIPW is again the stronger raw-score baseline, but even AIPW can perform dramatically worse than labeled-only estimation when the reward-model margin is poorly scaled. On PPE Human at \(n=100\), raw PPI and raw AIPW have MSE \(0.495\) and \(0.440\), compared with \(0.001425\) for labeled-only and \(0.001391\) for \texttt{LinearCal}. On PPE Correctness at \(n=100\), the corresponding MSEs are \(0.534\), \(0.353\), \(0.002010\), and \(0.001813\). These estimators remain valid first-order corrections, but the additional proxy term can inflate variance enough that labeled-only estimation is much safer.

Second, calibration makes the same evaluator scores practically useful. \texttt{LinearCal} is the strongest and most stable method overall, with \texttt{AutoCal} close behind. On PPE Human, \texttt{LinearCal} is best from \(n=50\) onward, improving on labeled-only from \(0.001425\) to \(0.001391\) at \(n=100\) and from \(0.000286\) to \(0.000278\) at \(n=400\), corresponding to modest label savings of about \(2.5\%\) to \(3.1\%\). On PPE Correctness, the gains are clearer: \texttt{LinearCal} is best at every budget, improving from \(0.002010\) to \(0.001813\) at \(n=100\) and from \(0.000248\) to \(0.000222\) at \(n=400\), corresponding to about \(7.5\%\) to \(10.5\%\) label savings. The more flexible calibration rules are also competitive once the labeled sample is moderately large. On PPE Human, \texttt{AutoCal} and \texttt{MonoSpline} remain in the \(2.5\%\) to \(2.8\%\) label-savings range from \(n=100\) to \(n=400\); on PPE Correctness, they remain in roughly the \(9\%\) to \(10\%\) range over the same budgets. \texttt{MonoSpline} therefore substantially improves on raw PPI and raw AIPW and is usually competitive with PPI++ and \textsc{AIPW-EM}, though it remains slightly less stable than \texttt{LinearCal}. The evaluator-specific appendix breakdown (\Cref{fig:llm-eval-by-evaluator}) shows that the most extreme raw-proxy failures are driven by evaluators such as Skywork and Athene, whereas ArmoRM is much more benign. Calibration also improves the downstream ranking task induced by PPE Human. Raw PPI and raw AIPW recover the model ordering poorly, whereas calibrated and efficiency-maximized methods recover it well. At \(n=200\), \texttt{AutoCal} attains the highest average Spearman correlation, about \(0.905\); at \(n=400\), \texttt{LinearCal} achieves the best overall ranking quality, with Spearman correlation about \(0.941\) and top-1 identification rate about \(0.646\). We therefore view calibration not as a marginal efficiency tweak, but as the step that makes cheap public LLM evaluators safe enough to use in practice; see Appendix~\ref{app:llm-ranking}.

\section{Conclusion and closing remarks}

We introduced \emph{Calibrated Prediction-Powered Inference}, a simple framework for semisupervised mean estimation based on three steps: fit a prediction score, calibrate it on the labeled sample, and average the calibrated predictions over the pooled covariate sample. This simple plug-in procedure fits naturally within the broader literature on semiparametric debiasing and can be viewed as a semisupervised specialization of calibrated debiased machine learning \citep{van2024automatic}.

Our results highlight a simple message. Calibration can improve semisupervised efficiency without retraining the original model, and isotonic calibration can \emph{calibeat} the original score and simpler post-processing rules at first order. At the same time, once the score has been isotonicly calibrated, further post-processing of that fitted score brings no additional first-order gain. Empirically, this suggests a transparent and practical workflow: when the original score is already well calibrated, calibration changes little, whereas under meaningful miscalibration, even simple linear or monotone calibration can improve efficiency while preserving a transparent plug-in form.

\textbf{Positioning PPI within existing semiparametric methods.}
More broadly, our results clarify that recent prediction-powered inference methods can be understood within the standard semiparametric, debiased machine-learning, and flexible covariate-adjustment toolkit for randomized trials and missing-at-random settings. In particular, we show that PPI++ is an AIPW estimator with empirical efficiency maximization \citep{rubin2008empirical} and is first-order equivalent to linear calibration and classical prognostic-score regression adjustment \citep{hansen2008prognostic,lin2013agnostic,schuler2022increasing}. It is therefore closely related to randomized-trial approaches that learn prognostic scores from larger historical, external-control, or auxiliary data sets and then use them for regression adjustment in smaller trials \citep{hansen2008prognostic,moore2009covariate,rosenblum2009using,benkeser2021improving,hojbjerre2025powering}. This perspective suggests that semisupervised inference may benefit from closer integration with the broader missing-data, causal-inference, and debiased machine-learning literatures. These lines of work study closely related versions of the same core problem: how to use flexible regression adjustment learned from auxiliary data while retaining valid inference for the target parameter. Further progress may therefore come from translational work across them, especially by adapting tools developed for randomized trials and missing-at-random settings to semisupervised problems \citep{petersen2014causal,dang2023causal,ho2023current,smith2023application,ho2024examples}.

\textbf{Beyond mean estimation.}
These connections extend well beyond the semisupervised mean-estimation setting considered here. For example, when missingness affects both outcomes and covariates and may depend on observed covariates and outcomes, as well as in extensions to right-censored time-to-event outcomes and longitudinal settings, existing semiparametric methods often provide natural starting points and can frequently be adapted to these settings \citep{robins1994estimation,vanderlaanunified,bang2005doubly,moore2009increasing,rose2011targeted,van2018targeted}. More generally, M-estimation and estimating-equation approaches for missing-at-random problems are well developed \citep{robins1994estimation,vanderlaanunified,DoubleML}. When missingness depends on unobserved outcomes even after conditioning on observed data, however, naively pooling labeled and unlabeled data can bias estimation. There is substantial work on combining gold-standard and potentially biased auxiliary data more safely using semiparametric and machine-learning tools \citep{kallus2018removing,rosenman2025methods,dang2025experiment,van2026adaptive}. Similarly, when the unlabeled sample is selected in a data-dependent way, the problem becomes closely related to adaptive and selectively sampled designs, including two-phase and informative sampling \citep{van2008construction,chow2008adaptive,rose2011targeted,malenica2021adaptive,zrnic2024active,zhang2025efficient,liu2025robust}. Extending these broader missing-data and adaptive-sampling ideas to modern semisupervised inference is therefore a natural direction for future work.

\textbf{Towards conditional prediction-powered inference} There is already a well-developed literature on machine-learning-assisted estimation of local and conditional regression targets, including regression functions, dose-response curves, and conditional average treatment effects, which can be used to extend inference beyond marginal mean estimation \citep{rubin2006doubly,diaz2013targeted,kennedy2017non,bibaut2017data,van2018cv,westling2020causal,chernozhukov2022debiased,kennedy2023towards,foster2023orthogonal,chernozhukov2023simple,luedtke2024one,chernozhukov2024conditional,butzin2024highly,zhang2025constructing,gu2024local,sui2026prediction}. In particular, kernel-regression-based \citep{bibaut2017data} and isotonic-regression-based \citep{westling2020unified} inference can be used for conditional PPI with AIPW-score-based pseudo-outcomes, as in \citet{van2003unified,rubin2006doubly,kennedy2017non}, or more generally through influence functions and Neyman-orthogonal losses \citep{bibaut2017data,chernozhukov2024conditional}. The key idea is that locally smoothed conditional targets are amenable to influence-function-based bias correction, after which inference follows from a triangular-array central limit theorem applied to estimators of a sequence of approximating parameters \citep{bibaut2017data,chernozhukov2022debiased,luedtke2024one,chernozhukov2024conditional}. In the prediction-powered setting, a natural approach is to apply kernel regression to AIPW-style pseudo-outcomes for missing-data problems, following \citet{rubin2006doubly}, \citet{takatsu2025debiased}, and \citet{bibaut2017data,kennedy2017non,chernozhukov2024conditional}.

\textbf{Calibration for general loss functions.} While we focus on calibration of regression functions and inference on means, the ideas here may extend to broader supervised learning problems defined through loss minimization, such as median and quantile regression \citep{rosenblum2009using,noarov2023statistical,jung2021moment,roth2022uncertain,whitehouse2024orthogonal}. They also extend to continuous linear functionals of a regression function or M-estimand through calibrated DML \citep{van2024automatic,van2025automatic}. More generally, when the target parameter is defined by a moment equation, as in M-estimation, one could debias or calibrate the associated augmented moment equation \citep{vanderlaanunified,DoubleML}; see Appendix~\ref{appendix::momentgeneral} and \citet{ji2025predictions}. A practical challenge in this setting is that the augmented loss may be nonconvex or may require separate debiasing at each parameter value, which can complicate optimization. In such cases, a more practical alternative may be a multiaccuracy-style adjustment rather than explicit augmentation or calibration, as in the Efficient Plug-in Learning framework of \citet{van2024combining}, which addresses this issue for nonconvex Neyman-orthogonal loss functions \citep{foster2023orthogonal}.

\bibliographystyle{plainnat}
\bibliography{references}

\appendix

\tableofcontents

 \section{Additional background on semiparametric statistics}
  \label{section::plugin}

The next result shows that the efficient choice in  Proposition \ref{prop:aipw-class}  is not merely optimal within the AIPW class. In fact, every regular estimator\footnote{Informally, a regular estimator is one whose large-sample behavior remains stable under small changes in the data-generating distribution \citep{van2000asymptotic}.} in the unrestricted two-sample model is asymptotically equivalent to \(\widehat\psi(f)\) for some score function \(f\). Thus, the AIPW family characterizes the full class of regular estimators at first order in this model.

\begin{proposition}[Characterization of regular estimators]
\label{prop:if-class}
Under \cref{assump:design}, a mean-zero pair \((D^L,D^U)\) is the influence pair of a regular estimator of \(\psi_0\) in the unrestricted two-sample model if and only if there exists a function \(f_0:\mathcal X\to\R\) with finite second moment, potentially depending on \(P_0\), such that
\[
(D^L,D^U)=(D_{f_0}^L,D_{f_0}^U),
\]
where \(D_f^L\) and \(D_f^U\) are given in Proposition \ref{prop:aipw-class}. Equivalently, for every regular estimator \(\widehat\psi^\dagger\), there exists such an \(f_0\) for which
\[
\widehat\psi^\dagger-\widehat\psi(f_0)=o_p(M^{-1/2}).
\]
\end{proposition}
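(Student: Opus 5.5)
The plan is to characterize regular influence pairs through gradients of the pathwise derivative of \(\psi\) in the two-sample model, and then match each gradient to an AIPW pair \((D_f^L,D_f^U)\).

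\textbf{Tangent space and gradients.} I would first describe the tangent space. Smooth one-dimensional submodels through \(P_0\) perturb the joint law by a score
\[
s(x,y)=s_X(x)+s_{Y|X}(y\mid x),
\qquad \E[s_X(X)]=0,\quad \E[s_{Y|X}(Y\mid X)\mid X]=0.
\]
The labeled sample contributes the full score \(s\), while the unlabeled sample contributes only \(s_X\), because it is drawn from \(\Px\). Since \(\rhon\to\rhozero\), the log-likelihood of the pooled experiment at sample size \(M\) then has local expansion
\[
M^{-1/2}\Bigl\{\sum_{i}s(X_i,Y_i)+\sum_j s_X(\widetilde X_j)\Bigr\}.
\]
A mean-zero pair \((D^L,D^U)\) is a gradient of \(\psi\) precisely when, for every such score,
\[
\frac{d}{dt}\psi(P_t)\Big|_{t=0}=\rhozero\E[D^L s]+(1-\rhozero)\E[D^U s_X].
\]
I would then invoke the standard fact (van der Vaart, Thm.\ 25.20 / Bickel et al.) that an asymptotically linear estimator is regular iff its influence pair is a gradient. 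Applying this fact in the two-sample product experiment needs care: the local experiments are LAN with the inner product \(\rhozero\langle\cdot,\cdot\rangle_{P_0}+(1-\rhozero)\langle\cdot,\cdot\rangle_{\Px}\). I regard this product-experiment adaptation as the main obstacle, and would handle it via Le Cam's third lemma applied jointly to the two independent empirical sums.

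\textbf{Gradient equation.} The pathwise derivative is \(\E[(Y-\psi_0)s(X,Y)]\). I would split this into an \(s_{Y|X}\) component and an \(s_X\) component.
\begin{itemize}
\item Testing against arbitrary \(s_{Y|X}\) gives \(\rhozero\{D^L-\E[D^L\mid X]\}=Y-\mu_0(X)\). Hence \(D^L=g(X)+\rhozero^{-1}\{Y-\mu_0(X)\}\) for some function \(g\).
\item Testing against arbitrary mean-zero \(s_X\) gives \(\rhozero g(X)+(1-\rhozero)D^U(X)=\mu_0(X)-\psi_0\) up to an additive constant. The mean-zero constraints pin that constant to zero.
\end{itemize}

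\textbf{Matching to the AIPW family.} Given such a pair, I would set \(\tilde f:=D^U+\psi_0\), so that \(\E\tilde f=\psi_0\) and \(D^U=D^U_f\) for any \(f\) with recentered version \(\tilde f\) (for instance, \(f=\tilde f\)). Substituting into the second equation, \(\rhozero g=\mu_0-\psi_0-(1-\rhozero)(\tilde f-\psi_0)\), and therefore
\[
D^L=\tilde f-\psi_0+\rhozero^{-1}(Y-\tilde f)=D_f^L.
\]
Finite variance of \(D^U\) gives finite second moment of \(f\). The converse direction is immediate: Proposition~\ref{prop:aipw-class} shows that \(\widehat\psi(f_0)\) is asymptotically linear with pair \((D_{f_0}^L,D_{f_0}^U)\), and a direct check shows this pair satisfies the gradient equation. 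Since \(\widehat\psi(f_0)\) is asymptotically linear with a gradient as its influence pair, it is regular.

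\textbf{Equivalence statement.} Finally, suppose \(\widehat\psi^\dagger\) is regular with pair \(D_{f_0}\). Subtracting its linear expansion from that of the oracle \(\widehat\psi(f_0)\) given by Proposition~\ref{prop:aipw-class}, the leading terms cancel exactly. This leaves an \(o_p(M^{-1/2})\) difference, which is the stated equivalence.
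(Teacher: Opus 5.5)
Your proof is correct, and it supplies something the paper does not: the paper states this proposition as background and gives no proof. It defers, as at the end of the proof of Proposition~\ref{prop:aipw-class}, to the classical characterization of regular asymptotically linear (RAL) influence functions in missing-at-random models with known labeling probability \citep{robins1994estimation,vanderlaanunified,tsiatis2006semiparametric}.

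\textbf{The cited route.} The natural way to use that literature is through the pooled i.i.d.\ formulation of Appendix~\ref{app:iid}. There, every influence function is the inverse-weighted full-data influence function \(R\rhozero^{-1}(Y-\psi_0)\) plus an element \((1-R/\rhozero)\phi(X)\) of the augmentation space. Taking \(\phi=\tilde f-\psi_0\) and reading off the \(R=1\) and \(R=0\) pieces gives \((D^L_f,D^U_f)\).

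\textbf{Your route.} You compute the gradient set directly in the two-sample product experiment, and each step checks out:
\begin{itemize}
\item the tangent set \(\{(s,\E[s\mid X])\}\) with the \(\rhozero\)-weighted inner product;
\item the split of the gradient equation into its \(s_{Y\mid X}\) and \(s_X\) parts;
\item the identification \(\tilde f=D^U+\psi_0\).
\end{itemize}
Equivalently, the orthocomplement of the tangent space is \(\{(-\tfrac{1-\rhozero}{\rhozero}b(X),\,b(X)):\E b=0\}\), and translating \((D^L_{\mu_0},D^U_{\mu_0})\) by it sweeps out exactly the AIPW family.

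\textbf{Comparison.} Your approach is self-contained and matches the actual design with fixed \(n,N\) and \(\rhon\to\rhozero\). It avoids translating from an i.i.d.\ model with random \(R\), where one must decide how to treat the law of \(R\). As a by-product, \(\E[D^L_{\mu_0}\mid X]=D^U_{\mu_0}\) shows that \((D^L_{\mu_0},D^U_{\mu_0})\) lies in the tangent space and is therefore the canonical gradient. That proves the efficiency-bound claim in Proposition~\ref{prop:aipw-class}, which the paper only cites. The citation route is shorter and extends verbatim to covariate-dependent missingness.

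The Le Cam third-lemma step you flag as the main obstacle is routine. Because \(\rhon\to\rhozero\), the two independent empirical sums and the log-likelihood ratio are jointly asymptotically normal by Lindeberg--Feller.

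Finally, state explicitly that the ``equivalently'' clause concerns regular \emph{asymptotically linear} estimators. A regular estimator need not have an influence pair at all; for example, adding independent \(M^{-1/2}N(0,1)\) noise to \(\widehat\psi(\mu_0)\) preserves regularity but destroys asymptotic linearity. This is how you read it, and how the paper implicitly intends it.
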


This characterization shows that the AIPW class is the relevant first-order benchmark in this model. The next remark illustrates that several estimators not always written in AIPW form---including regression adjustment, balancing-weight adjustment, and TMLE---can nonetheless be viewed as finite-sample or asymptotic equivalents of AIPW estimators \citep{rotnitzky2025note,van2006targeted}, while sometimes offering better finite-sample performance \citep{porter2011relative,jin2026prognostic,van2025automatic}.

\begin{remark}[Plug-in approaches to bias correction and TMLE]

  \textbf{Regression plug-in estimators.} A particularly simple member of this first-order class is obtained through plug-in regression adjustment \citep{scharfstein1999adjusting,van2006targeted,rosenblum2009using,rotnitzky2025note}. Let
\[
m_n(X)=\hat a_{\mathrm{lin}}\,m(X)+\hat b_{\mathrm{lin}},
\qquad
(\hat a_{\mathrm{lin}},\hat b_{\mathrm{lin}})
=
\arg\min_{a,b}\sum_{i=1}^n \{Y_i-a\,m(X_i)-b\}^2.
\]
The corresponding plug-in estimator is
\[
\rho_n \Pl\{m_n(X)\} + (1-\rho_n)\Pu\{m_n(\widetilde X)\}.
\]
For any \(\lambda\in\mathbb{R}\), this estimator also admits the AIPW representation
\[
\rho_n \Pl\{m_n(X)\}
+
(1-\rho_n)\Pu\{m_n(\widetilde X)\}
+
\lambda \Pl\bigl[Y-m_n(X)\bigr],
\]
because the augmentation term \(\Pl\{Y-m_n(X)\}\) is zero by the least-squares normal equations. Thus, when \(m_n\) converges to the best linear predictor of \(Y\) given \(m(X)\), this plug-in estimator is typically at least as efficient as \(\widehat{\psi}_{\mathrm{AIPW}}\) and \(\widehat{\psi}_{\mathrm{PPI}}\), since it corresponds to an influence function with smaller or equal variance.

\textbf{Balancing-weight estimators.} The same estimator can also be written in terms of balancing weights. Let \(w_n(X)=\hat a_{\mathrm{bal}}+\hat b_{\mathrm{bal}}\,m(X)\) be affine weights that balance \((1,m(X))\), in the sense that
\[
\Pl\{w_n(X)\}=1
\quad\text{and}\quad
\Pl\bigl[m(X)w_n(X)\bigr]
=
\rho_n \Pl\{m(X)\}+(1-\rho_n)\Pu\{m(\widetilde X)\}.
\]
Then the corresponding balancing-weight estimator satisfies
\[
\Pl\bigl\{w_n(X)Y\bigr\}
=
\rho_n \Pl\{m(X)\}
+
(1-\rho_n)\Pu\{m(\widetilde X)\}
+
\Pl\bigl[w_n(X)\{Y-m(X)\}\bigr].
\]
In fact, the plug-in and balancing-weight estimators are algebraically equivalent \citep{chattopadhyay2023implied,bruns2025augmented}:
\[
\Pl\bigl\{w_n(X)Y\bigr\}
=
\rho_n \Pl\{m_n(X)\}
+
(1-\rho_n)\Pu\{m_n(\widetilde X)\}.
\]

\textbf{TMLE.} More broadly, such debiased plug-in estimators can be viewed as instances of TMLE \citep{van2006targeted,vanderLaanRose2011,hojbjerre2026within}, a general framework for constructing plug-in estimators that are debiased and, in many cases, efficient by updating nuisance components to remove first-order bias. In the present setting, either the weights or the regression model are updated so that the corresponding augmentation term vanishes. In weighting-based TMLE, the update balances on the prediction model \(m(X)\) (e.g., \citealt{hejazi2023revisiting}); one may either include the known weight \(1/\rho_n\) as an offset or balance on the intercept as well. In regression-based TMLE, the prediction model \(m(X)\) may be included either as a covariate or as an offset (e.g., \citealt{lendle2015balancing}), and the intercept adjustment accounts for the known inverse missingness-probability weight \(1/\rho_n\). The specific targeting step is ultimately a practical choice, provided it solves the relevant estimating equation and removes the augmentation term; different choices may also yield additional benefits, such as further bias reduction or doubly robust inference \citep{van2014targeted,carone2014higher,diaz2017doubly,benkeser2017doubly,van2021higher, van2024automatic}.
 \end{remark}

\section{Constructing confidence intervals via the bootstrap}

Confidence intervals may be constructed using the standard Wald approximation from Corollary \ref{cor:normal}, namely
\[
\widehat{\psi}_{\mathrm{iso}}
\pm
z_{1-\alpha/2}\,\frac{\hat\sigma}{\sqrt{n+N}}.
\]
As an alternative, one may use the nonparametric bootstrap and refit the isotonic calibration step within each bootstrap replicate, as proposed and theoretically studied by \citet{van2024automatic}; see also \citet{tang2024consistency}. This avoids direct estimation of the influence-function variance and automatically propagates uncertainty from estimation of the calibration map.

For each bootstrap replicate \(b=1,\dots,B\), we sample with replacement from the labeled observations \(\{(X_i,Y_i)\}_{i=1}^n\) and, independently, from the unlabeled observations \(\{\widetilde X_j\}_{j=1}^N\), obtaining bootstrap samples
\[
\{(X_i^{*(b)},Y_i^{*(b)})\}_{i=1}^n,
\qquad
\{\widetilde X_j^{*(b)}\}_{j=1}^N.
\]
We then refit the isotonic calibrator on the bootstrap labeled sample:
\begin{align*}
\hat f^{*(b)}
\in
\arg\min_{f \in \mathcal F_{\mathrm{iso}}}
\sum_{i=1}^n
\bigl\{Y_i^{*(b)} - f(\mhat(X_i^{*(b)}))\bigr\}^2,
\end{align*}
and define
\[
m_{\mathrm{iso}}^{*(b)}(X):=\hat f^{*(b)}\{\mhat(X)\}.
\]
The corresponding bootstrap replicate of the estimator is
\[
\widehat{\psi}_{\mathrm{iso}}^{*(b)}
:=
\frac{1}{n+N}\left\{
\sum_{i=1}^n m_{\mathrm{iso}}^{*(b)}(X_i^{*(b)})
+
\sum_{j=1}^N m_{\mathrm{iso}}^{*(b)}(\widetilde X_j^{*(b)})
\right\}.
\]

After computing \(B\) bootstrap replicates, one may form a percentile interval using the empirical \(\alpha/2\) and \(1-\alpha/2\) quantiles of
\[
\bigl\{\widehat{\psi}_{\mathrm{iso}}^{*(b)}\bigr\}_{b=1}^B.
\]
Alternatively, one may use the bootstrap standard error
\[
\widehat{\sigma}_{\mathrm{boot}}^2
:=
\frac{1}{B-1}\sum_{b=1}^B
\left(\widehat{\psi}_{\mathrm{iso}}^{*(b)}-\bar{\psi}_{\mathrm{iso}}^\star\right)^2,
\qquad
\bar{\psi}_{\mathrm{iso}}^\star
:=
\frac{1}{B}\sum_{b=1}^B \widehat{\psi}_{\mathrm{iso}}^{*(b)},
\]
and report the normal-approximation interval
\[
\widehat{\psi}_{\mathrm{iso}}
\pm
z_{1-\alpha/2}\,\widehat{\sigma}_{\mathrm{boot}}.
\]

The key implementation detail is that the isotonic regression step must be refit within each bootstrap replicate rather than held fixed. Otherwise, the resulting interval does not reflect uncertainty from estimation of the calibration map.

\section{Additional remarks}

\label{appendix:remarks}

\begin{remark}[Empirical efficiency maximization as a learning objective]
We  review \cite{rubin2008empirical}. The empirical variance criterion from Section~\ref{sec:effreview},
\[
\sigma_n^2(f)
:=
\rho_n \Pl\left[\left\{f(X)-\widehat{\psi}(f)+\rho_n^{-1}(Y-f(X))\right\}^2\right]
+
(1-\rho_n)\Pu\left[\left\{f(\widetilde X)-\widehat{\psi}(f)\right\}^2\right],
\]
is not tied to the one-dimensional scaling class \(\{\lambda m(X):\lambda\in\R\}\). For any candidate class \(\mathcal G\subset L^2(\Px)\), one may instead define
\[
\hat f \in \arg\min_{f\in\mathcal G}\sigma_n^2(f),
\qquad
\widehat\psi_{\mathrm{eem}}:=\widehat\psi(\hat f),
\]
so empirical efficiency maximization learns the augmentation \(f(X)\) directly.

At the population level, let
\[
V(f):=\rhozero\Var[D_f^L(X,Y)]+(1-\rhozero)\Var[D_f^U(\widetilde X)]
\]
denote the asymptotic variance of \(\widehat\psi(f)\). Using Proposition \ref{prop:aipw-class},
\[
V(f)
=
\Var[Y]+\frac{1-\rhozero}{\rhozero}\Var[Y-f(X)].
\]
Equivalently, with \(\tilde f(X):=f(X)-\E_{\Px}[f(X)]+\psi_0\),
\[
V(f)
=
\Var[Y]+\frac{1-\rhozero}{\rhozero}\E\bigl[(Y-\tilde f(X))^2\bigr].
\]
Thus the efficiency-optimal choice is the \(L^2(\Px)\)-projection of \(\mu_0\) onto the normalized class \(\{\tilde f:f\in\mathcal G\}\). If \(\mathcal G\) is closed under additive constants, then minimizing \(V(f)\) is equivalent to ordinary least-squares regression of \(Y\) on \(X\) over \(\mathcal G\). In particular, rather than optimizing the empirical efficiency criterion \(\sigma_n^2(f)\) directly, one can equivalently optimize the usual empirical mean-squared-error objective on the labeled sample.

This makes richer efficiency-maximizing classes immediate. For example, if \(\mathcal G=\{b+h:h\in\mathcal H,\ b\in\R\}\) for an RKHS \(\mathcal H\) with kernel \(K\), then a penalized version of empirical efficiency maximization is
\[
(\hat b_\lambda,\hat h_\lambda)
\in
\arg\min_{b\in\R,\ h\in\mathcal H}
\left\{
\Pl\bigl[(Y-b-h(X))^2\bigr]+\lambda\|h\|_{\mathcal H}^2
\right\},
\]
which is just kernel ridge regression on the labeled sample, up to the irrelevant additive constant in the variance criterion. By the representer theorem,
\[
\hat h_\lambda(\cdot)=\sum_{i=1}^n \hat\alpha_i K(X_i,\cdot),
\]
and, after centering the kernel or including the intercept explicitly,
\[
\hat\alpha=(K_n+n\lambda I_n)^{-1}(Y-\hat b_\lambda \mathbf 1_n).
\]
Plugging \(\hat f_\lambda=\hat b_\lambda+\hat h_\lambda\) into \(\widehat\psi(\hat f_\lambda)\) yields the corresponding efficiency-maximized AIPW estimator.

For tuning and class selection, it is preferable to evaluate the same criterion out of fold. One may split the labeled sample into folds, fit \(f^{-k}\) on the training part of fold \(k\), compute the held-out criterion \(\sigma_{n,k}^2(f^{-k})\), and average over folds. The selected class or tuning parameter can then be used in a cross-fitted or cross-averaged AIPW estimator. This gives a direct cross-validated empirical efficiency objective for choosing among classes such as linear, spline, isotonic, or RKHS-based regressors while preserving the usual sample-splitting protection against in-sample overfitting.
\end{remark}

\begin{remark}[Shrinkage via Venn--Abers]
When the labeled sample is small, isotonic calibration may be unstable. A natural alternative is Venn--Abers calibration, originally introduced for binary classification \citep{vovk2012venn}; see \cite{van2024self, van2025venn} for generalized versions beyond the binary setting. Assume \(Y \in [0,1]\), or rescale otherwise. For each \(x\), augment the labeled calibration sample with the hypothetical point \((\mhat(x),y)\) for \(y \in \{0,1\}\), fit the resulting isotonic calibrators \(f_n^{(x,0)}\) and \(f_n^{(x,1)}\), and form the interval-valued prediction
\[
\bigl[f_n^{(x,0)}\{\mhat(x)\},\, f_n^{(x,1)}\{\mhat(x)\}\bigr].
\]
Following \cite{vovk2012venn}, one may convert this interval to a point prediction by shrinking its midpoint toward the (unadjusted) AIPW estimator:
\[
m_{n,\mathrm{VA}}^\star(x)
:=
m_{n,\mathrm{mid}}^\star(x)
+
\bigl\{f_n^{(x,1)}\{\mhat(x)\}-f_n^{(x,0)}\{\mhat(x)\}\bigr\}
\bigl\{\widehat\psi_{AIPW}-m_{n,\mathrm{mid}}^\star(x)\bigr\},
\]
where
\[
\begin{aligned}
m_{n,\mathrm{mid}}^\star(x)
&:=
\frac{f_n^{(x,1)}\{\mhat(x)\}+f_n^{(x,0)}\{\mhat(x)\}}{2}.
\end{aligned}
\]
Since the shrinkage magnitude is proportional to the width of the Venn--Abers interval, this construction can be more stable than plain isotonic calibration in small samples.
\end{remark}

\begin{remark}[Relation between linear calibration, PPI, and prognostic-score adjustment]
Linear calibration can be viewed as prognostic-score regression adjustment, a classical approach to improving precision through flexible covariate adjustment in small-sample settings \citep{hansen2008prognostic,rosenblum2009using,lin2013agnostic,schuler2022increasing,balzer2024adaptive,hojbjerre2025powering}. Proposition~\ref{prop:ppi++-linear} shows that the PPI++ estimator admits the same first-order interpretation. Likewise, the PPI estimator \(\widehat{\psi}_{\mathrm{PPI}}\) is algebraically equivalent to the unlabeled-sample plug-in estimator \(\Pu\{m_{n,\mathrm{const}}^\star(\widetilde X)\}\) based on the intercept-only adjustment
\begin{equation}
\label{eq:const-cal}
\hat a_{\mathrm{const}} \in \arg\min_{a\in\R}\sum_{i=1}^n \{Y_i-a-\mhat(X_i)\}^2,
\qquad
m_{n,\mathrm{const}}^\star(X):=\hat a_{\mathrm{const}}+\mhat(X),
\end{equation}
which calibrates only the mean. By contrast, the standard AIPW estimator \(\widehat{\psi}_{\mathrm{AIPW}}\) is equivalent to the pooled-sample mean \(\rho_n \Pl\{m_{n,\mathrm{const}}^\star(X)\} + (1-\rho_n)\Pu\{m_{n,\mathrm{const}}^\star(\widetilde X)\}\).
\end{remark}

\section{Prediction-Powered Causal Inference via Calibration (i.e., Calibrated DML)}
\label{app:causal}
This appendix records the direct translation of the two-sample missing-outcome setup to randomized experiments. The central point is that the semisupervised problem studied in the main text and two-arm causal inference in a randomized trial are the same statistical problem once the unobserved potential outcome in one arm is interpreted as the missing outcome. We then illustrate how our calibrated PPI estimators apply in this setting, where they arise as special cases of calibrated DML \citep{van2024automatic}.

We use the standard potential-outcomes notation. Each unit has two potential outcomes, \(Y(1)\) under treatment and \(Y(0)\) under control, and the observed outcome satisfies
\[
Y=AY(1)+(1-A)Y(0),
\]
where \(A\in\{0,1\}\) is the treatment assignment. The mean potential outcomes are
\[
\mu_1:=\E\{Y(1)\},
\qquad
\mu_0:=\E\{Y(0)\},
\]
and the average treatment effect is \(\tau:=\mu_1-\mu_0\). In a randomized trial, identification follows from the usual causal conditions: consistency and no interference, exchangeability induced by randomization,
\[
A\perp (X,Y(1),Y(0)),
\]
and positivity, so that each arm has positive assignment probability.

Now partition the observed data into the two arm-specific samples
\[
\mathcal D_{n,1}:=\{(X_i,Y_i):A_i=1\},
\qquad
\mathcal D_{n,0}:=\{(\widetilde X_j,\widetilde Y_j):A_j=0\},
\]
with sample sizes \(n_1:=|\mathcal D_{n,1}|\), \(n_0:=|\mathcal D_{n,0}|\), and \(n=n_1+n_0\). Let \(\mathbb P_{n,a}\) denote the empirical mean over arm \(a\), and let \(\hat\pi_a:=n_a/n\). To estimate \(\mu_1\), one treats \(\mathcal D_{n,1}\) as the labeled sample for \(Y(1)\) and uses the covariates in \(\mathcal D_{n,0}\) as the unlabeled sample. To estimate \(\mu_0\), one reverses the roles of the two arms. Thus each mean potential outcome is a two-sample missing-outcome estimand of exactly the form analyzed in the main text.

For \(a\in\{0,1\}\), let \(m_a(X)\) be an arm-specific prediction score fit using only observations in arm \(a\). The standard arm-specific AIPW estimator is
\[
\hat\mu_a^{\mathrm{AIPW}}
:=
\mathbb P_n\{m_a(X)\}
+
\mathbb P_n\left[
\frac{\1(A=a)}{\hat\pi_a}\{Y-m_a(X)\}
\right].
\]

For \(a\in\{0,1\}\), let \(m_{n,a}^\star(X)\) denote the calibrated version of the score \(m_a(X)\). The calibrated estimator of \(\mu_a\) is then
\begin{equation}
\label{eq:rct-mua-plugin}
\hat\mu_a^{\mathrm{cal}}
:=
\mathbb P_n\{m_{n,a}^\star(X)\}
=
\hat\pi_a\,\mathbb P_{n,a}\{m_{n,a}^\star(X)\}
+
(1-\hat\pi_a)\,\mathbb P_{n,1-a}\{m_{n,a}^\star(X)\}.
\end{equation}
This is exactly the pooled plug-in estimator from the two-sample formulation, with one arm contributing outcomes and the other contributing only covariates. The same estimator also has the exact AIPW form
\begin{equation}
\label{eq:rct-mua-aipw}
\hat\mu_a^{\mathrm{cal}}
=
\mathbb P_n\{m_{n,a}^\star(X)\}
+
\mathbb P_n\left[
\frac{\1(A=a)}{\hat\pi_a}\{Y-m_{n,a}^\star(X)\}
\right].
\end{equation}
Because
\[
\mathbb P_n\left[
\frac{\1(A=a)}{\hat\pi_a}\{Y-m_{n,a}^\star(X)\}
\right]
=
\mathbb P_{n,a}\{Y-m_{n,a}^\star(X)\},
\]
the residual correction vanishes exactly whenever the calibrator includes the constant score and therefore enforces mean calibration within arm \(a\). This is the randomized-trial analogue of the exact augmented representation in \cref{thm:repr}.

The average treatment effect is then estimated by
\begin{equation}
\label{eq:rct-ate}
\hat\tau^{\mathrm{cal}}
:=
\hat\mu_1^{\mathrm{cal}}-\hat\mu_0^{\mathrm{cal}}.
\end{equation}
Equivalently,
\[
\hat\tau^{\mathrm{cal}}
=
\mathbb P_n\{\tilde m_1(X)-\tilde m_0(X)\}
+
\mathbb P_n\left[
\frac{A}{\hat\pi_1}\{Y-\tilde m_1(X)\}
-
\frac{1-A}{\hat\pi_0}\{Y-\tilde m_0(X)\}
\right].
\]
Its influence function is the difference of the two arm-specific AIPW influences,
\[
D_\tau(O)
=
\left\{\tilde m_1(X)-\mu_1+\frac{A}{\pi_1}\bigl(Y-\tilde m_1(X)\bigr)\right\}
-
\left\{\tilde m_0(X)-\mu_0+\frac{1-A}{\pi_0}\bigl(Y-\tilde m_0(X)\bigr)\right\}.
\]
Accordingly, Wald inference for the ATE may be based on the empirical variance of the estimated influence values divided by \(n\), or on the nonparametric bootstrap refitting the calibration step within each bootstrap sample.

The practical implementation is therefore immediate. One splits the trial into \(\mathcal D_{n,1}\) and \(\mathcal D_{n,0}\), treats one arm as labeled and the other as unlabeled when estimating a given mean potential outcome, fits and calibrates an arm-specific score, averages the calibrated predictions over the pooled covariate sample, and finally subtracts the two arm-specific estimates to obtain the ATE. In this way, the semisupervised mean estimators in the main text become direct estimators of mean potential outcomes and treatment effects in randomized experiments.  

\section{General Moment Equations Under Missing at Random}
\label{appendix::momentgeneral}
We briefly review the general AIPW class for moment-equation targets under missing at random, following \citet{robins1994estimation,robins1995analysis,robins1995semiparametric,vanderlaanunified}. Let
\[
O=(X,R,RY),
\]
where \(R\in\{0,1\}\) indicates whether \(Y\) is observed, and suppose
\[
R \perp Y \mid X,
\qquad
\pi_0(X):=\Prob(R=1\mid X)>0 \ \text{a.s.}
\]
Let the target \(\theta_0\in\Theta\subseteq\R^d\) be identified by
\[
\E_{P_0}[\varphi(Y,\theta_0)]=0,
\]
where \(\varphi:\mathcal Y\times\Theta\to\R^d\) is a known identifying function. Define
\[
A_0:=\partial_\theta \E_{P_0}[\varphi(Y,\theta)]\big|_{\theta=\theta_0},
\]
and assume that \(A_0\) is nonsingular.

\begin{theorem}[AIPW class for general moment equations under MAR]
\label{thm:general-moment-aipw}
For any square-integrable function \(m:\mathcal X\times\Theta\to\R^d\), define
\begin{equation}
\label{eq:general-aipw-eq}
\psi(O;\theta,m,\pi_0)
:=
m(X,\theta)+\frac{R}{\pi_0(X)}\{\varphi(Y,\theta)-m(X,\theta)\}.
\end{equation}
Then
\[
\E_{P_0}[\psi(O;\theta_0,m,\pi_0)]=0.
\]
Hence \(\psi(O;\theta,m,\pi_0)\) defines an unbiased AIPW estimating function for \(\theta_0\), and \(\hat\theta\) is obtained by solving
\[
\mathbb P_n \psi(O;\hat\theta,m,\pi_0)=0.
\]

Moreover, let \(\hat\theta\) be a regular \(\sqrt n\)-consistent solution to this equation, and assume that \(\theta\mapsto\E_{P_0}[\psi(O;\theta,m,\pi_0)]\) is differentiable at \(\theta_0\) with derivative \(A_0\), and that
\[
(\mathbb P_n-P_0)\bigl[\psi(O;\hat\theta,m,\pi_0)-\psi(O;\theta_0,m,\pi_0)\bigr]
=
o_p\!\bigl(n^{-1/2}+\|\hat\theta-\theta_0\|\bigr).
\]
Then \(\hat\theta\) has influence function
\begin{equation}
\label{eq:general-if}
D_m(O)
=
-A_0^{-1}\psi(O;\theta_0,m,\pi_0)
=
-A_0^{-1}\left[
m(X,\theta_0)+\frac{R}{\pi_0(X)}\{\varphi(Y,\theta_0)-m(X,\theta_0)\}
\right].
\end{equation}
Thus, the class of regular Z-estimators arising from \eqref{eq:general-aipw-eq} is indexed by the choice of augmentation function \(m\).

The efficient choice in the full MAR model is
\begin{equation}
\label{eq:general-efficient-m}
m_0(X,\theta_0):=\E_{P_0}[\varphi(Y,\theta_0)\mid X],
\end{equation}
which yields the efficient influence function
\begin{equation}
\label{eq:general-eif}
D_{\mathrm{eff}}(O)
=
-A_0^{-1}\left[
m_0(X,\theta_0)+\frac{R}{\pi_0(X)}\{\varphi(Y,\theta_0)-m_0(X,\theta_0)\}
\right].
\end{equation}
\end{theorem}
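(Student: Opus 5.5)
We prove the three claims of Theorem~\ref{thm:general-moment-aipw} in turn: unbiasedness, the influence-function expansion, and efficiency of \(m_0\).

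\textbf{Unbiasedness.} We condition on \(X\). Since \(R\perp Y\mid X\) and \(\E[R\mid X]=\pi_0(X)>0\), the tower property gives
\[
\E\bigl[R\{\varphi(Y,\theta_0)-m(X,\theta_0)\}/\pi_0(X)\mid X\bigr]=\E[\varphi(Y,\theta_0)\mid X]-m(X,\theta_0).
\]
Adding \(m(X,\theta_0)\) and taking outer expectations yields \(\E_{P_0}[\psi(O;\theta_0,m,\pi_0)]=\E_{P_0}[\varphi(Y,\theta_0)]=0\). The same computation at a general \(\theta\) gives \(\E_{P_0}[\psi(O;\theta,m,\pi_0)]=\E_{P_0}[\varphi(Y,\theta)]\) for every \(\theta\). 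Hence the population map \(\theta\mapsto P_0\psi(\cdot;\theta)\) does not depend on \(m\), and its derivative at \(\theta_0\) is exactly \(A_0\). This justifies the differentiability hypothesis with the stated \(A_0\).

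\textbf{Influence function.} We use the standard Z-estimator expansion. Write
\[
0=\mathbb P_n\psi(\hat\theta)=(\mathbb P_n-P_0)\psi(\theta_0)+(\mathbb P_n-P_0)\{\psi(\hat\theta)-\psi(\theta_0)\}+P_0\psi(\hat\theta).
\]
By the stochastic equicontinuity assumption, the middle term is \(o_p(n^{-1/2}+\|\hat\theta-\theta_0\|)\). By differentiability and \(P_0\psi(\theta_0)=0\), the last term is \(A_0(\hat\theta-\theta_0)+o(\|\hat\theta-\theta_0\|)\). Because \(\hat\theta\) is \(\sqrt n\)-consistent, all remainders are \(o_p(n^{-1/2})\). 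Inverting the nonsingular \(A_0\) gives
\[
\hat\theta-\theta_0=-A_0^{-1}(\mathbb P_n-P_0)\psi(\cdot;\theta_0,m,\pi_0)+o_p(n^{-1/2}),
\]
which is \eqref{eq:general-if}.

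\textbf{Efficiency.} This step carries the main work: we must show that \(m_0\) minimizes the variance over the class and that the resulting influence function is the EIF. Write \(\psi_m=\psi_{m_0}+(m-m_0)(1-R/\pi_0)\). The difference term is orthogonal to \(\psi_{m_0}\), because \(\E[(1-R/\pi_0)\mid X,Y]=0\) while \(m-m_0\) depends only on \(X\). This uses \(R\perp Y\mid X\), which implies \(\E[R\mid X,Y]=\pi_0(X)\). It follows that
\[
\Var(D_m)=\Var(D_{m_0})+A_0^{-1}\E\bigl[(1-R/\pi_0)^2(m-m_0)(m-m_0)^\top\bigr]A_0^{-\top},
\]
so \(m_0\) minimizes the variance in the positive semidefinite order.

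It then remains to identify \(D_{m_0}\) as the canonical gradient in the MAR model. One route is to cite the characterization of \citet{robins1994estimation}, namely that all gradients have the form \(D_m\) and the EIF is the one lying in the tangent space. Alternatively, we can verify directly that \(D_{m_0}\) is a gradient and lies in the tangent space. For pathwise differentiability, perturb \(\theta_0(P_\epsilon)\) along scores \(s=s_X+s_{Y\mid X}R+s_{R\mid X}\), and check that \(\partial_\epsilon\theta_0=\E[D_{m_0}s]\) via implicit differentiation of \(\E_{P_\epsilon}[\varphi(Y,\theta)]=0\). For the tangent-space property, note that \(D_{m_0}\) decomposes as a function of \(X\) plus \(R\times{}\)(mean-zero function of \((X,Y)\) given \(X\)). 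This decomposition is the delicate part, and I would rely on the known MAR tangent-space decomposition rather than re-deriving it.
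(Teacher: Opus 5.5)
Your unbiasedness computation and Z-estimator expansion are the same as the paper's, including the observation that $\E_{P_0}[\psi(O;\theta,m,\pi_0)]=\E_{P_0}[\varphi(Y,\theta)]$ for every $\theta$, which pins down the derivative $A_0$. For identifying $D_{m_0}$ as the canonical gradient you, like the paper, defer to the MAR literature, which is fine. The problem is the orthogonality step in the efficiency argument.

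Write $\delta(X):=m(X,\theta_0)-m_0(X,\theta_0)$. You justify $\E[\psi_{m_0}\,\delta(X)^\top(1-R/\pi_0)]=0$ by noting that $\E[1-R/\pi_0\mid X,Y]=0$ and that $\delta$ depends only on $X$. But $\psi_{m_0}=m_0(X,\theta_0)+(R/\pi_0)\{\varphi(Y,\theta_0)-m_0(X,\theta_0)\}$ itself depends on $R$, so it cannot be pulled outside a conditional expectation given $(X,Y)$. In fact, since $R^2=R$, we have $(R/\pi_0)(1-R/\pi_0)=-R(1-\pi_0)/\pi_0^2$. Its conditional mean given $(X,Y)$ is $-(1-\pi_0)/\pi_0$, which is nonzero whenever $\pi_0<1$.

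Your justification never uses any property of $m_0$. It would therefore equally ``show'' that $\psi_{m'}$ is orthogonal to $(m-m')(1-R/\pi_0)$ for every $m'$. That would make every augmentation variance-minimizing, which is false.

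The fix, which is what the paper does, is to split the cross term in two. The piece $m_0(X,\theta_0)\,\delta(X)^\top(1-R/\pi_0)$ vanishes because $\E[1-R/\pi_0\mid X]=0$. For the other piece, $R\perp Y\mid X$ gives
\[
\E\Bigl[\tfrac{R}{\pi_0}\bigl(1-\tfrac{R}{\pi_0}\bigr)\{\varphi(Y,\theta_0)-m_0(X,\theta_0)\}\,\delta(X)^\top\Bigr]
=-\E\Bigl[\tfrac{1-\pi_0(X)}{\pi_0(X)}\,\E\{\varphi(Y,\theta_0)-m_0(X,\theta_0)\mid X\}\,\delta(X)^\top\Bigr],
\]
which is zero precisely because $m_0(X,\theta_0)=\E_{P_0}[\varphi(Y,\theta_0)\mid X]$. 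That defining property of $m_0$ is the missing idea. Once it is inserted, your positive-semidefinite variance decomposition (using $\E[(1-R/\pi_0)^2\mid X]=(1-\pi_0)/\pi_0$) is correct. It is equivalent to the paper's contrast-by-contrast statement.
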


\begin{proof}[Proof of \cref{thm:general-moment-aipw}]
For any \(\theta\in\Theta\),
\[
\begin{aligned}
\E_{P_0}[\psi(O;\theta,m,\pi_0)]
&=
\E_{P_0}\left[
m(X,\theta)+\frac{R}{\pi_0(X)}\{\varphi(Y,\theta)-m(X,\theta)\}
\right]\\
&=
\E_{P_0}\left[
m(X,\theta)+\frac{\E_{P_0}[R\mid X,Y]}{\pi_0(X)}\{\varphi(Y,\theta)-m(X,\theta)\}
\right]\\
&=
\E_{P_0}\left[
m(X,\theta)+\frac{\E_{P_0}[R\mid X]}{\pi_0(X)}\{\varphi(Y,\theta)-m(X,\theta)\}
\right]\\
&=
\E_{P_0}\bigl[\varphi(Y,\theta)\bigr],
\end{aligned}
\]
where the third line uses \(R\perp Y\mid X\). Evaluating at \(\theta=\theta_0\) gives
\[
\E_{P_0}[\psi(O;\theta_0,m,\pi_0)]
=
\E_{P_0}\bigl[\varphi(Y,\theta_0)\bigr]
=
0.
\]

Let
\[
\Psi_m(\theta):=\E_{P_0}[\psi(O;\theta,m,\pi_0)].
\]
The previous calculation shows that \(\Psi_m(\theta)=\E_{P_0}[\varphi(Y,\theta)]\) for every \(\theta\), so
\[
\partial_\theta \Psi_m(\theta)\big|_{\theta=\theta_0}=A_0.
\]
Because \(\hat\theta\) solves the empirical estimating equation,
\[
0
=
\mathbb P_n \psi(O;\hat\theta,m,\pi_0)
=
(\mathbb P_n-P_0)\psi(O;\theta_0,m,\pi_0)
+
\{\Psi_m(\hat\theta)-\Psi_m(\theta_0)\}
+
(\mathbb P_n-P_0)\bigl[\psi(O;\hat\theta,m,\pi_0)-\psi(O;\theta_0,m,\pi_0)\bigr].
\]
By differentiability of \(\Psi_m\) at \(\theta_0\),
\[
\Psi_m(\hat\theta)-\Psi_m(\theta_0)
=
A_0(\hat\theta-\theta_0)+o(\|\hat\theta-\theta_0\|).
\]
Since \(\hat\theta\) is \(\sqrt n\)-consistent, the previous display and the stated remainder condition imply
\[
A_0(\hat\theta-\theta_0)
=
-(\mathbb P_n-P_0)\psi(O;\theta_0,m,\pi_0)+o_p(n^{-1/2}).
\]
Multiplying by \(-A_0^{-1}\) gives
\[
\hat\theta-\theta_0
=
(\mathbb P_n-P_0)D_m(O)+o_p(n^{-1/2}),
\]
with \(D_m\) given in \cref{eq:general-if}. This proves the influence-function representation.

Now define
\[
m_0(X,\theta_0):=\E_{P_0}[\varphi(Y,\theta_0)\mid X],
\qquad
\delta_m(X):=m(X,\theta_0)-m_0(X,\theta_0).
\]
Then
\[
D_m(O)
=
D_{\mathrm{eff}}(O)-A_0^{-1}\left(1-\frac{R}{\pi_0(X)}\right)\delta_m(X).
\]
Fix any \(u\in\R^d\). Because \(\delta_m(X)\) is measurable with respect to \(X\),
\[
\E_{P_0}\!\left[\left(1-\frac{R}{\pi_0(X)}\right)\delta_m(X)\,\middle|\,X\right]=0.
\]
Also, using \(R\perp Y\mid X\) and \(\E_{P_0}[\varphi(Y,\theta_0)-m_0(X,\theta_0)\mid X]=0\),
\[
\E_{P_0}\!\left[
\left.
\frac{R}{\pi_0(X)}
\left(1-\frac{R}{\pi_0(X)}\right)
\{\varphi(Y,\theta_0)-m_0(X,\theta_0)\}
\right|X
\right]
=
0.
\]
Therefore, the cross-term vanishes:
\[
\E_{P_0}\!\left[
\bigl\{u^\top D_{\mathrm{eff}}(O)\bigr\}
\left(1-\frac{R}{\pi_0(X)}\right)
\bigl\{u^\top A_0^{-1}\delta_m(X)\bigr\}
\right]
=
0.
\]
Moreover,
\[
\E_{P_0}\!\left[
\left.
\left(1-\frac{R}{\pi_0(X)}\right)^2
\right|X
\right]
=
\frac{1-\pi_0(X)}{\pi_0(X)}.
\]
Expanding the variance of \(u^\top D_m(O)\) therefore yields
\[
\Var[u^\top D_m(O)]
=
\Var[u^\top D_{\mathrm{eff}}(O)]
+
\E_{P_0}\!\left[
\frac{1-\pi_0(X)}{\pi_0(X)}
\bigl\{u^\top A_0^{-1}\delta_m(X)\bigr\}^2
\right].
\]
The second term is nonnegative and equals zero if and only if \(\delta_m(X)=0\) almost surely. Hence \(m_0(X,\theta_0)\) minimizes the asymptotic variance of every linear contrast, so \(D_{\mathrm{eff}}\) is the efficient influence function within this class. The missing-data efficiency results of \citet{robins1994estimation,robins1995analysis,robins1995semiparametric,vanderlaanunified} identify this same influence function as the semiparametric efficient influence function in the full MAR model.
\end{proof}

\begin{corollary}[Two-sample/PPI form]
Suppose the labeled and unlabeled samples arise from the semisupervised model with constant labeling probability \(\rho_0\), and let \(S=f(X)\) be a chosen score. Applying \cref{thm:general-moment-aipw} with covariate \(S=f(X)\) yields the reduced AIPW class
\[
\psi(O;\theta,m,\rho_0)
=
m(S,\theta)+\frac{R}{\rho_0}\{\varphi(Y,\theta)-m(S,\theta)\}.
\]
In the two-sample notation used in the main text, the empirical estimating equation based on the known labeled fraction \(\rho_n=n/(n+N)\) can be written as
\[
0=
\rho_n \Pl\{m(S,\theta)\}
+
(1-\rho_n)\Pu\{m(S,\theta)\}
+
\Pl\{\varphi(Y,\theta)-m(S,\theta)\}.
\]
The efficient choice within the reduced score model is
\[
m_0(S,\theta_0)=\E[\varphi(Y,\theta_0)\mid S].
\]
\end{corollary}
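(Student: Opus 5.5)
The plan is to reduce the corollary to \cref{thm:general-moment-aipw} by taking the score \(S=f(X)\) as the covariate. First I will check that the reduced data structure satisfies the theorem's assumptions. Define \(O=(S,R,RY)\) with \(R\) the labeling indicator. In the semisupervised model, labeling is completely at random with constant probability \(\rho_0\), so \(R\perp (X,Y)\). It follows that \(R\perp Y\mid S\) and \(\pi_0(S)=\Prob(R=1\mid S)=\rho_0>0\). The MAR and positivity conditions of \cref{thm:general-moment-aipw} therefore hold with \(S\) in place of \(X\). Any square-integrable \(m(S,\theta)\) is an admissible augmentation function, which gives the displayed reduced AIPW class. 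Its unbiasedness and influence-function representation are inherited directly from the theorem.

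Next I will translate the empirical equation into two-sample notation. The pooled i.i.d.\ formulation (Appendix~\ref{app:iid}) identifies the pooled sample of size \(M=n+N\) with i.i.d.\ draws of \(O\), where \(R_i=1\) exactly for the labeled units. The pooled empirical mean of \(m(S,\theta)\) then splits as
\[
\rho_n\Pl\{m(S,\theta)\}+(1-\rho_n)\Pu\{m(S,\theta)\}.
\]
Replacing \(\rho_0\) by \(\rho_n=n/M\), the pooled mean of \(R\rho_n^{-1}\{\varphi(Y,\theta)-m(S,\theta)\}\) equals \(M^{-1}\rho_n^{-1}\sum_{i\le n}\{\cdot\}=\Pl\{\varphi(Y,\theta)-m(S,\theta)\}\). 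Setting the sum of these two pieces to zero yields the stated equation.

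Finally, for the efficient choice I will apply \eqref{eq:general-efficient-m} in the reduced model. There the covariate is \(S\), so the efficient augmentation is \(\E[\varphi(Y,\theta_0)\mid S]\). The variance decomposition in the proof of \cref{thm:general-moment-aipw} carries over verbatim with \(\pi_0\equiv\rho_0\). The excess variance becomes \(\frac{1-\rho_0}{\rho_0}\E[\{u^\top A_0^{-1}\delta_m(S)\}^2]\), which is minimized at \(\delta_m=0\).

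The step needing the most care is the substitution of the known-but-sample-based \(\rho_n\) for \(\rho_0\), together with the identification of the two-sample design with the pooled i.i.d.\ model, where the sample sizes are fixed rather than binomial. I would handle it as follows. The estimating equation is stated with \(\rho_n\) and is purely algebraic, so the identity itself needs no asymptotics. The efficiency claim concerns the population choice of \(m\) and uses only \(\rho_0\) through the limit \(\rho_n\to\rho_0\) in \cref{assump:design}. If a first-order statement is desired, the \(O_p(M^{-1/2})\) difference \(\rho_n-\rho_0\) multiplies a mean-zero residual term and so contributes only \(o_p(M^{-1/2})\).
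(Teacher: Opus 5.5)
Your proposal is correct and follows the paper's proof essentially step for step: $R\perp(X,Y)$ gives $R\perp Y\mid S$ and $\Prob(R=1\mid S)=\rho_0$, \cref{thm:general-moment-aipw} is applied with covariate $S$, and the pooled empirical means are split using $R=1$ exactly on the labeled units. One caution about your optional last paragraph, which the corollary does not need: at $\theta_0$ the residual $\varphi(Y,\theta_0)-m(S,\theta_0)$ has mean $-\E[m(S,\theta_0)]$, which need not vanish for a general augmentation, so swapping $\rho_0$ for $\rho_n$ is first-order negligible only after recentering $m$ so that $\E[m(S,\theta_0)]=0$ (harmless for the $\rho_n$-based equation, which is invariant to adding constants to $m$, exactly as with $\tilde f$ in Proposition~\ref{prop:aipw-class}).
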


\begin{proof}
Because the semisupervised design has constant labeling probability, \(R\) is independent of \((X,Y)\), hence also independent of \(Y\) given \(S=f(X)\), and
\[
\Prob(R=1\mid S)=\rho_0.
\]
Applying \cref{thm:general-moment-aipw} with covariate \(S=f(X)\) and observation probability \(\pi_0(S)\equiv\rho_0\) yields the displayed reduced AIPW class and the efficient choice
\[
m_0(S,\theta_0)=\E[\varphi(Y,\theta_0)\mid S].
\]

At the sample level, the two-sample implementation uses the known design fraction \(\rho_n=n/(n+N)\). To rewrite the empirical equation in the two-sample notation, note that
\[
\mathbb P_M\{m(S,\theta)\}
=
\rho_n \Pl\{m(S,\theta)\}
+
(1-\rho_n)\Pu\{m(S,\theta)\},
\]
where \(\mathbb P_M\) denotes the empirical mean over the pooled sample. Also,
\[
\mathbb P_M\!\left[\frac{R}{\rho_n}\{\varphi(Y,\theta)-m(S,\theta)\}\right]
=
\Pl\{\varphi(Y,\theta)-m(S,\theta)\},
\]
because \(R=1\) on labeled observations and \(R=0\) on unlabeled observations. Adding the two displays yields exactly
\[
0=
\rho_n \Pl\{m(S,\theta)\}
+
(1-\rho_n)\Pu\{m(S,\theta)\}
+
\Pl\{\varphi(Y,\theta)-m(S,\theta)\}.
\]
\end{proof}

\section{Proof of Theorem \ref{thm:repr}}
\label{app:repr}

\begin{proof}[Proof of Theorem \ref{thm:repr}]
Because \(\mathcal F\) is linear and contains the constant functions, \cref{eq:cal-score} implies that \(m_n^\star\) is empirically calibrated over \(\mathcal F\), and in particular
\[
\Pl\!\left\{Y-m_n^\star(X)\right\}=0.
\]
Hence, by the exact AIPW representation established in \eqref{eqn::aipwcal},
\begin{equation}
\label{eq:repr-pop-proof-start}
\psihat
=
\rhon \Pl\!\left\{m_n^\star(X)\right\}
+ (1-\rhon)\Pu\!\left\{m_n^\star(\widetilde X)\right\}
+ \Pl\!\left\{Y-m_n^\star(X)\right\}.
\end{equation}

Now add and subtract \(m_{0,\mathcal F}^\dagger\) inside each empirical mean:
\begin{align*}
\psihat
&=
\rhon \Pl\!\left\{m_{0,\mathcal F}^\dagger(X)\right\}
+ (1-\rhon)\Pu\!\left\{m_{0,\mathcal F}^\dagger(\widetilde X)\right\}
+ \Pl\!\left\{Y-m_{0,\mathcal F}^\dagger(X)\right\} \\
&\qquad
+ \rho_n \Pl\!\left\{m_n^\star(X)-m_{0,\mathcal F}^\dagger(X)\right\}
+ (1-\rho_n)\Pu\!\left\{m_n^\star(\widetilde X)-m_{0,\mathcal F}^\dagger(\widetilde X)\right\} \\
&\qquad
- \Pl\!\left\{m_n^\star(X)-m_{0,\mathcal F}^\dagger(X)\right\}.
\end{align*}
Combining the last three terms gives
\begin{align*}
\psihat
&=
\rhon \Pl\!\left\{m_{0,\mathcal F}^\dagger(X)\right\}
+ (1-\rhon)\Pu\!\left\{m_{0,\mathcal F}^\dagger(\widetilde X)\right\}
+ \Pl\!\left\{Y-m_{0,\mathcal F}^\dagger(X)\right\} \\
&\qquad
+ (1-\rhon)\Pu\!\left\{m_n^\star(\widetilde X)-m_{0,\mathcal F}^\dagger(\widetilde X)\right\}
- (1-\rhon)\Pl\!\left\{m_n^\star(X)-m_{0,\mathcal F}^\dagger(X)\right\}.
\end{align*}
Therefore,
\begin{align*}
\psihat
&=
\rhon \Pl\!\left\{m_{0,\mathcal F}^\dagger(X)\right\}
+ (1-\rhon)\Pu\!\left\{m_{0,\mathcal F}^\dagger(\widetilde X)\right\}
+ \Pl\!\left\{Y-m_{0,\mathcal F}^\dagger(X)\right\} \\
&\qquad
+ (1-\rhon)(\Pu-\Pl)\{m_n^\star-m_{0,\mathcal F}^\dagger\},
\end{align*}
which is the claimed representation.
\end{proof}

 \begin{theorem}[Exact AIPW representation under empirical calibration]
\label{thm:repr2}
Suppose the calibrated predictions satisfy \cref{eq:cal-score}, where \(\mathcal F\) contains all affine maps. Then, for any \(\what\in\mathcal F\),
\begin{equation}
\label{eq:abw-general}
\psihat
=
\rhon \Pl\{\mtilde(X)\}+(1-\rhon)\Pu\{\mtilde(\widetilde X)\}
+
\Pl\bigl[\what(\mtilde(X))\{Y-\mtilde(X)\}\bigr].
\end{equation}
In particular, taking \(\what\equiv 1\) yields the exact AIPW representation
\[
\psihat
=
\rhon \Pl\{\mtilde(X)\}+(1-\rhon)\Pu\{\mtilde(\widetilde X)\}
+
\Pl\{Y-\mtilde(X)\}.
\]
If, in addition, \(\what_n^\star\in\mathcal F\) satisfies
\begin{equation}
\label{eq:F-balance}
\Pl\bigl[\what_n^\star(\mtilde(X))\,f(\mtilde(X))\bigr]
=
\rhon \Pl\{f(\mtilde(X))\}
+
(1-\rhon)\Pu\{f(\mtilde(\widetilde X))\},
\qquad
f\in\mathcal F,
\end{equation}
then
\begin{equation}
\label{eq:abw-general-balance}
\psihat
=
\Pl\bigl[\what_n^\star(\mtilde(X))Y\bigr].
\end{equation}
\end{theorem}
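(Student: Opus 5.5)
The argument is purely algebraic. It uses only the orthogonality condition \eqref{eq:cal-score} and the definition \eqref{eq:plugin} of \(\psihat\).

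\emph{Step 1: the general representation \eqref{eq:abw-general}.} Fix \(\what\in\mathcal F\) and apply \eqref{eq:cal-score} with \(f=\what\). This gives
\[
\Pl\bigl[\what(\mtilde(X))\{Y-\mtilde(X)\}\bigr]=0 .
\]
Adding this zero term to the pooled plug-in form \(\psihat=\rhon\Pl\{\mtilde(X)\}+(1-\rhon)\Pu\{\mtilde(\widetilde X)\}\) yields \eqref{eq:abw-general}. Since \(\mathcal F\) contains all affine maps, the constant map \(\what\equiv1\) is admissible. This gives the exact AIPW representation, in agreement with \eqref{eqn::aipwcal}.

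\emph{Step 2: the balancing representation \eqref{eq:abw-general-balance}.} Suppose \(\what_n^\star\in\mathcal F\) satisfies \eqref{eq:F-balance}. The identity map \(t\mapsto t\) lies in \(\mathcal F\), so I would apply \eqref{eq:F-balance} with \(f=\mathrm{id}\). This gives
\[
\Pl\bigl[\what_n^\star(\mtilde(X))\,\mtilde(X)\bigr]=\rhon\Pl\{\mtilde(X)\}+(1-\rhon)\Pu\{\mtilde(\widetilde X)\}=\psihat .
\]
Next, apply Step 1 with \(\what=\what_n^\star\) and expand the augmentation term:
\[
\psihat=\psihat+\Pl\bigl[\what_n^\star(\mtilde(X))Y\bigr]-\Pl\bigl[\what_n^\star(\mtilde(X))\mtilde(X)\bigr].
\]
Substituting the previous display for the last term gives \(\psihat=\Pl[\what_n^\star(\mtilde(X))Y]\).

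Equivalently, one can argue directly. By \eqref{eq:cal-score} with \(f=\what_n^\star\),
\[
\Pl[\what_n^\star(\mtilde)Y]=\Pl[\what_n^\star(\mtilde)\mtilde].
\]
The balancing condition with the identity function then equates the right-hand side with the pooled mean \(\psihat\).

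\emph{Main obstacle.} There is no analytic difficulty. The only point to verify is that the two test functions used, the constant \(1\) and the identity, both belong to \(\mathcal F\). This is guaranteed by the assumption that \(\mathcal F\) contains all affine maps, and it is exactly why that assumption is stronger than the "contains constants" condition used in Theorem~\ref{thm:repr}. Linearity of \(\mathcal F\) is not needed beyond this, because the argument evaluates \eqref{eq:cal-score} and \eqref{eq:F-balance} only at specific members of \(\mathcal F\).
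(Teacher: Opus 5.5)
Your proposal is correct and follows essentially the same route as the paper. Like the paper, you first add the zero term from \eqref{eq:cal-score} with $f=\what$ to the pooled plug-in form, and then combine \eqref{eq:F-balance} at the identity map with \eqref{eq:cal-score} at $f=\what_n^\star$ to obtain the balancing-weight representation; your "direct" variant is exactly the paper's argument.
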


Theorem~\ref{thm:repr} gives two useful exact representations of the calibrated PPI estimator. First, \(\psihat\) is exactly an AIPW estimator with regression adjustment \(\mtilde\). In particular, when \(\what\equiv 1\), the residual correction is exactly \(\Pl\{Y-\mtilde(X)\}\), so the plug-in estimator can be written in standard debiased form. Second, under \eqref{eq:F-balance}, the same estimator admits an equivalent balancing-weight representation, $\psihat = \Pl\bigl[\what_n^\star(\mtilde(X))Y\bigr],$
so it can also be viewed as a weighted average of the labeled outcomes, with the unlabeled sample affecting the estimator only through the weights.

\begin{proof}[Proof of \cref{thm:repr2}]
Because \cref{eq:cal-score} holds for every \(h\in\mathcal F\), any empirical balancing weight \(\what(\mtilde(X))\in\mathcal F\) satisfies
\[
\Pl\bigl[\what(\mtilde(X))\{Y-\mtilde(X)\}\bigr]=0.
\]
Adding this zero term to the plug-in estimator \cref{eq:plugin} yields \cref{eq:abw-general}.

For the second claim, since the identity map belongs to \(\mathcal F\), taking \(f(t)=t\) in \cref{eq:F-balance} gives
\[
\Pl\bigl[\what_n^\star(\mtilde(X))\,\mtilde(X)\bigr]
=
\rhon \Pl\{\mtilde(X)\}
+
(1-\rhon)\Pu\{\mtilde(\widetilde X)\}
=
\psihat.
\]
Since \(\what_n^\star\in\mathcal F\), \cref{eq:cal-score} also gives
\[
\Pl\bigl[\what_n^\star(\mtilde(X))\{Y-\mtilde(X)\}\bigr]=0.
\]
Therefore,
\[
\Pl\bigl[\what_n^\star(\mtilde(X))Y\bigr]
=
\Pl\bigl[\what_n^\star(\mtilde(X))\,\mtilde(X)\bigr]
+
\Pl\bigl[\what_n^\star(\mtilde(X))\{Y-\mtilde(X)\}\bigr]
=
\psihat,
\]
which is \cref{eq:abw-general-balance}.
\end{proof}

\begin{lemma}[Blockwise residual orthogonality]
\label{lem:block}
Let \(B_1,\dots,B_J\) be the pooled adjacent violator blocks of the isotonic fit, and let \(m_{n,\mathrm{iso}}^\star(X_i)=c_j\) for \(i \in B_j\). Then
\[
\sum_{i \in B_j}(Y_i-c_j)=0,
\qquad j=1,\dots,J.
\]
Consequently, for every measurable function \(h:\R \to \R\) such that \(h(m_{n,\mathrm{iso}}^\star(X_i))\) is finite for all \(i\),
\[
\sum_{i=1}^n h(m_{n,\mathrm{iso}}^\star(X_i))\{Y_i-m_{n,\mathrm{iso}}^\star(X_i)\}=0.
\]
\end{lemma}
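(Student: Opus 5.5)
The plan is to derive the blockwise identity from the first-order optimality of the isotonic fit within each PAVA block, and then obtain the general orthogonality by regrouping the sum block by block.

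\textbf{Step 1 (blockwise mean condition).} Order the labeled scores \(T_i=\mhat(X_i)\). The isotonic least-squares problem is then a convex quadratic program over vectors \(\theta\in\R^n\) that are nondecreasing along this order, with ties in \(T_i\) forced to share a common value. Its solution is constant on each block \(B_j\), with value \(c_j\), and \(c_1<c_2<\dots<c_J\) strictly. Strictness holds because adjacent blocks with equal values would be merged into one block.

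Fix a block \(B_j\) and consider the perturbation \(\theta_i=c_j+\varepsilon\) for \(i\in B_j\), leaving all other coordinates unchanged. Since the inequalities \(c_{j-1}<c_j<c_{j+1}\) are strict, this perturbation stays in the monotone cone for all sufficiently small \(|\varepsilon|\), whether positive or negative. Optimality of the fit then forces the derivative in \(\varepsilon\) at \(0\) to vanish:
\[
-2\sum_{i\in B_j}(Y_i-c_j)=0.
\]
Equivalently, \(c_j\) is the average of the \(Y_i\) in \(B_j\), which is also the familiar PAVA characterization of block values. This is the key step. The only care needed is to justify the two-sided perturbation, which relies on the strict separation of block values and on blocks containing all tied scores.

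\textbf{Step 2 (general \(h\)).} For any \(h\) that is finite at the fitted values, \(h(m_{n,\mathrm{iso}}^\star(X_i))=h(c_j)\) for every \(i\in B_j\). The blocks partition \(\{1,\dots,n\}\), so we can regroup the sum:
\[
\sum_{i=1}^n h(m_{n,\mathrm{iso}}^\star(X_i))\{Y_i-m_{n,\mathrm{iso}}^\star(X_i)\}
=\sum_{j=1}^J h(c_j)\sum_{i\in B_j}(Y_i-c_j).
\]
Each inner sum is zero by Step 1, so the whole expression vanishes. Measurability of \(h\) plays no role here, because the statement is a finite algebraic identity.

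As a remark, the same argument yields \cref{eq:orth} in Proposition~\ref{prop:iso-repr}. It also shows that \(m_{n,\mathrm{iso}}^\star\) satisfies \cref{eq:cal-score} with \(\mathcal F\) unrestricted, and this is what lets Theorem~\ref{thm:repr} be applied to the isotonic case.
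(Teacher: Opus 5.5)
Your proposal is correct and follows the paper's proof. The paper also invokes the KKT (first-order) conditions of the isotonic least-squares problem to get zero residual sums on each block, then regroups the sum as \(\sum_{j} h(c_j)\sum_{i\in B_j}(Y_i-c_j)\); your two-sided perturbation of a block's common value is an explicit check of that KKT condition, and your strict-separation step amounts to taking the blocks to be maximal level sets of the fit, which is all the second claim needs (adjacent equal-valued PAVA blocks left unmerged would still each have zero residual sum, since PAVA sets each block's value to its block mean).
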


\begin{proof}
The fitted values from isotonic regression are constant on pooled adjacent violator blocks. The KKT conditions imply that residuals sum to zero on each block. Since \(h(m_{n,\mathrm{iso}}^\star(X_i))=h(c_j)\) for all \(i \in B_j\),
\[
\sum_{i=1}^n h(m_{n,\mathrm{iso}}^\star(X_i))\{Y_i-m_{n,\mathrm{iso}}^\star(X_i)\}
=
\sum_{j=1}^J h(c_j)\sum_{i \in B_j}(Y_i-c_j)
=
0.
\]
\end{proof}

\begin{proof}[Proof of \cref{prop:iso-repr}]
The orthogonality claim \cref{eq:orth} is exactly the conclusion of Lemma~\ref{lem:block}. For the risk comparison, note that the identity map belongs to \(\mathcal F_{\mathrm{iso}}\). Since \(\hat f\) minimizes the empirical squared loss over \(\mathcal F_{\mathrm{iso}}\),
\[
\Pl\bigl[(Y-m_{n,\mathrm{iso}}^\star(X))^2\bigr]
\le
\Pl\bigl[(Y-\hat f_{\mathrm{id}}(\mhat(X)))^2\bigr]
=
\Pl\bigl[(Y-\mhat(X))^2\bigr],
\]
where \(\hat f_{\mathrm{id}}(t)=t\).
\end{proof}

\begin{proof}[Proof of \cref{prop:linear}]
The normal equations for \cref{eq:linear-cal} are
\[
\Pl\{Y-m_{n,\mathrm{lin}}^\star(X)\}=0
\qquad\text{and}\qquad
\Pl\bigl[\mhat(X)\{Y-m_{n,\mathrm{lin}}^\star(X)\}\bigr]=0.
\]
Because \(m_{n,\mathrm{lin}}^\star(X)=\hat a_{\mathrm{lin}}\mhat(X)+\hat b_{\mathrm{lin}}\), these imply
\[
\Pl\bigl[m_{n,\mathrm{lin}}^\star(X)\{Y-m_{n,\mathrm{lin}}^\star(X)\}\bigr]=0.
\]
Hence, for any \(\hat w(X)=a+b\,m_{n,\mathrm{lin}}^\star(X)\),
\[
\Pl\bigl[\hat w(X)\{Y-m_{n,\mathrm{lin}}^\star(X)\}\bigr]
=
a\,\Pl\{Y-m_{n,\mathrm{lin}}^\star(X)\}
+
b\,\Pl\bigl[m_{n,\mathrm{lin}}^\star(X)\{Y-m_{n,\mathrm{lin}}^\star(X)\}\bigr]
=
0.
\]
Adding this zero term to \cref{eq:linear-plugin} yields \cref{eq:linear-ppi}.
\end{proof}

\section{Proofs of first-order equivalence between PPI++ and linear calibration}
\label{sec:ppi++}

Recall that
\[
(\hat a_{\mathrm{lin}},\hat b_{\mathrm{lin}})
\in
\arg\min_{a,b\in\R}\sum_{i=1}^n \{Y_i-a\,m(X_i)-b\}^2,
\]
and define the corresponding linear-calibration estimator by
\[
\widehat\psi_{\mathrm{lin}}
=
\rho_n \Pl\{\hat a_{\mathrm{lin}}m(X)+\hat b_{\mathrm{lin}}\}
+
(1-\rho_n)\Pu\{\hat a_{\mathrm{lin}}m(\widetilde X)+\hat b_{\mathrm{lin}}\}.
\]
Also let \(\widehat\psi_{++}\) denote the PPI++ estimator obtained by minimizing the empirical influence-function variance over the scaling class \(\{\lambda m(X):\lambda\in\R\}\), and let \(\hat\lambda_{++}\) be the resulting selected coefficient. Define
\[
\Delta_n:=\Pu\{m(\widetilde X)\}-\Pl\{m(X)\},
\]
and
\[
a_0:=\frac{\Cov(Y,m(X))}{\Var[m(X)]}.
\]

\begin{proof}[Proof of Proposition \ref{prop:ppi++-linear}]
The normal equations for the linear regression fit are
\[
\Pl\{Y-\hat a_{\mathrm{lin}}m(X)-\hat b_{\mathrm{lin}}\}=0
\]
and
\[
\Pl\bigl[m(X)\{Y-\hat a_{\mathrm{lin}}m(X)-\hat b_{\mathrm{lin}}\}\bigr]=0.
\]
The first equation gives
\[
\hat b_{\mathrm{lin}}=\Pl\{Y\}-\hat a_{\mathrm{lin}}\Pl\{m(X)\}.
\]
Substituting this into the second equation yields
\[
\hat a_{\mathrm{lin}}
=
\frac{\Pl\bigl[(m(X)-\Pl\{m(X)\})(Y-\Pl\{Y\})\bigr]}
{\Pl\bigl[(m(X)-\Pl\{m(X)\})^2\bigr]}.
\]
Hence \(\hat a_{\mathrm{lin}}\) is the ordinary least-squares slope, and under finite second moments,
\[
\hat a_{\mathrm{lin}}=a_0+O_p(n^{-1/2}).
\]

Using
\[
\hat b_{\mathrm{lin}}=\Pl\{Y\}-\hat a_{\mathrm{lin}}\Pl\{m(X)\},
\]
we obtain
\begin{align*}
\widehat\psi_{\mathrm{lin}}
&=
\rho_n \Pl\{\hat a_{\mathrm{lin}}m(X)+\hat b_{\mathrm{lin}}\}
+
(1-\rho_n)\Pu\{\hat a_{\mathrm{lin}}m(\widetilde X)+\hat b_{\mathrm{lin}}\}\\
&=
\hat b_{\mathrm{lin}}
+
\hat a_{\mathrm{lin}}
\Bigl[
\rho_n \Pl\{m(X)\}+(1-\rho_n)\Pu\{m(\widetilde X)\}
\Bigr]\\
&=
\Pl\{Y\}-\hat a_{\mathrm{lin}}\Pl\{m(X)\}
+
\hat a_{\mathrm{lin}}
\Bigl[
\rho_n \Pl\{m(X)\}+(1-\rho_n)\Pu\{m(\widetilde X)\}
\Bigr]\\
&=
\Pl\{Y\}
+
(1-\rho_n)\hat a_{\mathrm{lin}}
\bigl\{\Pu\{m(\widetilde X)\}-\Pl\{m(X)\}\bigr\}\\
&=
\Pl\{Y\} + (1-\rho_n)\hat a_{\mathrm{lin}}\Delta_n.
\end{align*}

On the other hand,
\begin{align*}
\widehat\psi_{++}(\hat\lambda_{++})
&=
\rho_n \Pl\{\hat\lambda_{++} m(X)\}
+
(1-\rho_n)\Pu\{\hat\lambda_{++} m(\widetilde X)\}
+
\Pl\{Y-\hat\lambda_{++} m(X)\}\\
&=
\Pl\{Y\}
+
(1-\rho_n)\hat\lambda_{++}
\bigl\{\Pu\{m(\widetilde X)\}-\Pl\{m(X)\}\bigr\}\\
&=
\Pl\{Y\} + (1-\rho_n)\hat\lambda_{++}\Delta_n.
\end{align*}
Therefore,
\[
\widehat\psi_{++}(\hat\lambda_{++})-\widehat\psi_{\mathrm{lin}}
=
(1-\rho_n)\bigl(\hat\lambda_{++}-\hat a_{\mathrm{lin}}\bigr)\Delta_n.
\]

Under the empirical PPI++ variance criterion over the class \(\{\lambda m(X):\lambda\in\R\}\), the selected coefficient satisfies
\[
\hat\lambda_{++}
=
\frac{
\Pl\bigl[(Y-\Pl\{Y\})(m(X)-\Pl\{m(X)\})\bigr]
}{
(1-\rho_n)\Pl\bigl[(m(X)-\Pl\{m(X)\})^2\bigr]
+
\rho_n\Pu\bigl[(m(\widetilde X)-\Pu\{m(\widetilde X)\})^2\bigr]
}.
\]
Its population limit is
\[
\lambda_0=\frac{\Cov(Y,m(X))}{\Var[m(X)]}=a_0,
\]
so standard \(M\)-estimation arguments yield
\[
\hat\lambda_{++}=a_0+O_p(n^{-1/2}).
\]
Combining this with \(\hat a_{\mathrm{lin}}=a_0+O_p(n^{-1/2})\) gives
\[
\hat\lambda_{++}-\hat a_{\mathrm{lin}}=O_p(n^{-1/2}).
\]

Next, since \(\Pl\{m(X)\}\) and \(\Pu\{m(\widetilde X)\}\) are sample means under the same marginal law of \(X\),
\[
\Delta_n
=
O_p(n^{-1/2}+N^{-1/2}).
\]
Under the standing assumption \(\rho_n=n/(n+N)\to\rho_0\in(0,1)\), we may write
\[
n^{-1/2}=\rho_n^{-1/2}(n+N)^{-1/2},
\qquad
N^{-1/2}=(1-\rho_n)^{-1/2}(n+N)^{-1/2},
\]
and hence
\[
\Delta_n
=
O_p\!\left(\left\{\rho_n^{-1/2}+(1-\rho_n)^{-1/2}\right\}(n+N)^{-1/2}\right).
\]
Therefore,
\begin{align*}
\widehat\psi_{++}(\hat\lambda_{++})-\widehat\psi_{\mathrm{lin}}
&=
(1-\rho_n)\bigl(\hat\lambda_{++}-\hat a_{\mathrm{lin}}\bigr)\Delta_n\\
&=
(1-\rho_n)\,O_p(n^{-1/2})\,
O_p\!\left(\left\{\rho_n^{-1/2}+(1-\rho_n)^{-1/2}\right\}(n+N)^{-1/2}\right)\\
&=
O_p\!\left(
(1-\rho_n)\rho_n^{-1/2}
\left\{\rho_n^{-1/2}+(1-\rho_n)^{-1/2}\right\}
(n+N)^{-1}
\right).
\end{align*}
Since
\[
(1-\rho_n)\rho_n^{-1/2}
\left\{\rho_n^{-1/2}+(1-\rho_n)^{-1/2}\right\}
=
\frac{1-\rho_n}{\rho_n}
+
\sqrt{\frac{1-\rho_n}{\rho_n}},
\]
It follows that
\[
\widehat\psi_{++}(\hat\lambda_{++})-\widehat\psi_{\mathrm{lin}}
=
O_p\!\left(
\left\{
\frac{1-\rho_n}{\rho_n}
+
\sqrt{\frac{1-\rho_n}{\rho_n}}
\right\}
(n+N)^{-1}
\right).
\]
In particular, this implies the simpler bound
\[
\widehat\psi_{++}(\hat\lambda_{++})-\widehat\psi_{\mathrm{lin}}
=
O_p\!\left(
\frac{1-\rho_n}{\rho_n}\,(n+N)^{-1}
\right).
\]
Therefore, under \(\rho_n\to\rho_0\in(0,1)\),
\[
\widehat\psi_{++}(\hat\lambda_{++})-\widehat\psi_{\mathrm{lin}}
=
O_p\bigl((n+N)^{-1}\bigr)
=
o_p\bigl((n+N)^{-1/2}\bigr).
\]
This proves the claim.
\end{proof}

\begin{proposition}[PPI and AIPW as intercept-only calibration]
\label{prop:ppi-aipw-const}
Let
\[
\hat a_{\mathrm{const}}
\in
\arg\min_{a\in\R}\sum_{i=1}^n \{Y_i-a-\mhat(X_i)\}^2,
\qquad
m_{n,\mathrm{const}}^\star(X):=\hat a_{\mathrm{const}}+\mhat(X).
\]
Then
\[
\hat a_{\mathrm{const}}=\Pl\{Y-\mhat(X)\}.
\]
Consequently, the PPI estimator
\[
\widehat{\psi}_{\mathrm{PPI}}
:=
\Pu\{\mhat(\widetilde X)\}+\Pl\{Y-\mhat(X)\}
\]
satisfies
\[
\widehat{\psi}_{\mathrm{PPI}}
=
\Pu\{m_{n,\mathrm{const}}^\star(\widetilde X)\},
\]
while the standard AIPW estimator
\[
\widehat{\psi}_{\mathrm{AIPW}}
:=
\rho_n \Pl\{\mhat(X)\}
+
(1-\rho_n)\Pu\{\mhat(\widetilde X)\}
+
\Pl\{Y-\mhat(X)\}
\]
satisfies
\[
\widehat{\psi}_{\mathrm{AIPW}}
=
\rho_n \Pl\{m_{n,\mathrm{const}}^\star(X)\}
+
(1-\rho_n)\Pu\{m_{n,\mathrm{const}}^\star(\widetilde X)\}.
\]
Hence PPI is the unlabeled-only plug-in estimator based on the mean-calibrated score \(m_{n,\mathrm{const}}^\star\), whereas AIPW is the corresponding pooled plug-in estimator.
\end{proposition}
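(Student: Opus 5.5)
The plan is a direct algebraic verification in three steps, using only the first-order condition for the intercept-only least-squares problem and the definitions of \(\widehat{\psi}_{\mathrm{PPI}}\) and \(\widehat{\psi}_{\mathrm{AIPW}}\).

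\emph{Step 1: identify \(\hat a_{\mathrm{const}}\).} The objective \(a\mapsto\sum_{i=1}^n\{Y_i-\mhat(X_i)-a\}^2\) is a strictly convex quadratic in \(a\). It is the least-squares fit of a constant to the residuals \(Y_i-\mhat(X_i)\), so its unique minimizer is their sample mean. Setting the derivative to zero gives \(\sum_i\{Y_i-\mhat(X_i)-\hat a_{\mathrm{const}}\}=0\), that is, \(\hat a_{\mathrm{const}}=\Pl\{Y-\mhat(X)\}\). This is the same normal-equation argument as in the proof of Proposition~\ref{prop:linear}, restricted to the intercept.

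\emph{Step 2: the PPI identity.} Because \(\hat a_{\mathrm{const}}\) is a scalar constant, \(\Pu\{m_{n,\mathrm{const}}^\star(\widetilde X)\}=\hat a_{\mathrm{const}}+\Pu\{\mhat(\widetilde X)\}\). Substituting the expression from Step~1 gives exactly \(\Pu\{\mhat(\widetilde X)\}+\Pl\{Y-\mhat(X)\}=\widehat{\psi}_{\mathrm{PPI}}\).

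\emph{Step 3: the AIPW identity.} By linearity of the empirical means and \(\rho_n+(1-\rho_n)=1\),
\[
\rho_n\Pl\{m_{n,\mathrm{const}}^\star(X)\}+(1-\rho_n)\Pu\{m_{n,\mathrm{const}}^\star(\widetilde X)\}
=\hat a_{\mathrm{const}}+\rho_n\Pl\{\mhat(X)\}+(1-\rho_n)\Pu\{\mhat(\widetilde X)\}.
\]
Substituting \(\hat a_{\mathrm{const}}=\Pl\{Y-\mhat(X)\}\) recovers \(\widehat{\psi}_{\mathrm{AIPW}}\).

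Alternatively, Step~3 follows from the exact AIPW representation \eqref{eqn::aipwcal}, since the constant class satisfies the mean-calibration condition \(\Pl\{Y-m_{n,\mathrm{const}}^\star(X)\}=0\).

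There is no substantive obstacle. The only point needing care is that the constant shift \(\hat a_{\mathrm{const}}\) passes through each empirical mean unchanged, and that the weights \(\rho_n\) and \(1-\rho_n\) sum to one so the shift appears exactly once.
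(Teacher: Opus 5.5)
Your proof is correct and follows the paper's argument essentially verbatim: the first-order condition gives $\hat a_{\mathrm{const}}=\Pl\{Y-\mhat(X)\}$, and both identities then follow by pulling the scalar shift through each empirical mean, using $\rho_n+(1-\rho_n)=1$ for the AIPW case. Your alternative route via \eqref{eqn::aipwcal} also works, but it needs one extra step: that display yields $\widehat\psi(m_{n,\mathrm{const}}^\star)$ rather than $\widehat\psi(\mhat)$, so you must add the invariance $\widehat\psi(f+c)=\widehat\psi(f)$ for scalar $c$.
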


\begin{proof}
The objective in the definition of \(\hat a_{\mathrm{const}}\) is quadratic in \(a\), and its first-order condition is
\[
0
=
\Pl\{Y-a-\mhat(X)\}.
\]
Therefore
\[
\hat a_{\mathrm{const}}=\Pl\{Y-\mhat(X)\}.
\]

Using \(m_{n,\mathrm{const}}^\star(\widetilde X)=\mhat(\widetilde X)+\hat a_{\mathrm{const}}\), we obtain
\[
\Pu\{m_{n,\mathrm{const}}^\star(\widetilde X)\}
=
\Pu\{\mhat(\widetilde X)\}+\hat a_{\mathrm{const}}
=
\Pu\{\mhat(\widetilde X)\}+\Pl\{Y-\mhat(X)\}
=
\widehat{\psi}_{\mathrm{PPI}}.
\]

Likewise,
\begin{align*}
\rho_n \Pl\{m_{n,\mathrm{const}}^\star(X)\}
+(1-\rho_n)\Pu\{m_{n,\mathrm{const}}^\star(\widetilde X)\}
&=
\rho_n \Pl\{\mhat(X)+\hat a_{\mathrm{const}}\}
+(1-\rho_n)\Pu\{\mhat(\widetilde X)+\hat a_{\mathrm{const}}\}\\
&=
\rho_n \Pl\{\mhat(X)\}
+(1-\rho_n)\Pu\{\mhat(\widetilde X)\}
+\hat a_{\mathrm{const}}\\
&=
\rho_n \Pl\{\mhat(X)\}
+(1-\rho_n)\Pu\{\mhat(\widetilde X)\}
+\Pl\{Y-\mhat(X)\}\\
&=
\widehat{\psi}_{\mathrm{AIPW}}.
\end{align*}
This proves the claim.
\end{proof}

\section{Proof of Proposition~\ref{prop:aipw-class}}
\label{app:eif}

\begin{proof}[Proof of \cref{prop:aipw-class}]
Let
\[
c:=\psi_0-\E_{\Px}[f(X)].
\]
Because \(c\) is constant,
\[
\widehat\psi(f+c)
=
\widehat\psi(f)+\rho_n c+(1-\rho_n)c-c
=
\widehat\psi(f).
\]
Therefore \(\widehat\psi(f)=\widehat\psi(\tilde f)\), where
\[
\tilde f(X)=f(X)+c=f(X)-\E_{\Px}[f(X)]+\psi_0.
\]
By construction,
\[
\E_{\Px}[\tilde f(X)]=\psi_0.
\]

Using \(\E_{P_0}[Y]=\psi_0\) and \(\E_{P_0}[\tilde f(X)]=\E_{\Px}[\tilde f(X)]=\psi_0\), we obtain
\[
\begin{aligned}
\widehat\psi(f)-\psi_0
&=
\widehat\psi(\tilde f)-\psi_0\\
&=
\rho_n(\Pl-P_0)\{\tilde f(X)\}
+
(1-\rho_n)(\Pu-\Px)\{\tilde f(\widetilde X)\}
+
(\Pl-P_0)\{Y-\tilde f(X)\}\\
&=
\rho_n(\Pl-P_0)\Bigl\{\tilde f(X)-\psi_0+\rho_n^{-1}\bigl(Y-\tilde f(X)\bigr)\Bigr\}\\
&\quad+
(1-\rho_n)(\Pu-\Px)\bigl\{\tilde f(\widetilde X)-\psi_0\bigr\}.
\end{aligned}
\]
Define
\[
D_{f,n}^L(X,Y)
:=
\tilde f(X)-\psi_0+\rho_n^{-1}\{Y-\tilde f(X)\},
\qquad
D_f^U(\widetilde X)
:=
\tilde f(\widetilde X)-\psi_0.
\]
Then
\[
\widehat\psi(f)-\psi_0
=
\frac{1}{M}\sum_{i=1}^n D_{f,n}^L(X_i,Y_i)
+
\frac{1}{M}\sum_{j=1}^N D_f^U(\widetilde X_j).
\]

Next,
\[
D_{f,n}^L(X,Y)-D_f^L(X,Y)
=
(\rho_n^{-1}-\rhozero^{-1})\{Y-\tilde f(X)\}.
\]
Since \(\E_{P_0}[Y-\tilde f(X)]=0\), it follows that
\[
\begin{aligned}
\frac{1}{M}\sum_{i=1}^n \{D_{f,n}^L(X_i,Y_i)-D_f^L(X_i,Y_i)\}
&=
\rho_n(\Pl-P_0)\Bigl[(\rho_n^{-1}-\rhozero^{-1})\{Y-\tilde f(X)\}\Bigr]\\
&=
\left(1-\frac{\rho_n}{\rhozero}\right)(\Pl-P_0)\{Y-\tilde f(X)\}.
\end{aligned}
\]
Because \(Y\) and \(f(X)\) have finite second moments, \(\tilde f(X)\) also has finite second moment, so
\[
(\Pl-P_0)\{Y-\tilde f(X)\}=O_p(n^{-1/2})=O_p(M^{-1/2}).
\]
Since \(\rho_n\to\rhozero\in(0,1)\) by \cref{assump:design},
\[
\frac{1}{M}\sum_{i=1}^n \{D_{f,n}^L(X_i,Y_i)-D_f^L(X_i,Y_i)\}
=
o_p(M^{-1/2}).
\]
Therefore
\[
\widehat\psi(f)-\psi_0
=
\frac{1}{M}\sum_{i=1}^n D_f^L(X_i,Y_i)
+
\frac{1}{M}\sum_{j=1}^N D_f^U(\widetilde X_j)
+
o_p(M^{-1/2}),
\]
which proves the stated asymptotic linearity. The mean-zero property of \(D_f^L\) and \(D_f^U\) follows from
\[
\E_{P_0}[Y]=\E_{P_0}[\tilde f(X)]=\psi_0.
\]

For the asymptotic variance, write
\[
\delta(X):=\tilde f(X)-\mu_0(X),
\qquad
\varepsilon:=Y-\mu_0(X).
\]
Then
\[
\E_{P_0}[\delta(X)]=0,
\qquad
\E_{P_0}[\varepsilon\mid X]=0,
\qquad
\E_{P_0}[\varepsilon^2]=\E_{P_0}\!\bigl[\Var[Y\mid X]\bigr].
\]
Also,
\[
D_f^U(\widetilde X)
=
\mu_0(\widetilde X)-\psi_0+\delta(\widetilde X),
\]
and, since \(Y-\tilde f(X)=\varepsilon-\delta(X)\),
\[
\begin{aligned}
D_f^L(X,Y)
&=
\tilde f(X)-\psi_0+\rhozero^{-1}\{Y-\tilde f(X)\}\\
&=
\mu_0(X)-\psi_0+\delta(X)+\rhozero^{-1}\{\varepsilon-\delta(X)\}\\
&=
\mu_0(X)-\psi_0+\rhozero^{-1}\varepsilon+\bigl(1-\rhozero^{-1}\bigr)\delta(X).
\end{aligned}
\]
Set
\[
A(X):=\mu_0(X)-\psi_0,
\qquad
B(X):=\delta(X),
\qquad
C(X,Y):=\rhozero^{-1}\varepsilon.
\]
Then \(\E_{P_0}[C(X,Y)\mid X]=0\), so \(C(X,Y)\) is uncorrelated with both \(A(X)\) and \(B(X)\). Hence
\[
\Var[D_f^L(X,Y)]
=
\Var\!\left\{A(X)+\bigl(1-\rhozero^{-1}\bigr)B(X)\right\}
+
\Var[C(X,Y)],
\]
while
\[
\Var[D_f^U(\widetilde X)]
=
\Var[A(\widetilde X)+B(\widetilde X)].
\]
Therefore
\[
\begin{aligned}
&\rhozero \Var[D_f^L(X,Y)]+(1-\rhozero)\Var[D_f^U(\widetilde X)]\\
&=
\rhozero \Var\!\left\{A+\bigl(1-\rhozero^{-1}\bigr)B\right\}
+
(1-\rhozero)\Var[A+B]
+
\rhozero \Var[C].
\end{aligned}
\]
Expanding the two variances involving \(A\) and \(B\) gives
\[
\begin{aligned}
&\rhozero \Var\!\left\{A+\bigl(1-\rhozero^{-1}\bigr)B\right\}
+
(1-\rhozero)\Var[A+B]\\
&=
\{\rhozero+(1-\rhozero)\}\Var[A]\\
&\quad+
\left\{\rhozero\bigl(1-\rhozero^{-1}\bigr)^2+(1-\rhozero)\right\}\Var[B]\\
&\quad+
2\left\{\rhozero\bigl(1-\rhozero^{-1}\bigr)+(1-\rhozero)\right\}\Cov(A,B).
\end{aligned}
\]
Now
\[
1-\rhozero^{-1}=-\frac{1-\rhozero}{\rhozero},
\]
so
\[
\rhozero\bigl(1-\rhozero^{-1}\bigr)^2+(1-\rhozero)
=
\frac{(1-\rhozero)^2}{\rhozero}+(1-\rhozero)
=
\frac{1-\rhozero}{\rhozero},
\]
and
\[
\rhozero\bigl(1-\rhozero^{-1}\bigr)+(1-\rhozero)
=
-(1-\rhozero)+(1-\rhozero)
=
0.
\]
Thus
\[
\rhozero \Var\!\left\{A+\bigl(1-\rhozero^{-1}\bigr)B\right\}
+
(1-\rhozero)\Var[A+B]
=
\Var[A]+\frac{1-\rhozero}{\rhozero}\Var[B].
\]
Also,
\[
\rhozero \Var[C]
=
\rhozero\cdot \rhozero^{-2}\E_{P_0}[\varepsilon^2]
=
\rhozero^{-1}\E_{P_0}\!\bigl[\Var[Y\mid X]\bigr].
\]
Since \(\E_{P_0}[B(X)]=0\),
\[
\Var[B]=\E_{P_0}\!\bigl[(\tilde f(X)-\mu_0(X))^2\bigr].
\]
Combining the preceding displays yields
\[
\rhozero \Var[D_f^L(X,Y)]+(1-\rhozero)\Var[D_f^U(\widetilde X)]
=
\Var[\mu_0(X)]
+
\rhozero^{-1}\E_{P_0}\!\bigl[\Var[Y\mid X]\bigr]
+
\frac{1-\rhozero}{\rhozero}\E_{P_0}\!\bigl[(\tilde f(X)-\mu_0(X))^2\bigr].
\]
This expression is minimized when \(\tilde f(X)=\mu_0(X)\) almost surely, in particular at \(f=\mu_0\). Standard semiparametric characterization results for missing-data models imply that this minimum is the semiparametric efficiency bound in the unrestricted two-sample model; see, for example, \citet{bickel1993efficient,robins1995analysis,robins1995semiparametric,vanderlaanunified,tsiatis2006semiparametric}. This completes the proof.
\end{proof}

\begin{lemma}[Bounded variation after isotonic post-processing]
\label{lem:bv-post}
Suppose \cref{assump:eff-bv} holds. Let \(T:=\mhat(X)\) and \(S_n:=m_{n,\mathrm{iso}}^\star(X)\). Define
\[
\eta(t):=\E_{P_0}[Y\mid T=t],
\qquad
\eta_n(s):=\E_{P_0}[Y\mid S_n=s].
\]
If \(t\mapsto \eta(t)\) has total variation at most \(V\), then, conditional on the labeled sample, the map \(s\mapsto \eta_n(s)\) has total variation at most \(V\). Moreover, \(|\eta_n(s)|\le C_0\) on the support of \(S_n\).
\end{lemma}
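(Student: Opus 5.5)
The plan is to work conditionally on the labeled sample, so that \(\hat f\) is a fixed nondecreasing map and \(S_n=\hat f(T)\) for an independent draw \(X\sim\Px\). The proof then reduces to a deterministic statement: averaging a bounded-variation function over the level sets of a monotone map does not increase total variation.

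\textbf{Step 1 (tower property).} Since \(S_n\) is a measurable function of \(T\), and \(X\) is independent of the labeled sample, we have
\[
\eta_n(s)=\E_{P_0}[\eta(T)\mid \hat f(T)=s].
\]
This immediately gives \(|\eta_n(s)|\le C_0\) on the support of \(S_n\), because \(|\eta|\le C_0\) almost surely by \cref{assump:isoreg}\ref{cond::bound}. Because \(\hat f\) is nondecreasing, each level set \(I_s:=\hat f^{-1}(\{s\})\) is an interval. For \(s<s'\) the intervals are disjoint and ordered, with \(I_s\) entirely to the left of \(I_{s'}\). The isotonic fit is piecewise constant with finitely many values, so \(S_n\) takes finitely many values \(s_1<\dots<s_k\). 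Write \(a_j:=\eta_n(s_j)\) and \(I_j:=I_{s_j}\). Each \(a_j\) is a \(P_T\)-average of \(\eta\) over \(I_j\). Hence, for any \(\varepsilon>0\), there exist points \(u_j,v_j\in I_j\) in the support of \(T\) with
\[
\eta(u_j)\le a_j+\varepsilon,\qquad \eta(v_j)\ge a_j-\varepsilon.
\]
These come from the essential infimum and supremum of \(\eta\) over \(I_j\); I fix the version of \(\eta\) whose variation on the support is at most \(V\).

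\textbf{Step 2 (chaining over monotone runs).} The total variation of \(\eta_n\) is \(\sum_j|a_{j+1}-a_j|\). I split the index sequence at its turning points (local maxima and minima of \(j\mapsto a_j\)) into maximal monotone runs. At each local maximum \(q\) I use the single point \(v_q\). At each local minimum \(r\), and at the endpoints of the index range, I use the single point \(u_r\) or \(v_r\) as appropriate.

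On an increasing run from a local minimum \(p\) to a local maximum \(q\),
\[
a_q-a_p\le \eta(v_q)-\eta(u_p)+2\varepsilon.
\]
On the following decreasing run from \(q\) to \(r\),
\[
a_q-a_r\le \eta(v_q)-\eta(u_r)+2\varepsilon.
\]
The chosen points \(u_p,v_q,u_r,\dots\) lie in distinct, ordered intervals, so they form a strictly increasing sequence of points. The sum of \(|a_{j+1}-a_j|\) therefore equals the sum of the run increments. That sum is bounded by \(\sum|\eta(t_{\ell+1})-\eta(t_\ell)|\) along this increasing chain, plus \(2k\varepsilon\). The chain sum is at most \(V\). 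Letting \(\varepsilon\to0\) gives total variation of \(\eta_n\) at most \(V\).

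The main obstacle is Step 2. The naive bound \(|a_{j+1}-a_j|\le\sup_{x\in I_j,\,y\in I_{j+1}}|\eta(x)-\eta(y)|\) double-counts the oscillation of \(\eta\) inside each interval and yields only \(2V\). The remedy is to use exactly one representative point per turning interval, which keeps the representatives ordered so that the variation of \(\eta\) is counted once. I would also check two technical points: that these representatives can be taken in the support of \(T\) via essential suprema and infima, and that single-point level sets and atoms of \(T\) cause no trouble.
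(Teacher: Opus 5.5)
Your argument is correct, but it takes a different route from the paper's proof. The paper uses the Jordan decomposition $\eta=\eta^+-\eta^-$ into nondecreasing parts with $\|\eta^+\|_{TV}+\|\eta^-\|_{TV}=\|\eta\|_{TV}\le V$. It writes $\eta_n(s)=\E_{P_0}[\eta^+(T)\mid T\in B_s]-\E_{P_0}[\eta^-(T)\mid T\in B_s]$. Because the level-set intervals are ordered, $T\mid T\in B_{s_2}$ first-order stochastically dominates $T\mid T\in B_{s_1}$ for $s_1<s_2$. Hence each block-average map is nondecreasing in $s$. A nondecreasing map's variation is bounded by its range, which lies inside the range of $\eta^\pm$, so each part contributes at most $\|\eta^\pm\|_{TV}$, and summing gives the bound.

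That route avoids your turning-point bookkeeping and the $\varepsilon$'s. It also never uses that there are finitely many blocks. The price is invoking the Jordan decomposition (for the fixed version of $\eta$ on the support) and a stochastic-dominance step.

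Your chaining argument works directly with $\eta$ and is fully elementary. You correctly identify why the naive blockwise bound yields only $2V$, and why one representative per turning block fixes it: it keeps the chain increasing, so each piece of the variation of $\eta$ is counted once. You also handle the $P_T$-a.e.\ version issue more explicitly than the paper does. One small simplification is available. Since $P_T(I_j)>0$, the set $\{t\in I_j:\eta(t)\le a_j\}$ has positive $P_T$-mass and therefore meets the support of $T$. So you can take $\eta(u_j)\le a_j$ and $\eta(v_j)\ge a_j$ exactly and drop $\varepsilon$ altogether.
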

\begin{proof}
We adapt the proof of Lemma 6 in \cite{van2023causal}. Let \(S_n=f_n(T)\), where \(f_n\) is the fitted isotonic map. Conditional on the labeled sample, \(f_n\) is nondecreasing and piecewise constant. Hence, for each \(s\) in the support of \(S_n\), there exists an interval \(B_s\subset \mathbb R\) such that
\[
\{S_n=s\}=\{T\in B_s\}.
\]
Therefore
\[
\eta_n(s)
=
\E_{P_0}[Y\mid S_n=s]
=
\E_{P_0}[\eta(T)\mid T\in B_s].
\]

Let \(\eta=\eta^+-\eta^-\) be the Jordan decomposition of \(\eta\), where \(\eta^+\) and \(\eta^-\) are nondecreasing and
\[
\|\eta\|_{TV}=\|\eta^+\|_{TV}+\|\eta^-\|_{TV}\le V.
\]
Then
\[
\eta_n(s)
=
\E_{P_0}[\eta^+(T)\mid T\in B_s]
-
\E_{P_0}[\eta^-(T)\mid T\in B_s].
\]

If \(s_1<s_2\), then monotonicity of \(f_n\) implies every point of \(B_{s_1}\) is no larger than every point of \(B_{s_2}\). Hence the conditional distribution of \(T\mid T\in B_{s_2}\) first-order stochastically dominates that of \(T\mid T\in B_{s_1}\). Therefore, for any nondecreasing function \(g\),
\[
\E_{P_0}[g(T)\mid T\in B_{s_1}]
\le
\E_{P_0}[g(T)\mid T\in B_{s_2}].
\]
Applying this with \(g=\eta^+\) and \(g=\eta^-\), both maps
\[
s\mapsto \E_{P_0}[\eta^+(T)\mid T\in B_s]
\qquad\text{and}\qquad
s\mapsto \E_{P_0}[\eta^-(T)\mid T\in B_s]
\]
are nondecreasing.

Moreover, each of these conditional expectations takes values in the range of the corresponding function, so
\[
\left\|s\mapsto \E_{P_0}[\eta^+(T)\mid T\in B_s]\right\|_{TV}
\le
\|\eta^+\|_{TV},
\qquad
\left\|s\mapsto \E_{P_0}[\eta^-(T)\mid T\in B_s]\right\|_{TV}
\le
\|\eta^-\|_{TV}.
\]
Hence
\[
\|\eta_n\|_{TV}
\le
\|\eta^+\|_{TV}+\|\eta^-\|_{TV}
=
\|\eta\|_{TV}
\le V.
\]

Finally, if \(|\eta(t)|\le C_0\) on the support of \(T\), then
\[
|\eta_n(s)|
=
\left|\E_{P_0}[\eta(T)\mid T\in B_s]\right|
\le C_0
\]
for all \(s\) in the support of \(S_n\).
\end{proof}

\begin{lemma}[Empirical calibration centers the limit]
\label{lem:center}
Suppose \cref{assump:design,assump:isoreg} hold and the calibration score equations contain the constant score \(h\equiv 1\). Then
\[
\E_{\Px}[m_0(X)]=\psi_0.
\]
\end{lemma}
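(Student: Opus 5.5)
The cleanest route is to work directly at the population level, using the first-order optimality of \(f_0\) in \eqref{eq:pop-iso}. Since \(\mathcal F_{\mathrm{iso}}\) is closed under adding constants, for any \(c\in\R\) the map \(f_0+c\) is again nondecreasing. Hence \(c\mapsto \E_{P_0}[\{Y-f_0(\mhat(X))-c\}^2]\) is minimized at \(c=0\). This function is a quadratic in \(c\), finite by \cref{assump:design}(iii) and the boundedness in \cref{assump:isoreg}\ref{cond::bound}, which keeps \(m_0\) square integrable.

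Setting its derivative at \(c=0\) to zero gives \(\E_{P_0}[Y-m_0(X)]=0\). Therefore \(\E_{\Px}[m_0(X)]=\E_{P_0}[m_0(X)]=\E_{P_0}[Y]=\psi_0\), where we use that \(m_0\) depends only on \(X\), whose law under \(P_0\) is \(\Px\). This is the population analogue of the score equation for \(h\equiv1\) noted in \cref{prop:iso-repr}.

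The hypothesis that the calibration score equations contain \(h\equiv 1\) suggests an alternative, limiting argument. By \cref{lem:block}, \(\Pl\{Y-m_{n,\mathrm{iso}}^\star(X)\}=0\) exactly, so
\[
P_0\{Y-m_0\}=(P_0-\Pl)\{Y-m_0\}+\Pl\{m_{n,\mathrm{iso}}^\star-m_0\}.
\]
The first term is \(O_p(n^{-1/2})\) by the CLT. The second would be split as \((\Pl-P_0)\{m_{n,\mathrm{iso}}^\star-m_0\}+P_0\{m_{n,\mathrm{iso}}^\star-m_0\}\). The deterministic part is bounded via Cauchy–Schwarz by \(\|m_{n,\mathrm{iso}}^\star-m_0\|_{2,\Px}=O_p(n^{-1/3})\), using the rate from \cref{thm:al}. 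The empirical-process part would be controlled by a Donsker/bounded-uniform-entropy bound for bounded monotone functions of \(\mhat(X)\), using \cref{assump:isoreg}\ref{cond::bound}. Since the left side is a fixed constant that is \(o_p(1)\), it must equal zero.

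The main obstacle is purely logical. The rate in \cref{thm:al} may itself be proved using this centering lemma, which would make the limiting argument circular. I would therefore present the population first-order argument as the proof and mention the limiting argument only as a consistency check. The only care needed in the main argument is ensuring that finiteness of the risk at \(f_0\) justifies differentiating under the expectation, which is immediate because the risk is an explicit quadratic in \(c\).
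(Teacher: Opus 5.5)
Your main argument is correct and is essentially the paper's own proof. Because \(\mathcal F_{\mathrm{iso}}\) is closed under constant shifts, \(c\mapsto \E_{P_0}[\{Y-f_0(\mhat(X))-c\}^2]\) is minimized at \(c=0\), and the first-order condition gives \(\E_{P_0}[Y-m_0(X)]=0\). The paper also records your limiting argument as a second proof, but it takes the \(L^2\) rate from \cref{lem:isorate} and the empirical-process bound from \cref{lem:eprate}, neither of which uses the centering lemma, rather than citing \cref{thm:al}, so that route is not actually circular.
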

\begin{proof}
Let \(T:=\mhat(X)\), and write \(m_0=f_0(T)\), where
\[
f_0\in\arg\min_{f\in\mathcal F_{\mathrm{iso}}}
\E_{P_0}\bigl[(Y-f(T))^2\bigr].
\]
Because \(\mathcal F_{\mathrm{iso}}\) is closed under addition of constants, for every
\(c\in\mathbb R\) the function \(f_0+c\) also belongs to \(\mathcal F_{\mathrm{iso}}\).
Hence the function
\[
g(c):=\E_{P_0}\bigl[(Y-f_0(T)-c)^2\bigr]
\]
is minimized at \(c=0\). Since \(g\) is differentiable,
\[
0=g'(0)=-2\,\E_{P_0}[Y-f_0(T)].
\]
Therefore
\[
\E_{P_0}[Y]=\E_{P_0}[f_0(T)].
\]
Recalling that \(m_0(X)=f_0(\mhat(X))=f_0(T)\) and \(\psi_0=\E_{P_0}[Y]\), we conclude
\[
\E_{\Px}[m_0(X)]=\psi_0.
\]
\end{proof}

\textbf{Proof convention.} In the following proofs, fix \(\delta>0\). By \(\sup_x |m_{n,\mathrm{iso}}^\star(x)|=O_p(1)\), there exist a deterministic constant \(C=C_\delta\ge C_0\) and \(n_0<\infty\) such that the event
\[
A_{n,C}:=\Bigl\{\sup_x |m_{n,\mathrm{iso}}^\star(x)|\le C\Bigr\}
\]
satisfies \(P_0(A_{n,C})\ge 1-\delta\) for all \(n\ge n_0\). We carry out the deterministic bounded-class arguments on \(A_{n,C}\), where all implicit constants may depend on \(C\) but not on \(n\). Since \(\delta\) is arbitrary, the resulting rates and distributional statements hold unconditionally.

\begin{lemma}[Isotonic \(L^2\) rate]
\label{lem:isorate}
Suppose \cref{assump:design,assump:isoreg} holds. Then
\[
\|m_{n,\mathrm{iso}}^\star-m_0\|_{2,\Px}^2=O_p(n^{-2/3}).
\]
\end{lemma}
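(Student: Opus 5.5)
The plan is the standard least-squares argument of empirical process theory, specialized to the class of bounded monotone functions of the scalar score \(T:=\mhat(X)\). By the proof convention, we work on the event \(A_{n,C}\). On this event \(m_{n,\mathrm{iso}}^\star=\hat f\circ\mhat\), with \(\hat f\) nondecreasing and bounded by \(C\). Also \(m_0=f_0\circ\mhat\), with \(f_0\) nondecreasing. We may take \(f_0\) bounded by \(C_0\), since the population isotonic projection of \(\eta(t)=\E_{P_0}[Y\mid T=t]\) takes values in the range of \(\eta\). Hence both functions lie in
\[
\mathcal M_C:=\{f\circ\mhat:\ f\ \text{nondecreasing},\ \|f\|_\infty\le C\}.
\]
By the classical bracketing bound for bounded monotone functions, which is uniform over the law of \(T\),
\[
\log N_{[\,]}(\epsilon,\mathcal M_C,L^2(\Px))\lesssim C/\epsilon .
\]
The resulting entropy integral \(J(\delta)\asymp\sqrt{\delta}\) gives the rate \(\delta_n\) solving \(\sqrt{n}\,\delta_n^2\asymp\sqrt{\delta_n}\), namely \(\delta_n\asymp n^{-1/3}\).

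The first key step is a basic inequality relative to the population projection, not relative to \(\eta\). Since \(\mathcal F_{\mathrm{iso}}\) is a convex cone containing \(f_0\), the projection characterization gives
\[
\E_{P_0}\bigl[(Y-m_0)(g-m_0)\bigr]\le 0 \quad \text{for all monotone } g\circ\mhat .
\]
It follows that
\[
\|m_{n,\mathrm{iso}}^\star-m_0\|_{2,\Px}^2\le P_0\bigl[(Y-m_{n,\mathrm{iso}}^\star)^2-(Y-m_0)^2\bigr].
\]
The empirical minimality of \(\hat f\) (Proposition~\ref{prop:iso-repr}, applied with \(\theta=f_0\)) gives
\[
\Pl\bigl[(Y-m_{n,\mathrm{iso}}^\star)^2-(Y-m_0)^2\bigr]\le 0.
\]
Combining the two inequalities yields
\[
\|m_{n,\mathrm{iso}}^\star-m_0\|^2\le -(\Pl-P_0)\ell_{m_{n,\mathrm{iso}}^\star}+(\Pl-P_0)\ell_{m_0},
\]
where \(\ell_m(x,y)=(y-m(x))^2\). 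Expanding the square, the loss difference equals
\[
\ell_m-\ell_{m_0}=-2(Y-m_0)(m-m_0)+(m-m_0)^2 .
\]
The increment therefore splits into two terms. The first is a multiplier process \((\Pl-P_0)[(Y-m_0)(m-m_0)]\). The second is \((\Pl-P_0)(m-m_0)^2\), a bounded process over \(\mathcal M_C-\mathcal M_C\).

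The main obstacle is the multiplier term, because \(Y\) is unbounded. Here I would write \(Y-m_0=\{Y-\eta(T)\}+\{\eta(T)-m_0\}\). The second piece is bounded by \(2C\) and is handled by the bounded bracketing maximal inequality. The first piece is a conditionally mean-zero error \(\varepsilon\) that is sub-Gaussian or sub-exponential by Assumption~\ref{assump:isoreg}\ref{cond::subgaus}. It is multiplied by functions of \(T\), so I would use a Bernstein-norm bracketing maximal inequality (van der Vaart and Wellner, Lemma 3.4.3). Sub-exponential tails give finite Bernstein norm for \(\varepsilon\,(m-m_0)\), with bracketing entropy still of order \(1/\epsilon\). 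This gives a modulus of continuity
\[
\E\sup_{\|m-m_0\|\le\delta}\bigl|\sqrt n(\Pl-P_0)(\ell_m-\ell_{m_0})\bigr|\lesssim \sqrt\delta\Bigl(1+\frac{\sqrt\delta}{\delta^2\sqrt n}\Bigr).
\]
Possibly this needs a \(\log n\) truncation of \(\varepsilon\) in the sub-exponential case; to avoid that, I would fall back on the sharper isotonic risk bounds cited in the text (Chatterjee et al., Yang and Barber). A peeling argument, as in Theorem 3.2.5 of van der Vaart and Wellner, then yields
\[
\|m_{n,\mathrm{iso}}^\star-m_0\|_{2,\Px}=O_p(n^{-1/3})
\]
on \(A_{n,C}\). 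Because \(\delta\) in the proof convention is arbitrary, the bound holds unconditionally, and squaring gives the stated \(O_p(n^{-2/3})\).
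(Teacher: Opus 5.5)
Your proposal is correct and follows essentially the same route as the paper: restrict to \(A_{n,C}\), obtain \(\|m_{n,\mathrm{iso}}^\star-m_0\|_{2,\Px}^2\le P_0\{\ell_{m_{n,\mathrm{iso}}^\star}-\ell_{m_0}\}\) from the projection characterization of \(m_0\) together with empirical minimality, split the increment into a bounded quadratic term and a multiplier term controlled by Bernstein-norm bracketing with entropy of order \(1/\epsilon\), and conclude the \(n^{-1/3}\) rate from a rate theorem. The differences are cosmetic: the paper invokes Theorem 3.4.1 of van der Vaart and Wellner and cites Bibaut et al.\ for the multiplier term, whereas you peel via Theorem 3.2.5 and bound \(f_0\) through the range of \(\eta\) rather than by clipping; your \(\log n\) truncation worry is also unnecessary, since Lemma 3.4.3 with modulus \(\sqrt\delta+1/(\delta\sqrt n)\) already gives \(r_n\asymp n^{1/3}\). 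One point that both you and the paper leave implicit is that bounding the Bernstein norm of \(\varepsilon g\) by \(\|g\|_{2,\Px}\) requires reading the sub-exponential condition of Assumption~\ref{assump:isoreg} as holding uniformly conditional on \(\mhat(X)\).
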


\begin{proof}[Proof of \cref{lem:isorate}]

Let \(T:=\mhat(X)\) and let
\[
\mathcal M_C:=\{f(T):f\in\mathcal F_{\mathrm{iso}},\ \|f(T)\|_\infty\le C\}.
\]
By \cref{assump:isoreg}, \(g_0(T)\) is bounded by \(C_0\le C\), and on \(A_{n,C}\) we have \(m_{n,\mathrm{iso}}^\star\in\mathcal M_C\). Since clipping any monotone candidate to \([-C,C]\) preserves monotonicity and can only decrease squared risk against \(g_0(T)\), the population isotonic projection \(m_0\) also belongs to \(\mathcal M_C\). Thus the problem is a one-dimensional isotonic least-squares regression of \(Y\) on the score \(T\). Let
\[
g_0(T):=\E_{P_0}[Y\mid T].
\]
Then, for every \(m\in\mathcal M_C\),
\[
\E_{P_0}[(Y-m)^2]
=
\E_{P_0}[(Y-g_0)^2]+\E_{P_0}[(g_0-m)^2],
\]
which shows that \(m_0\) is the \(L^2(P_T)\)-projection of \(g_0\) onto the closed convex set \(\mathcal M_C\).

Now define the centered class
\[
\mathcal F_0:=\{g=m-m_0:\ m\in\mathcal M_C\}.
\]
For \(g\in\mathcal F_0\), let \(\ell_g(Z):=(Y-m_0(T)-g(T))^2-(Y-m_0(T))^2\), where \(Z=(X,Y)\). Since \(m_{n,\mathrm{iso}}^\star\) minimizes the empirical squared risk over \(\mathcal F_{\mathrm{iso}}\), on \(A_{n,C}\) the random element \(\hat g:=m_{n,\mathrm{iso}}^\star-m_0\) belongs to \(\mathcal F_0\) and satisfies
\[
\Pl\{\ell_{\hat g}\}\le 0=\Pl\{\ell_0\}.
\]
Moreover,
\[
\E_{P_0}[\ell_g]
=
\|g\|_{2,\Px}^2-2\E_{P_0}[(Y-m_0)g].
\]
Because \(g\) is measurable with respect to \(T\), the tower property gives
\[
=\E_{P_0}[(Y-m_0)g]
=
\E_{P_0}[(g_0-m_0)g].
\]
Since \(m_0\) is the \(L^2(P_T)\)-projection of \(g_0\) onto the convex set \(\mathcal M_C\), the Hilbert-space projection inequality yields
\[
\E_{P_0}[(g_0-m_0)(m-m_0)]\le 0
\qquad\text{for every }m\in\mathcal M_C.
\]
Hence
\[
\E_{P_0}[\ell_g] \ge \|g\|_{2,\Px}^2,
\]
so squared loss has the required quadratic margin around \(m_0\).

We next bound the modulus of continuity of the empirical process indexed by the localized classes
\[
\mathcal F_\delta:=\{g\in\mathcal F_0:\|g\|_{2,\Px}\le \delta\}.
\]
Because every \(m\in\mathcal M_C\) is bounded by \(C\), every \(g\in\mathcal F_\delta\) is uniformly bounded, and the standard bracketing bound for one-dimensional monotone classes gives
\[
\log N_{[]}(\varepsilon,\mathcal F_\delta,L_2(\Px))
\lesssim
\delta/\varepsilon,
\qquad 0<\varepsilon<\delta.
\]
Therefore
\[
J_{[]}(\delta,\mathcal F_\delta,L_2(\Px))\lesssim \delta^{1/2};
\]
see, e.g., \citet{vdvwellner1996}.

Write \(\xi:=Y-m_0(T)\), so that \(\ell_g=g^2-2\xi g\). The class \(\{g^2:g\in\mathcal F_\delta\}\) is a bounded Lipschitz image of \(\mathcal F_\delta\), so by Lemma 3.4.2 of \citet{vdvwellner1996} and the same entropy bound,
\[
E^\star\sup_{g\in\mathcal F_\delta}\sqrt{n}\,|(\Pl-P_0)\{g^2\}|
\lesssim
\delta^{1/2}.
\]
For the multiplier term, \cref{assump:isoreg} implies that \(Y-g_0(T)\) is sub-Gaussian or subexponential. Since both \(g_0(T)\) and \(m_0(T)\) are bounded by \(C\), the difference \(g_0(T)-m_0(T)\) is bounded, and therefore \(\xi=Y-m_0(T)\) is also sub-Gaussian or subexponential. Thus the product class \(\xi\mathcal F_\delta:=\{\xi g:g\in\mathcal F_\delta\}\) satisfies the same localized bracketing bound in the Bernstein-type norm used in the subexponential least-squares proof of \citet{Bibaut2019}, yielding
\[
E^\star\sup_{g\in\mathcal F_\delta}\sqrt{n}\,|(\Pl-P_0)\{\xi g\}|
\lesssim
\delta^{1/2}.
\]
Combining the previous two displays,
\[
E^\star\sup_{g\in\mathcal F_\delta}\sqrt{n}\,|(\Pl-P_0)\{\ell_g\}|
\lesssim
\delta^{1/2}.
\]
We may therefore apply Theorem 3.4.1 of \citet{vdvwellner1996} with \(d(g)=\|g\|_{2,\Px}\), quadratic margin \(P_0\ell_g\gtrsim d^2(g)\), and modulus \(\phi_n(\delta)\asymp\delta^{1/2}\). The fixed-point condition \(r_n^2\phi_n(r_n^{-1})\lesssim \sqrt n\) gives \(r_n\asymp n^{1/3}\), and hence
\[
\|m_{n,\mathrm{iso}}^\star-m_0\|_{2,\Px}=O_p(n^{-1/3}),
\qquad
\|m_{n,\mathrm{iso}}^\star-m_0\|_{2,\Px}^2=O_p(n^{-2/3}).
\]
The preceding display is obtained on \(A_{n,C}\). Since \(P_0(A_{n,C}^c)\le \delta\) for all large \(n\), and \(\delta>0\) was arbitrary, the same rate holds unconditionally. This is the classical one-dimensional isotonic rate, stated directly in the random-design \(L^2(\Px)\) norm relevant for the theorem.
\end{proof}

\begin{theorem}[Calibration error of isotonic post-processing]
\label{thm:iso-postcal}
Suppose \cref{assump:design,assump:isoreg,assump:eff-bv} hold. Let
\[
\widehat{m}_0(X):=\E_{P_0}[Y\mid m_{n,\mathrm{iso}}^\star(X)].
\]
Then
\[
\|\widehat{m}_0-m_{n,\mathrm{iso}}^\star\|_{2,\Px}^2=O_p(n^{-2/3}).
\]
\end{theorem}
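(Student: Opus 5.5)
Let \(S_n:=m_{n,\mathrm{iso}}^\star(X)=f_n(T)\) with \(T=\mhat(X)\), and condition on the labeled sample throughout. The plan is to compare \(\widehat m_0=\eta_n(S_n)\) with \(S_n\) through the empirical block equations. Define the discrepancy \(g_n(s):=\eta_n(s)-s\), a function of \(s\) alone. Then
\[
\|\widehat m_0-m_{n,\mathrm{iso}}^\star\|_{2,\Px}^2
=\E_{P_0}\bigl[g_n(S_n)\{\eta_n(S_n)-S_n\}\bigr]
=\E_{P_0}\bigl[g_n(S_n)\{Y-S_n\}\bigr],
\]
where the second equality is the tower property over \(S_n\). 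Lemma~\ref{lem:block} (equivalently Proposition~\ref{prop:iso-repr}) gives the empirical counterpart \(\Pl[g_n(S_n)\{Y-S_n\}]=0\), because \(g_n\circ m_{n,\mathrm{iso}}^\star\) is a transformation of the fitted score. Subtracting, we obtain the key identity
\[
\|\widehat m_0-m_{n,\mathrm{iso}}^\star\|_{2,\Px}^2=-(\Pl-P_0)\bigl[g_n(S_n)\{Y-S_n\}\bigr].
\]
The problem is thus reduced to an empirical-process term indexed by the random function \(z\mapsto g_n(f_n(t))\{y-f_n(t)\}\).

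Next I control the complexity of this random function. By Lemma~\ref{lem:bv-post}, \(\eta_n\) has total variation at most \(V\) and is bounded by \(C_0\). On \(A_{n,C}\), the identity \(s\mapsto s\) on \([-C,C]\) has variation at most \(2C\), so \(g_n\) has variation at most \(V+2C\) and is bounded. Composing a bounded-variation function with the nondecreasing \(f_n\) yields a bounded-variation function of \(t\), namely the difference of two bounded monotone functions. Hence \(g_n\circ f_n\) lies in a fixed class \(\mathcal G\) of uniformly bounded, bounded-variation functions of \(T\), and \(f_n\) lies in the bounded monotone class \(\mathcal M_C\) from Lemma~\ref{lem:isorate}. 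Both classes have bracketing entropy of order \(1/\varepsilon\). To localize, I center \(S_n\) at \(m_0\): write \(Y-S_n=(Y-m_0)-(S_n-m_0)\) and use \(\|S_n-m_0\|_{2,\Px}=O_p(n^{-1/3})\) from Lemma~\ref{lem:isorate}. I also set \(\delta_n:=\|g_n(S_n)\|_{2,\Px}=\|\widehat m_0-S_n\|_{2,\Px}\).

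The key identity now yields the self-bounding inequality
\[
\delta_n^2\le \sup_{g\in\mathcal G,\ \|g\|\le\delta_n}\bigl|(\Pl-P_0)\{g\,(Y-m_0)\}\bigr|+\sup\bigl|(\Pl-P_0)\{g\,(m-m_0)\}\bigr|,
\]
where the second supremum runs over \(g\) as before and over \(m\in\mathcal M_C\) with \(\|m-m_0\|\lesssim n^{-1/3}\). By the same local maximal inequalities used in Lemma~\ref{lem:isorate} (bracketing integral \(\lesssim\delta^{1/2}\), and the sub-exponential multiplier handled as in \citet{Bibaut2019}), the first supremum is \(O_p(n^{-1/2}\delta_n^{1/2})\) up to the usual lower-order entropy term. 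The second supremum is \(O_p(n^{-1/2}(\delta_n\vee n^{-1/3})^{1/2})\), since products of bounded functions from these classes have bracketing entropy \(\lesssim 1/\varepsilon\) and \(L^2\) norm at most \(\lesssim\min(\delta_n,n^{-1/3})\). Solving \(\delta_n^2\lesssim n^{-1/2}\delta_n^{1/2}+n^{-1/2}n^{-1/6}\) gives \(\delta_n=O_p(n^{-1/3})\), which is the claim. Formally, I will run a peeling argument over shells \(2^{j-1}r_n^{-1}<\delta_n\le 2^j r_n^{-1}\), as in Theorem 3.4.1 of \citet{vdvwellner1996}, and then remove the restriction to \(A_{n,C}\) by the proof convention.

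The main obstacle is that \(g_n\) depends on the labeled data through \(\eta_n\), so the argument genuinely requires a uniform bound over the class \(\mathcal G\) rather than a fixed-function CLT. Establishing that this class has entropy \(\lesssim 1/\varepsilon\) with a constant that does not depend on \(n\) is what Lemma~\ref{lem:bv-post} provides, and this is exactly where Assumption~\ref{assump:eff-bv} enters. A second delicate point is the unbounded multiplier \(Y-m_0\). I intend to handle it by reusing the Bernstein-norm bracketing argument from the proof of Lemma~\ref{lem:isorate} rather than redoing it.
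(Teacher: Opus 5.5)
Your proposal is correct and follows essentially the paper's proof. You use the same key identity from blockwise orthogonality plus the tower property, Lemma~\ref{lem:bv-post} to place \(g_n\circ f_n\) in a fixed bounded-variation class, and a localized maximal inequality with entropy integral \(\lesssim\delta^{1/2}\) and peeling at scale \(n^{-1/3}\). The only difference is cosmetic: you split \(Y-S_n=(Y-m_0)-(S_n-m_0)\) and invoke Lemma~\ref{lem:isorate} to localize the second piece, whereas the paper bounds a single product class \((h_1-h_2)(Y-h_2)\) localized only through \(\|h_1-h_2\|_{2,\Px}\le\delta\); that localization already suffices, as would your second term localized in \(\delta_n\) alone.
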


\begin{proof}
The proof follows that of Theorem 1 in \cite{van2023causal}; see also \cite{van2025venn}. Let
\[
T:=\mhat(X),\qquad S_n:=m_{n,\mathrm{iso}}^\star(X),\qquad \eta_n(s):=\E_{P_0}[Y\mid S_n=s],
\]
and define
\[
\Delta_n(X):=\widehat{m}_0(X)-m_{n,\mathrm{iso}}^\star(X)=\eta_n(S_n)-S_n.
\]

Conditional on the labeled sample, \(h_n(s):=\eta_n(s)-s\) is a measurable function of \(s\). Applying \cref{prop:iso-repr} pathwise with \(h=h_n\) yields
\[
\Pl\{\Delta_n(X)\{Y-S_n\}\}=0.
\]
Also,
\[
\E_{P_0}[\Delta_n(X)\{Y-S_n\}]
=
\E_{P_0}[\Delta_n(X)\,\E_{P_0}[Y-S_n\mid S_n]]
=
\E_{P_0}[\Delta_n^2]
=
\|\Delta_n\|_{2,\Px}^2.
\]
Hence
\[
\|\Delta_n\|_{2,\Px}^2
=
(\E_{P_0}-\Pl)\{\Delta_n(X)\{Y-S_n\}\}.
\]

Fix \(\delta>0\), and work on the event \(A_{n,C}\) from the proof convention. On this event, \(S_n\) is uniformly bounded by \(C\). By \cref{lem:bv-post}, the function \(\eta_n\) belongs to the bounded-variation class
\[
\mathcal F_{\mathrm{TV},C}
:=
\{f:\R\to\R:\|f\|_\infty\le C,\ \|f\|_{TV}\le V\},
\]
where \(V<\infty\) is the variation bound in \cref{assump:eff-bv}. Let
\[
\mathcal F_{\mathrm{iso},C}
:=
\{f:\R\to\R:\text{$f$ nondecreasing and }\|f\|_\infty\le C\}.
\]
Since \(S_n=f_n(T)\) for a nondecreasing fitted isotonic map \(f_n\), and composition with a monotone map preserves bounded variation, the function
\[
t\mapsto \eta_n\{f_n(t)\}
\]
also belongs to \(\mathcal F_{\mathrm{TV},C}\), after enlarging \(C\) if necessary. Thus \(\eta_n(S_n)=h_1(T)\) for some \(h_1=f_1\circ T\) with \(f_1\in\mathcal F_{\mathrm{TV},C}\), while \(S_n=h_2(T)\) for \(h_2=f_2\circ T\) with \(f_2=f_n\in\mathcal F_{\mathrm{iso},C}\).

Define the localized product class
\[
\mathcal G_{n,C,\delta}
:=
\Bigl\{
(h_1-h_2)(Y-h_2):
h_1=f_1\circ T,\ h_2=f_2\circ T,\ 
f_1\in\mathcal F_{\mathrm{TV},C},\ 
f_2\in\mathcal F_{\mathrm{iso},C},\ 
\|h_1-h_2\|_{2,\Px}\le \delta
\Bigr\}.
\]
Then
\[
\Delta_n(X)\{Y-S_n\}\in\mathcal G_{n,C,\delta_n},
\qquad
\delta_n:=\|\Delta_n\|_{2,\Px}.
\]

Because bounded-variation functions are differences of bounded monotone functions, standard entropy-preservation results for Lipschitz transformations imply that the localized entropy integral of \(\mathcal G_{n,C,\delta}\) satisfies
\[
J_{[]}(\delta,\mathcal G_{n,C,\delta},L_2(P_0))
\lesssim
\delta^{1/2}.
\]
Moreover, \cref{assump:isoreg} implies that \(Y-\eta(T)\) is sub-Gaussian or subexponential, and on \(A_{n,C}\) the difference \(\eta(T)-S_n\) is bounded. Therefore \(Y-S_n\) has the same tail type, up to constants. The same localized multiplier empirical-process argument used in the proof of \cref{lem:isorate} then gives
\[
E^\star\sup_{g\in\mathcal G_{n,C,\delta}}|(\Pl-P_0)\{g\}|
\lesssim
n^{-1/2}\delta^{1/2}.
\]

Set \(\varepsilon_n:=n^{-1/3}\). For \(s\ge 0\), let
\[
B_s:=\{2^s\varepsilon_n\le \delta_n<2^{s+1}\varepsilon_n\}.
\]
On \(B_s\cap A_{n,C}\),
\[
2^{2s}\varepsilon_n^2
\le
\delta_n^2
\le
\sup_{g\in\mathcal G_{n,C,2^{s+1}\varepsilon_n}}|(\Pl-P_0)\{g\}|.
\]
Therefore, by Markov's inequality,
\[
P_0(B_s\cap A_{n,C})
\lesssim
\frac{E^\star\sup_{g\in\mathcal G_{n,C,2^{s+1}\varepsilon_n}}|(\Pl-P_0)\{g\}|}{2^{2s}\varepsilon_n^2}
\lesssim
\frac{n^{-1/2}(2^{s+1}\varepsilon_n)^{1/2}}{2^{2s}\varepsilon_n^2}
\lesssim
2^{-3s/2},
\]
since \(n^{-1/2}\varepsilon_n^{-3/2}=1\). Summing over \(s\ge S\) yields
\[
P_0(\delta_n\ge 2^S\varepsilon_n,\ A_{n,C})
\lesssim
\sum_{s=S}^\infty 2^{-3s/2}
\xrightarrow[S\to\infty]{} 0.
\]
Because \(P_0(A_{n,C}^c)\le \delta\) for all large \(n\), and \(\delta>0\) is arbitrary, it follows that
\[
\|\Delta_n\|_{2,\Px}=O_p(n^{-1/3}),
\qquad
\|\Delta_n\|_{2,\Px}^2=O_p(n^{-2/3}).
\]
This is exactly the claimed calibration-error rate.
\end{proof}

\begin{proof}[Proof of \cref{lem:center}]
Because \(h\equiv 1\) belongs to the calibration score class, \cref{eq:cal-score} gives
\[
\Pl\{Y-m_{n,\mathrm{iso}}^\star(X)\}=0.
\]
Hence
\[
\Pl\{Y\}-\Pl\{m_0(X)\}=\Pl\{m_{n,\mathrm{iso}}^\star(X)-m_0(X)\}.
\]
By Cauchy--Schwarz and \cref{lem:isorate},
\[
\E_{\Px}\bigl[|m_{n,\mathrm{iso}}^\star-m_0|\bigr]
\le
\|m_{n,\mathrm{iso}}^\star-m_0\|_{2,\Px}
=
O_p(n^{-1/3}),
\]
and \cref{lem:eprate} gives
\[
(\Pl-\Px)\{m_{n,\mathrm{iso}}^\star-m_0\}=O_p(n^{-2/3}).
\]
Therefore
\[
\Pl\{m_{n,\mathrm{iso}}^\star(X)-m_0(X)\}
=
\E_{\Px}[m_{n,\mathrm{iso}}^\star-m_0]
+
(\Pl-\Px)\{m_{n,\mathrm{iso}}^\star-m_0\}
=
o_p(1).
\]
By \cref{assump:design}, the law of large numbers yields \(\Pl\{Y\}\to_p \psi_0\) and \(\Pl\{m_0(X)\}\to_p \E_{\Px}[m_0(X)]\). Therefore \(\E_{\Px}[m_0(X)]=\psi_0\).
\end{proof}

\begin{lemma}[Centered empirical-process bounds for isotonic calibration]
\label{lem:eprate}
Suppose \cref{assump:design,assump:isoreg} holds. Then
\[
(\Pl-\Px)\{m_{n,\mathrm{iso}}^\star(X)-m_0(X)\}=O_p(n^{-2/3}),
\qquad
(\Pu-\Px)\{m_{n,\mathrm{iso}}^\star(\widetilde X)-m_0(\widetilde X)\}=O_p\bigl(n^{-1/3}N^{-1/2}\bigr).
\]
\end{lemma}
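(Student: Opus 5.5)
We handle the two displays separately, since they have different sources of dependence. The unlabeled term is the easier one: conditional on the labeled sample $\mathcal D_L$, the function $g_n := m_{n,\mathrm{iso}}^\star - m_0$ is fixed. The unlabeled sample $\mathcal D_U$ is independent of $\mathcal D_L$ and i.i.d.\ from $\Px$, so $(\Pu-\Px)g_n$ has conditional mean zero and conditional variance at most $N^{-1}\|g_n\|_{2,\Px}^2$. By conditional Chebyshev,
\[
\Prob\bigl(|(\Pu-\Px)g_n| > t\,N^{-1/2}\|g_n\|_{2,\Px} \,\big|\, \mathcal D_L\bigr)\le t^{-2}.
\]
Integrating out $\mathcal D_L$ and invoking \cref{lem:isorate}, which gives $\|g_n\|_{2,\Px}=O_p(n^{-1/3})$, yields the bound $O_p(n^{-1/3}N^{-1/2})$. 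No entropy argument is needed here.

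The labeled term cannot be handled this way, because $g_n$ is fitted on the same observations. We will use a localized maximal inequality instead, following the proof convention. Fix $\delta>0$ and work on $A_{n,C}$. There $g_n$ lies in the class
\[
\mathcal F_0=\{m-m_0: m\in\mathcal M_C\},
\]
which consists of differences of uniformly bounded monotone functions of $T=\mhat(X)$ and is uniformly bounded by $2C$; the proof of \cref{lem:isorate} already showed $m_0\in\mathcal M_C$. For $\mathcal F_r:=\{g\in\mathcal F_0:\|g\|_{2,\Px}\le r\}$, the bracketing bound from \cref{lem:isorate} gives $J_{[]}(r,\mathcal F_r,L_2(\Px))\lesssim r^{1/2}$. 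Lemma 3.4.2 of \citet{vdvwellner1996} then yields
\[
E^\star\sup_{g\in\mathcal F_r}\sqrt n\,|(\Pl-\Px)g| \lesssim r^{1/2}\Bigl(1+\frac{r^{1/2}}{r^2\sqrt n}\,C\Bigr).
\]
At $r\asymp n^{-1/3}$ the correction term is $n^{1/2}\cdot n^{-1/2}=O(1)$, so the supremum is $O(n^{-1/6})$ after normalization. Hence $E^\star\sup_{\mathcal F_r}|(\Pl-\Px)g|\lesssim n^{-1/2}n^{-1/6}=n^{-2/3}$.

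To pass from a deterministic radius to the random $\|g_n\|_{2,\Px}$, take $K$ large so that $\Prob(\|g_n\|_{2,\Px}>Kn^{-1/3})<\delta$ by \cref{lem:isorate}. On the event $\{\|g_n\|_{2,\Px}\le Kn^{-1/3}\}\cap A_{n,C}$ we have
\[
|(\Pl-\Px)g_n|\le \sup_{\mathcal F_{Kn^{-1/3}}}|(\Pl-\Px)g|,
\]
and Markov's inequality bounds the right side by $O_p(n^{-2/3})$. Since $\delta$ is arbitrary, the claim holds unconditionally.

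The main obstacle is the bracketing bound for the localized class: brackets must be built for differences $m-m_0$ with $m_0$ fixed. This reduces to monotone brackets for $m$, shifted by $m_0$, and is routine. The remaining care point is checking the Bernstein remainder in Lemma 3.4.2, so that boundedness by $2C$ keeps the $r^{-2}n^{-1/2}$ term from dominating at $r\asymp n^{-1/3}$. If that remainder proved delicate, the fallback would be to cite the uniform entropy bound for bounded monotone classes directly.
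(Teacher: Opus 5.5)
Your proposal is correct and follows essentially the same route as the paper. For the unlabeled term you use a conditional second-moment bound given $\mathcal D_L$ combined with \cref{lem:isorate}, and for the labeled term you apply the localized bracketing maximal inequality (Lemma 3.4.2 of \citet{vdvwellner1996}) at radius $n^{-1/3}$ on $A_{n,C}$. Two steps the paper only asserts are made precise in your version: the check that the Bernstein remainder is $O(1)$ at $r\asymp n^{-1/3}$, and the passage from the deterministic radius $Kn^{-1/3}$ to the random norm $\|g_n\|_{2,\Px}$.
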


\begin{proof}[Proof of \cref{lem:eprate}]
By \cref{lem:isorate},
\[
\|m_{n,\mathrm{iso}}^\star-m_0\|_{2,\Px}^2=O_p(n^{-2/3}).
\]
To control the centered empirical-process term, work on the event \(A_{n,C}\) from the proof convention and consider the localized class
\[
\mathcal F_{n,C}
:=
\{g=f\circ\mhat-m_0:\ f\in\mathcal F_{\mathrm{iso}},\ \|f\circ\mhat\|_\infty\le C,\ \|g\|_{2,\Px}\lesssim n^{-1/3}\}.
\]
On \(A_{n,C}\), the function \(m_{n,\mathrm{iso}}^\star-m_0\) belongs to \(\mathcal F_{n,C}\). Because \(\mhat\) is fixed, this is a localized class of bounded monotone functions of a one-dimensional score. Standard bracketing entropy bounds for monotone classes imply
\[
J_{[]}\!\left(r,\mathcal F_{n,C},L_2(P_0)\right)\lesssim r^{1/2}.
\]
Applying a local maximal inequality for empirical processes, such as Lemma 3.4.2 of \citet{vdvwellner1996}, at radius \(r_n\asymp n^{-1/3}\) yields
\[
\sup_{g\in\mathcal F_{n,C}} |(\Pl-\Px)\{g\}|
=
O_p\!\left(n^{-1/2} r_n^{1/2}\right)
=
O_p(n^{-2/3}).
\]
In particular,
\[
(\Pl-\Px)\{m_{n,\mathrm{iso}}^\star-m_0\}=O_p(n^{-2/3}).
\]
Since \(P_0(A_{n,C}^c)\le \delta\) for all large \(n\), and \(\delta>0\) is arbitrary, the same bound holds unconditionally.

For the unlabeled sample, conditional on the labeled data, \(m_{n,\mathrm{iso}}^\star-m_0\) is fixed and the unlabeled sample is independent, so
\[
(\Pu-\Px)\{m_{n,\mathrm{iso}}^\star-m_0\}
=
O_p\left(N^{-1/2}\|m_{n,\mathrm{iso}}^\star-m_0\|_{2,\Px}\right)
=
O_p\bigl(n^{-1/3}N^{-1/2}\bigr)
\]
by \cref{lem:isorate}.
\end{proof}

\begin{lemma}[Centered empirical-process bounds for isotonic post-processing]
\label{lem:post-eprate}
Suppose \cref{assump:design,assump:isoreg,assump:eff-bv} hold. Let
\[
\Delta_n(X):=\widehat{m}_0(X)-m_{n,\mathrm{iso}}^\star(X).
\]
Then
\[
\bigl(\Pl-\Px\bigr)\{\Delta_n(X)\}=O_p(n^{-2/3}),
\qquad
\bigl(\Pu-\Px\bigr)\{\Delta_n(\widetilde X)\}=O_p\bigl(n^{-1/3}N^{-1/2}\bigr).
\]
\end{lemma}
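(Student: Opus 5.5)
I will mirror the proof of \cref{lem:eprate}, with \cref{thm:iso-postcal} supplying the $L^2$ rate for $\Delta_n$ in place of \cref{lem:isorate}. Throughout I fix $\delta>0$ and work on the event $A_{n,C}$ from the proof convention. On this event $S_n=m_{n,\mathrm{iso}}^\star(X)$ is bounded by $C$.

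\textbf{Structure of $\Delta_n$.} By \cref{lem:bv-post}, and because composing with the monotone map $f_n$ preserves total variation, $\widehat m_0(X)=\eta_n(f_n(T))$ is of the form $f_1\circ\mhat$ with $f_1\in\mathcal F_{\mathrm{TV},C}$. Also $m_{n,\mathrm{iso}}^\star=f_2\circ\mhat$ with $f_2\in\mathcal F_{\mathrm{iso},C}$. Hence $\Delta_n$ lies in the class
\[
\mathcal D_{C,r}:=\{(f_1-f_2)\circ\mhat:\ f_1\in\mathcal F_{\mathrm{TV},C},\ f_2\in\mathcal F_{\mathrm{iso},C},\ \|(f_1-f_2)\circ\mhat\|_{2,\Px}\le r\},
\]
taken at the random radius $r=\|\Delta_n\|_{2,\Px}$.

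\textbf{Labeled term.} Since $\mhat$ is fixed, the class $\mathcal D_{C,r}$ consists of bounded functions of a scalar score with total variation at most $V+2C$. Each such function is a difference of two bounded monotone functions. The bracketing bound for monotone classes then gives $\log N_{[]}(\varepsilon,\mathcal D_{C,r},L_2)\lesssim 1/\varepsilon$, so $J_{[]}(r,\mathcal D_{C,r},L_2)\lesssim r^{1/2}$. Lemma 3.4.2 of \citet{vdvwellner1996}, with a uniform envelope bounded by $2C$, then yields
\[
E^\star\sup_{g\in\mathcal D_{C,r}}|(\Pl-\Px)g|\lesssim n^{-1/2}r^{1/2}\Bigl(1+\tfrac{r^{1/2}}{r^2\sqrt n}\Bigr).
\]
At $r\asymp n^{-1/3}$ the correction factor is $O(1)$, so the bound is $n^{-2/3}$.

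The main obstacle is that the radius $\|\Delta_n\|_{2,\Px}$ is random and data-dependent, and $\Delta_n$ depends on the same labeled sample. I handle this by a peeling step. Given $\epsilon>0$, \cref{thm:iso-postcal} provides $K$ with $P(\|\Delta_n\|_{2,\Px}>Kn^{-1/3})<\epsilon$. On the complement of that event intersected with $A_{n,C}$, $\Delta_n$ lies in the fixed deterministic class $\mathcal D_{C,Kn^{-1/3}}$. Markov's inequality applied to the supremum bound then gives $(\Pl-\Px)\Delta_n=O_p(n^{-2/3})$. Since the supremum over the class already covers the dependence on the data, no independence between $\Delta_n$ and the labeled sample is needed. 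Finally, letting $\delta\to0$ removes the restriction to $A_{n,C}$.

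\textbf{Unlabeled term.} Conditional on the labeled data, $\Delta_n$ is a fixed function, and the unlabeled sample is independent of it. Chebyshev's inequality gives
\[
E\bigl[\{(\Pu-\Px)\Delta_n\}^2\mid\mathcal D_L\bigr]\le N^{-1}\|\Delta_n\|_{2,\Px}^2 .
\]
Combining this with \cref{thm:iso-postcal} (conditional Chebyshev followed by unconditioning on the high-probability event) gives $(\Pu-\Px)\Delta_n=O_p(n^{-1/3}N^{-1/2})$.

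One subtlety needs checking: $\widehat m_0$ is defined by conditioning under $P_0$ with the labeled sample held fixed, so it is a deterministic functional of $f_n$. This is what guarantees that $\Delta_n$ is a fixed function once the labeled sample is fixed, which is needed in both the labeled and unlabeled steps.
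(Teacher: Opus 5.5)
Your proposal is correct and follows essentially the same route as the paper: on $A_{n,C}$ you use \cref{lem:bv-post} to write $\Delta_n$ as a bounded-variation transform of $\mhat(X)$ minus a bounded monotone transform of $\mhat(X)$. You then apply the monotone-class bracketing bound and the local maximal inequality (Lemma 3.4.2 of \citet{vdvwellner1996}) at the radius $n^{-1/3}$ supplied by \cref{thm:iso-postcal}, and you treat the unlabeled term by conditioning on the labeled sample. Your explicit restriction to the high-probability event $\{\|\Delta_n\|_{2,\Px}\le Kn^{-1/3}\}$ is a simple localization rather than genuine peeling, and it makes precise a step the paper leaves implicit when it asserts $\Delta_n\in\mathcal H_{n,C}$.
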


\begin{proof}
By \cref{thm:iso-postcal},
\[
\|\Delta_n\|_{2,\Px}^2=O_p(n^{-2/3}).
\]
Let \(V<\infty\) be the variation bound in \cref{assump:eff-bv}. Fix \(\delta>0\), and work on the event \(A_{n,C}\). On this event, \(m_{n,\mathrm{iso}}^\star\) is uniformly bounded by \(C\). By \cref{lem:bv-post}, the function
\[
s\mapsto \eta_n(s):=\E_{P_0}[Y\mid m_{n,\mathrm{iso}}^\star(X)=s]
\]
has total variation at most \(V\) and is bounded by \(C\) on the support of \(m_{n,\mathrm{iso}}^\star(X)\). Since \(m_{n,\mathrm{iso}}^\star(X)=f_n(T)\) with \(T=\mhat(X)\) and \(f_n\) nondecreasing, it follows that
\[
\widehat m_0(X)=\eta_n\{f_n(T)\}
\]
is a bounded-variation transform of \(T\), while \(m_{n,\mathrm{iso}}^\star(X)\) is a bounded monotone transform of \(T\).

Consider the classes
\[
\mathcal F_{\mathrm{TV},C}
:=
\{f:\R\to\R:\|f\|_\infty\le C,\ \|f\|_{TV}\le V\},
\]
\[
\mathcal F_{\mathrm{iso},C}
:=
\{f:\R\to\R:\text{$f$ nondecreasing and }\|f\|_\infty\le C\},
\]
and define
\[
\mathcal H_{n,C}
:=
\Bigl\{
h_1-h_2:
h_1=f_1\circ T,\ h_2=f_2\circ T,\ 
f_1\in\mathcal F_{\mathrm{TV},C},\ 
f_2\in\mathcal F_{\mathrm{iso},C},\ 
\|h_1-h_2\|_{2,\Px}^2\lesssim n^{-2/3}
\Bigr\}.
\]
On \(A_{n,C}\), the random function \(\Delta_n\) belongs to \(\mathcal H_{n,C}\).

Because bounded-variation functions are differences of bounded monotone functions, the localized bracketing entropy of \(\mathcal H_{n,C}\) obeys the same bound as in the proof of \cref{thm:iso-postcal}, namely
\[
J_{[]}(\delta,\mathcal H_{n,C},L_2(P_0))\lesssim \delta^{1/2}.
\]
Applying the same local maximal inequality at localization radius \(n^{-1/3}\) therefore gives, on \(A_{n,C}\),
\[
\bigl|(\Pl-\Px)\{\Delta_n(X)\}\bigr|=O_p(n^{-2/3}).
\]
Since \(P_0(A_{n,C}^c)\le \delta\) for all large \(n\), and \(\delta>0\) was arbitrary, the same bound holds unconditionally.

For the unlabeled sample, conditional on the labeled data, \(\Delta_n\) is fixed and the unlabeled sample is independent. Hence
\[
\bigl(\Pu-\Px\bigr)\{\Delta_n(\widetilde X)\}
=
O_p\left(N^{-1/2}\|\Delta_n\|_{2,\Px}\right)
=
O_p\bigl(n^{-1/3}N^{-1/2}\bigr)
\]
by \cref{thm:iso-postcal}.
\end{proof}
\section{Proof of asymptotic linearity}
\label{app:al}

\begin{proof}[Proof of \cref{thm:al}]
The proof combines the exact AIPW representation from Theorem~\ref{thm:repr} with a localized empirical-process bound for the isotonic calibrator class. The latter controls the remainder generated by estimating the monotone post-processing map and yields the \(n^{-2/3}\) remainder rate. The overall argument is closely related to the calibrated DML framework of \citet{van2024automatic}; see also \citet{kennedy2024semiparametric} for background on semiparametric influence-function arguments.

By \cref{lem:center}, \(\E_{\Px}[m_0]=\psi_0\). Using \cref{thm:repr} with \(\what \equiv 1\),
\[
\widehat{\psi}_{\mathrm{iso}}
=
\rhon \Pl\{m_{n,\mathrm{iso}}^\star(X)\}+(1-\rhon)\Pu\{m_{n,\mathrm{iso}}^\star(\widetilde X)\}+\rhon \Pl\{Y-m_{n,\mathrm{iso}}^\star(X)\}.
\]
Add and subtract the AIPW form built from \(m_0\):
\[
\widehat{\psi}_{\mathrm{iso}}
=
\rhon \Pl\left\{m_0+\rhon^{-1}(Y-m_0)\right\}
+
(1-\rhon)\Pu\{m_0(\widetilde X)\}
+
(1-\rhon)(\Pu-\Pl)\{m_{n,\mathrm{iso}}^\star-m_0\}.
\]
Subtracting \(\psi_0\) gives
\[
\begin{aligned}
\widehat{\psi}_{\mathrm{iso}}-\psi_0
&=
\rhon(\Pl-P_0)\Bigl\{m_0-\psi_0+\rhon^{-1}(Y-m_0)\Bigr\}\\
&\quad +
(1-\rhon)(\Pu-\Px)\{m_0-\psi_0\}\\
&\quad +
(1-\rhon)(\Pu-\Pl)\{m_{n,\mathrm{iso}}^\star-m_0\}.
\end{aligned}
\]
To replace \(\rhon^{-1}\) by \(\rhozero^{-1}\), note that
\[
\rhon(\Pl-P_0)\Bigl\{(\rhon^{-1}-\rhozero^{-1})(Y-m_0)\Bigr\}
=
\Bigl(1-\frac{\rhon}{\rhozero}\Bigr)(\Pl-P_0)\{Y-m_0\}.
\]
The right-hand side is \(o_p(M^{-1/2})\) because \(\rhon\to\rhozero\) and
\[
(\Pl-P_0)\{Y-m_0\}=O_p(n^{-1/2})=O_p(M^{-1/2}).
\]
Also,
\[
(1-\rhon)(\Pu-\Pl)\{m_{n,\mathrm{iso}}^\star-m_0\}
=
(1-\rhon)(\Pu-\Px)\{m_{n,\mathrm{iso}}^\star-m_0\}
-
(1-\rhon)(\Pl-\Px)\{m_{n,\mathrm{iso}}^\star-m_0\}.
\]
By \cref{lem:eprate}, the labeled term is \(O_p(n^{-2/3})\) and the unlabeled term is \(O_p(n^{-1/3}N^{-1/2})\). Thus the remainder \(R_{n,N}\), defined as the difference between \(\widehat{\psi}_{\mathrm{iso}}-\psi_0\) and the leading empirical-process terms in \cref{eq:al}, satisfies the bound stated in \cref{thm:al}. Since \(n^{-2/3}=o(M^{-1/2})\) and \(n^{-1/3}N^{-1/2}=o(M^{-1/2})\), this yields \cref{eq:al}.
\end{proof}

\begin{proof}[Proof of \cref{cor:normal}]
By \cref{thm:al},
\[
\sqrt{M}\,(\widehat{\psi}_{\mathrm{iso}}-\psi_0)
=
\frac{1}{\sqrt{M}}\sum_{i=1}^n D_{m_0}^L(X_i,Y_i)
+
\frac{1}{\sqrt{M}}\sum_{j=1}^N D_{m_0}^U(\widetilde X_j)
+
o_p(1).
\]
The labeled and unlabeled samples are independent, each summand has mean zero, and \(\rhon\to\rhozero\in(0,1)\) by \cref{assump:design}. A standard triangular-array central limit theorem therefore yields the stated asymptotic normal law with variance \(\sigma_0^2\).

For the variance estimator, let \(g_n:=m_{n,\mathrm{iso}}^\star-m_0\). By \cref{lem:isorate},
\[
\E_{\Px}[g_n^2]=\|g_n\|_{2,\Px}^2=O_p(n^{-2/3}).
\]
Moreover, the proof of \cref{lem:isorate} already established the localized empirical-process bound
\[
\sup_{g\in\mathcal F_\delta}|(\Pl-\Px)\{g^2\}|=O_p(n^{-1/2}\delta^{1/2}),
\]
for the squared class \(\{g^2:g\in\mathcal F_\delta\}\). Taking \(\delta\asymp n^{-1/3}\), which matches the localization radius of \(g_n\), gives
\[
(\Pl-\Px)\{g_n^2\}=O_p(n^{-2/3}).
\]
Hence
\[
\Pl(g_n^2)=O_p(n^{-2/3})=o_p(1).
\]

On the boundedness event \(A_{n,C}\), conditional on the labeled data, \(g_n\) is fixed and \(|g_n|\le 2C\), so
\[
(\Pu-\Px)\{g_n^2\}
=
O_p\left(N^{-1/2}\|g_n^2\|_{2,\Px}\right)
=
O_p\left(C\,N^{-1/2}\|g_n\|_{2,\Px}\right)
=
O_p\bigl(n^{-1/3}N^{-1/2}\bigr).
\]
Therefore
\[
\Pu(g_n^2)=O_p(n^{-2/3})+O_p\bigl(n^{-1/3}N^{-1/2}\bigr)=o_p(1).
\]
Since \(P_0(A_{n,C}^c)\le \delta\) for all large \(n\) and arbitrary \(\delta>0\), these bounds hold unconditionally. Also, \(\widehat{\psi}_{\mathrm{iso}}\to_p\psi_0\) follows from \cref{thm:al}.

Finally,
\[
\widehat D^L-D_{m_0}^L
=
(1-\rhon^{-1})g_n-(\widehat{\psi}_{\mathrm{iso}}-\psi_0)+(\rhon^{-1}-\rhozero^{-1})(Y-m_0),
\]
and
\[
\widehat D^U-D_{m_0}^U
=
g_n(\widetilde X)-(\widehat{\psi}_{\mathrm{iso}}-\psi_0).
\]
Because \(\rhon\to\rhozero\), \(Y-m_0\) has finite second moment, and \(\Pl(g_n^2),\Pu(g_n^2)=o_p(1)\), it follows that
\[
\Pl\!\left[\{\widehat D^L-D_{m_0}^L\}^2\right]=o_p(1),
\qquad
\Pu\!\left[\{\widehat D^U-D_{m_0}^U\}^2\right]=o_p(1).
\]
Therefore \(\widehat\sigma^2\to_p \sigma_0^2\). The Wald interval claim then follows by Slutsky's theorem.
\end{proof}

\section{Proof of post-processing equivalence and reduced-model efficiency}
\label{app:eff}

\begin{proof}[Proof of \cref{prop:eif-red}]
Let
\[
\Delta_n(X):=\widehat{m}_0(X)-m_{n,\mathrm{iso}}^\star(X).
\]
Using the definition of the AIPW class together with \(\Pl\{Y-m_{n,\mathrm{iso}}^\star(X)\}=0\), we have
\begin{align*}
\widehat{\psi}_{\mathrm{eff}}-\widehat{\psi}_{\mathrm{iso}}
&=
\rho_n \Pl\{\Delta_n(X)\}
+
(1-\rho_n)\Pu\{\Delta_n(\widetilde X)\}
+
\Pl\{Y-\widehat{m}_0(X)\}
-\Pl\{Y-m_{n,\mathrm{iso}}^\star(X)\}\\
&=
\rho_n \Pl\{\Delta_n(X)\}
+
(1-\rho_n)\Pu\{\Delta_n(\widetilde X)\}
-\Pl\{\Delta_n(X)\}\\
&=
(1-\rhon)(\Pu-\Pl)\{\Delta_n\}.
\end{align*}
Now decompose
\[
=(1-\rhon)(\Pu-\Pl)\{\Delta_n\}
=
(1-\rhon)(\Pu-\Px)\{\Delta_n\}
-
(1-\rhon)(\Pl-\Px)\{\Delta_n\}.
\]
By \cref{lem:post-eprate}, the labeled term is \(O_p(n^{-2/3})\) and the unlabeled term is \(O_p(n^{-1/3}N^{-1/2})\). Since \(\rhon\to\rhozero\in(0,1)\) by \cref{assump:design}, both rates are \(o_p(M^{-1/2})\). Therefore
\[
\widehat{\psi}_{\mathrm{eff}}-\widehat{\psi}_{\mathrm{iso}}
=
(1-\rhon)(\Pu-\Pl)\{\Delta_n\}
=
o_p(M^{-1/2}).
\]
This proves the claimed first-order equivalence. In particular, \(\widehat{\psi}_{\mathrm{eff}}\) and \(\widehat{\psi}_{\mathrm{iso}}\) have the same asymptotic variance.
\end{proof}

\begin{proof}[Proof of \cref{cor:efficiency}]
For part (i), if \(m_0=\muzero\) almost surely, then the asymptotic influence pair in \cref{thm:al} is
\[
D_{\mu_0}^L(X,Y)=\mu_0(X)-\psi_0+\rhozero^{-1}\{Y-\mu_0(X)\},
\qquad
D_{\mu_0}^U(\widetilde X)=\mu_0(\widetilde X)-\psi_0.
\]
By \cref{prop:aipw-class}, this is the efficient influence pair in the unrestricted two-sample model. Therefore \(\widehat{\psi}_{\mathrm{iso}}\) is efficient for \(\psi_0\) in the full two-sample experiment.

For part (ii), let \(g=\theta\circ \mhat\), where \(\theta\) is monotone nondecreasing. Since \(m_0\) is the population isotonic regression of \(Y\) on \(\mhat(X)\), it minimizes
\[
\E_{P_0}\bigl[\{Y-h(\mhat(X))\}^2\bigr]
\]
over all monotone nondecreasing functions \(h\). In particular,
\[
\E_{P_0}\bigl[\{Y-m_0(X)\}^2\bigr]
\le
\E_{P_0}\bigl[\{Y-g(X)\}^2\bigr].
\]
By the Pythagorean identity for conditional means,
\[
\E_{P_0}\bigl[\{Y-g(X)\}^2\bigr]
=
\E_{P_0}\bigl[\{Y-\mu_0(X)\}^2\bigr]
+
\E_{P_0}\bigl[\{\mu_0(X)-g(X)\}^2\bigr],
\]
and likewise
\[
\E_{P_0}\bigl[\{Y-m_0(X)\}^2\bigr]
=
\E_{P_0}\bigl[\{Y-\mu_0(X)\}^2\bigr]
+
\E_{P_0}\bigl[\{\mu_0(X)-m_0(X)\}^2\bigr].
\]
Hence
\[
\E_{P_0}\bigl[\{\mu_0(X)-m_0(X)\}^2\bigr]
\le
\E_{P_0}\bigl[\{\mu_0(X)-g(X)\}^2\bigr].
\]
By \cref{prop:aipw-class}, the asymptotic variance of an AIPW estimator based on score \(f\) exceeds the efficiency bound by a constant multiple of
\[
\E_{P_0}\bigl[\{\mu_0(X)-f(X)\}^2\bigr].
\]
Applying this with \(f=m_0\) and \(f=g=\theta\circ\mhat\) shows that \(\widehat{\psi}_{\mathrm{iso}}\) has asymptotic variance no larger than that of \(\widehat{\psi}(\theta\circ\mhat)\). Therefore \(\widehat{\psi}_{\mathrm{iso}}\) is asymptotically at least as efficient, to first order, as any AIPW estimator of the form \(\widehat{\psi}(\theta\circ\mhat)\) for a monotone nondecreasing transformation \(\theta\).

For part (iii), let
\[
S:=\mhat(X),
\qquad
\eta_0(s):=\E_{P_0}[Y\mid S=s].
\]
\cref{prop:aipw-class} applied to the reduced two-sample experiment with scalar covariate \(S\) shows that the efficient influence pair in that reduced experiment is
\[
D_{\eta_0}^L(S,Y)
=
\eta_0(S)-\psi_0+\rhozero^{-1}\{Y-\eta_0(S)\},
\qquad
D_{\eta_0}^U(\widetilde S)
=
\eta_0(\widetilde S)-\psi_0.
\]
If \(\eta_0\) is monotone increasing, then \(\eta_0\) belongs to the isotonic class and
\[
\E_{P_0}\bigl[\{Y-g(S)\}^2\bigr]
=
\E_{P_0}\bigl[\{Y-\eta_0(S)\}^2\bigr]
+
\E_{P_0}\bigl[\{\eta_0(S)-g(S)\}^2\bigr]
\]
for every measurable \(g\). Hence the population isotonic regression of \(Y\) on the score \(S=\mhat(X)\) is exactly \(\eta_0(S)\), so
\[
m_0(X)=\eta_0\{\mhat(X)\}
\qquad\text{almost surely.}
\]
Therefore, the influence pair in \cref{thm:al} coincides with the reduced-model efficient influence pair above, so \(\widehat{\psi}_{\mathrm{iso}}\) is efficient for \(\psi_0\) in the reduced experiment indexed by \(\mhat(X)\).
\end{proof}

\section{Pooled i.i.d.\ Missing-Data Formulation}
\label{app:iid}

The joint two-sample setup can be embedded into a pooled i.i.d.\ model by introducing a sample-membership indicator \(R \in \{0,1\}\), where \(R=1\) denotes labeled observations and \(R=0\) denotes unlabeled observations. In that formulation one observes i.i.d.\ data \(O=(X,R,RY)\) with \(\Prob(R=1)=\rhozero\), and the calibrated estimator becomes
\[
\psihat = \mathbb P_M(\mtilde) + \mathbb P_M\left[\frac{R}{\rhozero}\{Y-\mtilde\}\right]
\]
up to the same calibration identity as in the main text. The corresponding observed-data efficient influence function is
\[
m_0(X)-\psi_0+\frac{R}{\rhozero}\{Y-m_0(X)\},
\]
which is the single-sample analogue of the efficient influence pair obtained by applying \cref{prop:aipw-class} to the reduced model with scalar covariate \(\mhat(X)\). Thus the usual i.i.d.\ missing-data formulation is a convenient special case, but the main text keeps the two-dataset structure explicit.

 \section{Additional empirical details}
\label{app:empirical}

This appendix records the benchmark construction used in \cref{sec:empirical}. We use the official \texttt{ppi\_py} real-data mean-estimation examples and preserve their labeled-sample-size grids. The \texttt{forest} experiment uses \(n \in \{50,100,\dots,500\}\). The \texttt{galaxies} and semisupervised \texttt{census\_income} benchmarks use \(n \in \{50,155,261,366,472,577,683,788,894,1000\}\). For each \(n\), the remaining observations form the unlabeled sample. In the current draft, we rerandomize this split 500 times at each \(n\), always evaluating all estimators on the same split and taking the full-sample mean as the benchmark target.

The estimator set mirrors the original PPI comparison where possible. In the paper-facing summaries, we report the labeled-only estimator, the imputation benchmark on binary-outcome tasks, PPI computed using the official \texttt{ppi\_py} implementation, the classical AIPW estimator based on the same prediction score, PPI++, and the efficiency-maximized benchmark \textsc{AIPW-EM}. We then add five calibration-oriented comparators: linear calibration, smooth monotone spline calibration, Platt scaling on the binary-outcome benchmarks, isotonic calibration with a fixed minimum bin size of 10, and an adaptive \texttt{AutoCal} selector that chooses among AIPW, linear calibration, monotone spline calibration, and isotonic calibration by cross-validated empirical efficiency. For the binary-outcome benchmarks, we also report Venn--Abers shrinkage. Platt scaling and Venn--Abers are omitted on \texttt{census\_income} because that outcome is not binary. Including both PPI++ and \textsc{AIPW-EM} lets us separate the underlying efficiency-maximization idea from the particular clipping rule used in the official implementation, while \cref{fig:ppi-calibration-appendix} isolates the fixed-calibration comparison among PPI, AIPW, \texttt{LinearCal}, \texttt{MonoSpline}, and \texttt{IsoCal}.

For the fixed calibration-based plug-in estimators, we fit the calibrator on the labeled pairs \(\{(\mhat(X_i),Y_i)\}_{i=1}^n\), transform both the labeled and unlabeled scores, and then compute the pooled plug-in mean together with the same semisupervised Wald interval used throughout the benchmark code. AIPW uses the same score but averages it over the pooled covariate sample before adding the labeled residual correction, whereas PPI uses only the unlabeled score average. Thus PPI discards part of the labeled-sample score information, although in the finite-sample benchmark results the empirical variance gap is often modest and not uniform across datasets. PPI++ and \textsc{AIPW-EM} both apply one-dimensional empirical efficiency maximization to the same score, but PPI++ follows the official implementation, which restricts the tuning coefficient to \([0,1]\), whereas \textsc{AIPW-EM} does not. Linear calibration regresses \(Y\) on an intercept and the score, then clips the fitted values to the range of outcomes observed in the labeled data to avoid extrapolation on the unlabeled sample. \texttt{MonoSpline} instead fits a smooth nondecreasing spline of \(Y\) on the score, using monotonicity constraints together with a mild roughness penalty, and then applies the same clipped pooled plug-in construction. Platt scaling fits a logistic regression of the binary outcome on a stabilized logit transform of the score, while the fixed-bin isotonic estimator fits a stepwise monotone calibration map and also clips to the labeled outcome range. The \texttt{AutoCal} estimator is a selector rather than a new calibrator: in the experiments it compares AIPW, linear calibration, monotone spline calibration, and fixed-bin isotonic calibration using 20-fold cross-validation and estimated influence-function variance as the selection criterion. To keep this step inexpensive when the unlabeled sample is very large, the foldwise criterion is evaluated on an unlabeled subsample of size \(\min(N,10n)\). The reported summaries are Monte Carlo averages of bias, empirical variance, MSE, interval coverage, and relative efficiency versus PPI across the repeated random splits. The complete numerical outputs, including dataset-level tables and per-metric plots, are generated automatically by the accompanying reproduction pipeline.

\begin{table}[hbt]
\centering
\small
\begin{tabular}{p{0.16\textwidth}p{0.34\textwidth}p{0.34\textwidth}}
\toprule
Dataset & Smaller labeled-sample regime & Larger labeled-sample regime \\
\midrule
\texttt{forest} & AIPW and PPI are already hard to beat at the smallest \(n\); among the efficiency-maximized one-dimensional corrections, \textsc{AIPW-EM} stays closer to \texttt{LinearCal} than clipped PPI++. & At larger \(n\), all of the score-based methods cluster tightly, with only very small differences among AIPW, \texttt{LinearCal}, \textsc{AIPW-EM}, and PPI++. \\
\texttt{galaxies} & AIPW, PPI, and \texttt{AutoCal} are very close at the smallest \(n\), while \textsc{AIPW-EM} and \texttt{LinearCal} are slightly more competitive than PPI++ in this noisier regime. & At larger \(n\), the leading score-based methods remain tightly clustered, with \textsc{AIPW-EM} typically falling between \texttt{LinearCal} and PPI++. \\
\texttt{census\_income} & PPI and AIPW are effectively indistinguishable because the unlabeled sample is enormous; \textsc{AIPW-EM} and \texttt{LinearCal} already offer modest gains at the smallest \(n\), while PPI++ trails both. & \texttt{LinearCal} and \textsc{AIPW-EM} are essentially tied on average and both outperform PPI++ throughout, while PPI and AIPW remain nearly identical. \\
\bottomrule
\end{tabular}
\caption{Compact dataset-by-regime summary for the reproduced PPI benchmarks. This table supplements the main-text benchmark paragraph by collecting the finer-grained patterns across labeled-sample regimes.}
\label{tab:ppi-takeaways}
\end{table}

\begin{table}[hbt]
\centering
\small
\resizebox{0.92\textwidth}{!}{\begin{tabular}{lllrccccc}
\toprule
Dataset & $n$ & $N$ & Estimator & Bias & Variance & MSE & Coverage & RelEff \\
\midrule
census\_income\_semisupervised\_mean & 50 & 380041 & AIPW & 1505.0088 & 4504854.7053 & 6769906.1251 & 1.000 & 1.001 \\
census\_income\_semisupervised\_mean & 50 & 380041 & AIPW-EM & 4178.3883 & 70847714.2801 & 88306643.2695 & 0.500 & 0.064 \\
census\_income\_semisupervised\_mean & 50 & 380041 & AutoCal & 5565.6125 & 38230329.7176 & 69206371.8065 & 0.500 & 0.118 \\
census\_income\_semisupervised\_mean & 50 & 380041 & Labeled-only & -1517.7897 & 23563645.9776 & 25867331.4739 & 1.000 & 0.191 \\
census\_income\_semisupervised\_mean & 50 & 380041 & IsoCal & 3512.7412 & 29099689.6953 & 41439040.1333 & 1.000 & 0.155 \\
census\_income\_semisupervised\_mean & 50 & 380041 & LinearCal & 4200.0874 & 56981257.1314 & 74621991.2359 & 0.500 & 0.079 \\
census\_income\_semisupervised\_mean & 50 & 380041 & MonoSpline & 3948.2803 & 42755986.1356 & 58344903.2557 & 0.500 & 0.105 \\
census\_income\_semisupervised\_mean & 50 & 380041 & PPI & 1505.4065 & 4508751.9184 & 6775000.5615 & 1.000 & 1.000 \\
census\_income\_semisupervised\_mean & 50 & 380041 & PPI++ & 1505.4065 & 4508751.9184 & 6775000.5615 & 1.000 & 1.000 \\
census\_income\_semisupervised\_mean & 155 & 379936 & AIPW & -1438.3350 & 119586.9808 & 2188394.5345 & 1.000 & 0.995 \\
census\_income\_semisupervised\_mean & 155 & 379936 & AIPW & -1438.3350 & 119586.9808 & 2188394.5345 & 1.000 & 0.995 \\
census\_income\_semisupervised\_mean & 155 & 379936 & AIPW-EM & -1196.9587 & 7736.1260 & 1440446.3034 & 1.000 & 15.385 \\
census\_income\_semisupervised\_mean & 155 & 379936 & AIPW-EM & -1196.9587 & 7736.1260 & 1440446.3034 & 1.000 & 15.385 \\
census\_income\_semisupervised\_mean & 155 & 379936 & AutoCal & -1334.8880 & 201835.0317 & 1983760.8991 & 1.000 & 0.590 \\
census\_income\_semisupervised\_mean & 155 & 379936 & AutoCal & -1334.8880 & 201835.0317 & 1983760.8991 & 1.000 & 0.590 \\
census\_income\_semisupervised\_mean & 155 & 379936 & Labeled-only & -2143.7703 & 5503050.0589 & 10098801.2427 & 1.000 & 0.022 \\
census\_income\_semisupervised\_mean & 155 & 379936 & Labeled-only & -2143.7703 & 5503050.0589 & 10098801.2427 & 1.000 & 0.022 \\
census\_income\_semisupervised\_mean & 155 & 379936 & IsoCal & -862.2109 & 74199.7009 & 817607.4095 & 1.000 & 1.604 \\
census\_income\_semisupervised\_mean & 155 & 379936 & IsoCal & -862.2109 & 74199.7009 & 817607.4095 & 1.000 & 1.604 \\
census\_income\_semisupervised\_mean & 155 & 379936 & LinearCal & -1191.3020 & 8197.0632 & 1427397.5987 & 1.000 & 14.520 \\
census\_income\_semisupervised\_mean & 155 & 379936 & LinearCal & -1191.3020 & 8197.0632 & 1427397.5987 & 1.000 & 14.520 \\
census\_income\_semisupervised\_mean & 155 & 379936 & MonoSpline & -999.3625 & 3310.8998 & 1002036.3011 & 1.000 & 35.949 \\
census\_income\_semisupervised\_mean & 155 & 379936 & MonoSpline & -999.3625 & 3310.8998 & 1002036.3011 & 1.000 & 35.949 \\
census\_income\_semisupervised\_mean & 155 & 379936 & PPI & -1438.0472 & 119023.3169 & 2187003.0713 & 1.000 & 1.000 \\
census\_income\_semisupervised\_mean & 155 & 379936 & PPI & -1438.0472 & 119023.3169 & 2187003.0713 & 1.000 & 1.000 \\
census\_income\_semisupervised\_mean & 155 & 379936 & PPI++ & -1438.0472 & 119023.3169 & 2187003.0713 & 1.000 & 1.000 \\
census\_income\_semisupervised\_mean & 155 & 379936 & PPI++ & -1438.0472 & 119023.3169 & 2187003.0713 & 1.000 & 1.000 \\
forest\_mean & 50 & 1546 & AIPW & 0.0647 & 0.0029 & 0.0071 & 0.500 & 1.017 \\
forest\_mean & 50 & 1546 & AIPW-EM & 0.0700 & 0.0028 & 0.0077 & 0.500 & 1.050 \\
forest\_mean & 50 & 1546 & AutoCal & 0.0786 & 0.0046 & 0.0108 & 0.500 & 0.642 \\
forest\_mean & 50 & 1546 & Labeled-only & 0.0284 & 0.0016 & 0.0024 & 1.000 & 1.847 \\
forest\_mean & 50 & 1546 & Imputation & -0.0733 & 0.0000 & 0.0054 & 0.000 & inf \\
forest\_mean & 50 & 1546 & IsoCal & 0.0624 & 0.0047 & 0.0086 & 0.500 & 0.624 \\
forest\_mean & 50 & 1546 & LinearCal & 0.0654 & 0.0031 & 0.0074 & 0.500 & 0.950 \\
forest\_mean & 50 & 1546 & MonoSpline & 0.0784 & 0.0046 & 0.0108 & 0.500 & 0.639 \\
forest\_mean & 50 & 1546 & Platt & 0.0761 & 0.0046 & 0.0104 & 0.500 & 0.647 \\
forest\_mean & 50 & 1546 & PPI & 0.0658 & 0.0030 & 0.0073 & 0.500 & 1.000 \\
forest\_mean & 50 & 1546 & PPI++ & 0.0530 & 0.0021 & 0.0049 & 1.000 & 1.424 \\
forest\_mean & 50 & 1546 & Venn-Abers & 0.0719 & 0.0044 & 0.0096 & 0.500 & 0.669 \\
forest\_mean & 100 & 1496 & AIPW & -0.0070 & 0.0000 & 0.0001 & 1.000 & 1.385 \\
forest\_mean & 100 & 1496 & AIPW & -0.0070 & 0.0000 & 0.0001 & 1.000 & 1.385 \\
forest\_mean & 100 & 1496 & AIPW-EM & -0.0043 & 0.0000 & 0.0000 & 1.000 & 9.407 \\
forest\_mean & 100 & 1496 & AIPW-EM & -0.0043 & 0.0000 & 0.0000 & 1.000 & 9.407 \\
forest\_mean & 100 & 1496 & AutoCal & -0.0066 & 0.0000 & 0.0001 & 1.000 & 1.074 \\
forest\_mean & 100 & 1496 & AutoCal & -0.0066 & 0.0000 & 0.0001 & 1.000 & 1.074 \\
forest\_mean & 100 & 1496 & Labeled-only & 0.0034 & 0.0000 & 0.0000 & 1.000 & 0.510 \\
forest\_mean & 100 & 1496 & Labeled-only & 0.0034 & 0.0000 & 0.0000 & 1.000 & 0.510 \\
forest\_mean & 100 & 1496 & Imputation & -0.0733 & 0.0000 & 0.0054 & 0.000 & inf \\
forest\_mean & 100 & 1496 & Imputation & -0.0733 & 0.0000 & 0.0054 & 0.000 & inf \\
forest\_mean & 100 & 1496 & IsoCal & -0.0028 & 0.0000 & 0.0000 & 1.000 & 0.431 \\
forest\_mean & 100 & 1496 & IsoCal & -0.0028 & 0.0000 & 0.0000 & 1.000 & 0.431 \\
forest\_mean & 100 & 1496 & LinearCal & -0.0042 & 0.0000 & 0.0000 & 1.000 & 10.548 \\
forest\_mean & 100 & 1496 & LinearCal & -0.0042 & 0.0000 & 0.0000 & 1.000 & 10.548 \\
forest\_mean & 100 & 1496 & MonoSpline & 0.0005 & 0.0000 & 0.0000 & 1.000 & 95.437 \\
forest\_mean & 100 & 1496 & MonoSpline & 0.0005 & 0.0000 & 0.0000 & 1.000 & 95.437 \\
forest\_mean & 100 & 1496 & Platt & -0.0027 & 0.0000 & 0.0000 & 1.000 & 17.260 \\
forest\_mean & 100 & 1496 & Platt & -0.0027 & 0.0000 & 0.0000 & 1.000 & 17.260 \\
forest\_mean & 100 & 1496 & PPI & -0.0077 & 0.0000 & 0.0001 & 1.000 & 1.000 \\
forest\_mean & 100 & 1496 & PPI & -0.0077 & 0.0000 & 0.0001 & 1.000 & 1.000 \\
forest\_mean & 100 & 1496 & PPI++ & -0.0055 & 0.0000 & 0.0000 & 1.000 & 2.574 \\
forest\_mean & 100 & 1496 & PPI++ & -0.0055 & 0.0000 & 0.0000 & 1.000 & 2.574 \\
forest\_mean & 100 & 1496 & Venn-Abers & -0.0015 & 0.0000 & 0.0000 & 1.000 & 6962.730 \\
forest\_mean & 100 & 1496 & Venn-Abers & -0.0015 & 0.0000 & 0.0000 & 1.000 & 6962.730 \\
galaxies\_mean & 50 & 16693 & AIPW & -0.0077 & 0.0002 & 0.0002 & 1.000 & 1.025 \\
galaxies\_mean & 50 & 16693 & AIPW-EM & -0.0031 & 0.0001 & 0.0001 & 1.000 & 1.286 \\
galaxies\_mean & 50 & 16693 & AutoCal & -0.0077 & 0.0002 & 0.0002 & 1.000 & 1.025 \\
galaxies\_mean & 50 & 16693 & Labeled-only & 0.0007 & 0.0016 & 0.0016 & 1.000 & 0.105 \\
galaxies\_mean & 50 & 16693 & Imputation & -0.0433 & 0.0000 & 0.0019 & 0.000 & inf \\
galaxies\_mean & 50 & 16693 & IsoCal & -0.0045 & 0.0001 & 0.0001 & 1.000 & 1.380 \\
galaxies\_mean & 50 & 16693 & LinearCal & -0.0025 & 0.0001 & 0.0001 & 1.000 & 1.183 \\
galaxies\_mean & 50 & 16693 & MonoSpline & -0.0077 & 0.0005 & 0.0006 & 1.000 & 0.337 \\
galaxies\_mean & 50 & 16693 & Platt & -0.0015 & 0.0001 & 0.0001 & 1.000 & 1.376 \\
galaxies\_mean & 50 & 16693 & PPI & -0.0078 & 0.0002 & 0.0002 & 1.000 & 1.000 \\
galaxies\_mean & 50 & 16693 & PPI++ & -0.0082 & 0.0002 & 0.0002 & 1.000 & 1.067 \\
galaxies\_mean & 50 & 16693 & Venn-Abers & -0.0074 & 0.0002 & 0.0002 & 1.000 & 1.065 \\
galaxies\_mean & 155 & 16588 & AIPW & -0.0122 & 0.0003 & 0.0005 & 1.000 & 1.002 \\
galaxies\_mean & 155 & 16588 & AIPW & -0.0122 & 0.0003 & 0.0005 & 1.000 & 1.002 \\
galaxies\_mean & 155 & 16588 & AIPW-EM & -0.0117 & 0.0003 & 0.0005 & 1.000 & 1.033 \\
galaxies\_mean & 155 & 16588 & AIPW-EM & -0.0117 & 0.0003 & 0.0005 & 1.000 & 1.033 \\
galaxies\_mean & 155 & 16588 & AutoCal & -0.0122 & 0.0003 & 0.0005 & 1.000 & 1.002 \\
galaxies\_mean & 155 & 16588 & AutoCal & -0.0122 & 0.0003 & 0.0005 & 1.000 & 1.002 \\
galaxies\_mean & 155 & 16588 & Labeled-only & 0.0149 & 0.0003 & 0.0005 & 1.000 & 1.295 \\
galaxies\_mean & 155 & 16588 & Labeled-only & 0.0149 & 0.0003 & 0.0005 & 1.000 & 1.295 \\
galaxies\_mean & 155 & 16588 & Imputation & -0.0433 & 0.0000 & 0.0019 & 0.000 & inf \\
galaxies\_mean & 155 & 16588 & Imputation & -0.0433 & 0.0000 & 0.0019 & 0.000 & inf \\
galaxies\_mean & 155 & 16588 & IsoCal & -0.0142 & 0.0002 & 0.0004 & 1.000 & 1.423 \\
galaxies\_mean & 155 & 16588 & IsoCal & -0.0142 & 0.0002 & 0.0004 & 1.000 & 1.423 \\
galaxies\_mean & 155 & 16588 & LinearCal & -0.0113 & 0.0003 & 0.0004 & 1.000 & 1.070 \\
galaxies\_mean & 155 & 16588 & LinearCal & -0.0113 & 0.0003 & 0.0004 & 1.000 & 1.070 \\
galaxies\_mean & 155 & 16588 & MonoSpline & -0.0123 & 0.0003 & 0.0004 & 1.000 & 1.243 \\
galaxies\_mean & 155 & 16588 & MonoSpline & -0.0123 & 0.0003 & 0.0004 & 1.000 & 1.243 \\
galaxies\_mean & 155 & 16588 & Platt & -0.0111 & 0.0003 & 0.0004 & 1.000 & 1.201 \\
galaxies\_mean & 155 & 16588 & Platt & -0.0111 & 0.0003 & 0.0004 & 1.000 & 1.201 \\
galaxies\_mean & 155 & 16588 & PPI & -0.0124 & 0.0003 & 0.0005 & 1.000 & 1.000 \\
galaxies\_mean & 155 & 16588 & PPI & -0.0124 & 0.0003 & 0.0005 & 1.000 & 1.000 \\
galaxies\_mean & 155 & 16588 & PPI++ & -0.0123 & 0.0003 & 0.0005 & 1.000 & 0.985 \\
galaxies\_mean & 155 & 16588 & PPI++ & -0.0123 & 0.0003 & 0.0005 & 1.000 & 0.985 \\
galaxies\_mean & 155 & 16588 & Venn-Abers & -0.0132 & 0.0002 & 0.0004 & 1.000 & 1.353 \\
galaxies\_mean & 155 & 16588 & Venn-Abers & -0.0132 & 0.0002 & 0.0004 & 1.000 & 1.353 \\
\bottomrule
\end{tabular}
}
\caption{Representative numerical summary at the smallest, middle, and largest labeled sample sizes for each reproduced benchmark. The reported quantities are Monte Carlo bias, empirical variance of the point estimator, MSE, Wald-interval coverage, and relative efficiency versus PPI for the displayed benchmark set, including PPI++, \textsc{AIPW-EM}, \texttt{AutoCal}, \texttt{MonoSpline}, and \texttt{IsoCal}.}
\label{tab:ppi-benchmarks-full}
\end{table}

\begin{figure}[hbt]
\centering
\includegraphics[width=\textwidth]{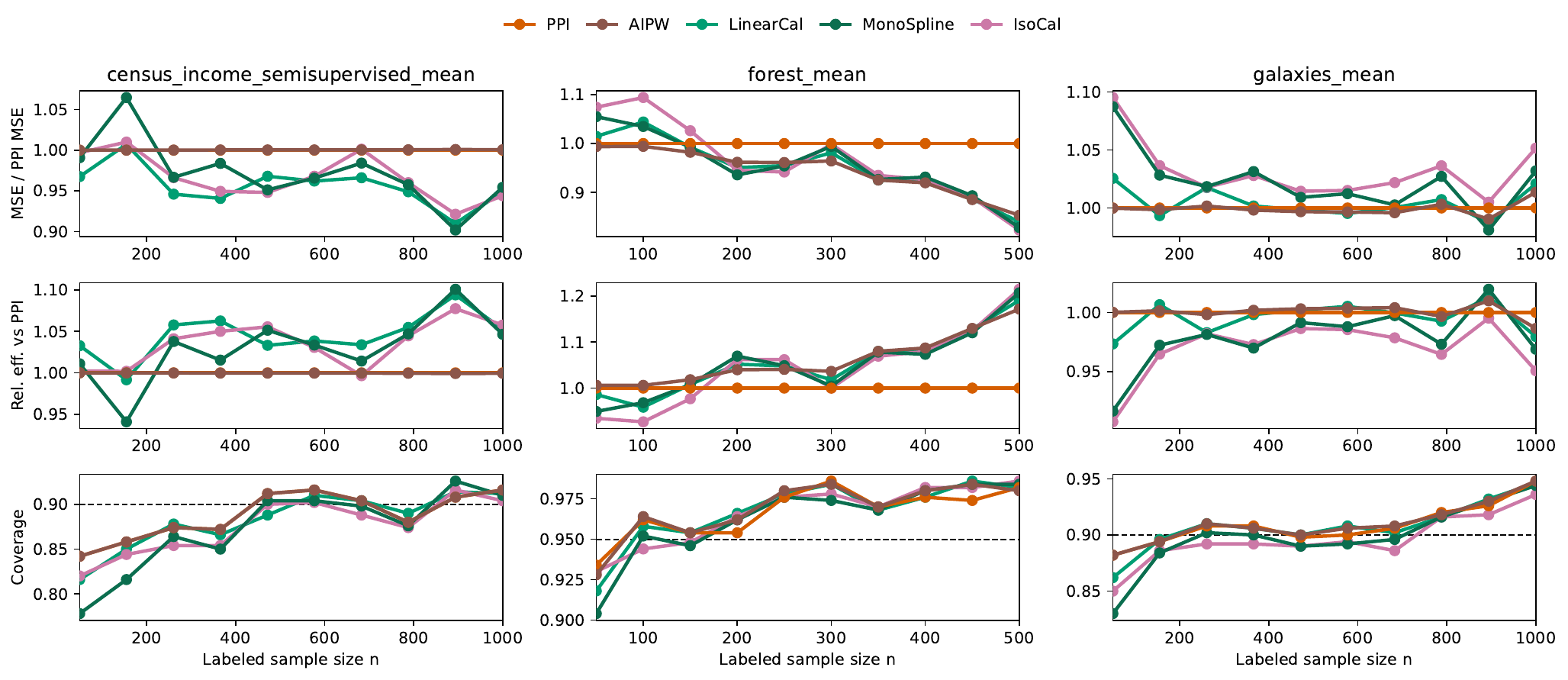}
\caption{Calibration-focused appendix comparison across the reproduced PPI benchmarks, restricted to PPI, AIPW, \texttt{LinearCal}, \texttt{MonoSpline}, and \texttt{IsoCal}. The panels report normalized MSE relative to PPI, relative efficiency versus PPI, and coverage across labeled-sample-size regimes. This figure isolates the fixed-calibration comparison that is omitted from the main-text benchmark figure for readability.}
\label{fig:ppi-calibration-appendix}
\end{figure}

\begin{figure}[hbt]
\centering
\includegraphics[width=\textwidth]{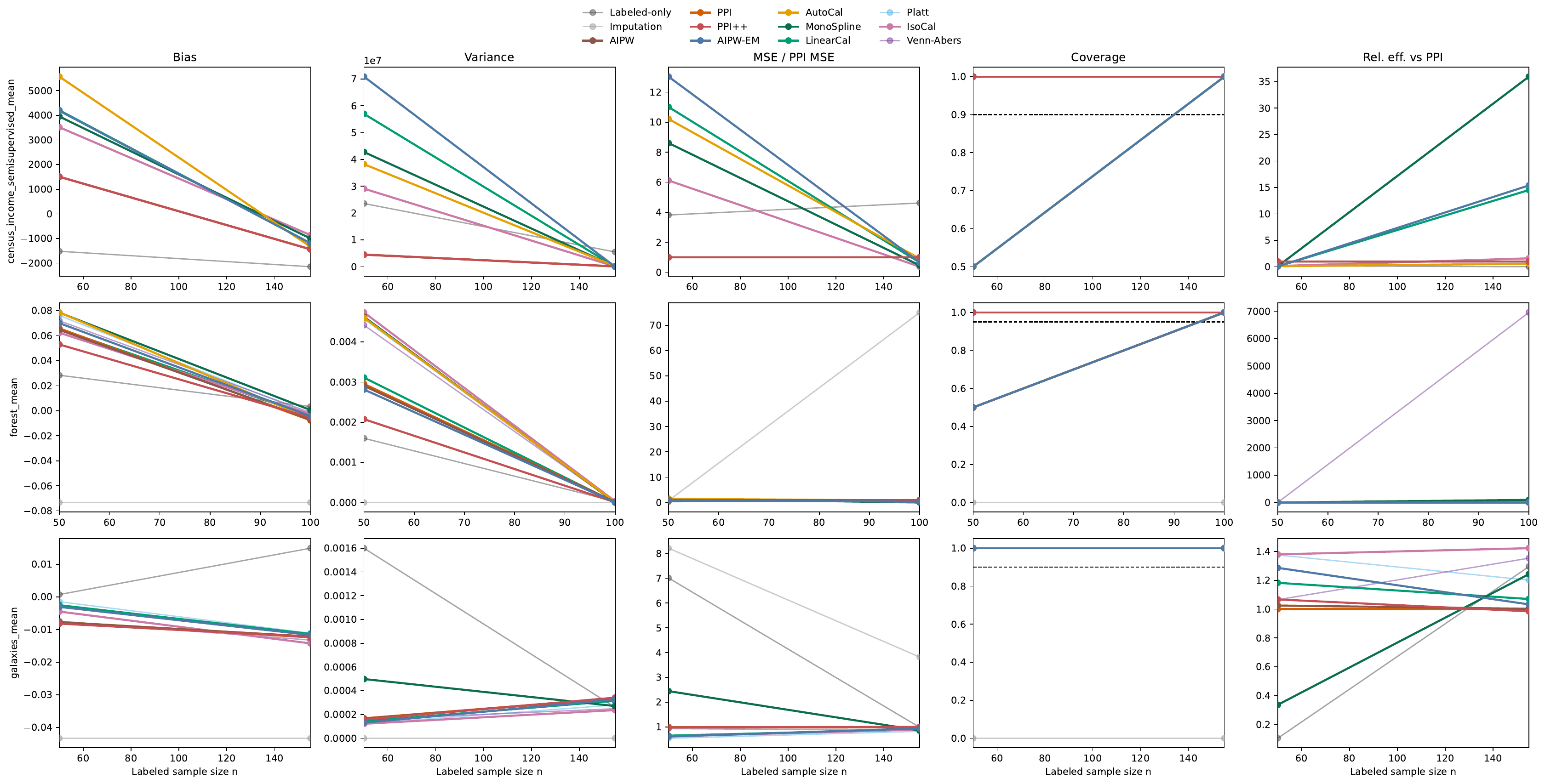}
\caption{Full diagnostic grid for the reproduced PPI benchmarks, showing bias, empirical variance, normalized MSE relative to PPI, coverage, and relative efficiency versus PPI across labeled-sample-size regimes. In the normalized-MSE and relative-efficiency panels, the dashed horizontal line marks parity with PPI; in the coverage panels it marks the nominal target coverage \(1-\alpha\). \texttt{AutoCal}, \texttt{MonoSpline}, and \texttt{IsoCal} appear alongside PPI, AIPW, PPI++, \textsc{AIPW-EM}, and the fixed calibration rules, while Platt scaling and Venn--Abers appear only on the binary-outcome datasets where they are well defined.}
\label{fig:ppi-benchmarks-full}
\end{figure}

\subsection{LLM evaluator breakdown}
\label{app:llm-eval-by-evaluator}

The PPE macro-averages in the main text hide substantial evaluator heterogeneity. In particular, the largest raw-proxy failures are driven primarily by Skywork and Athene, while ArmoRM is comparatively benign and leaves all main-text estimators nearly tied. The calibration-based and efficiency-maximized methods are therefore best understood as safeguards against badly scaled reward-model margins, rather than gains that require every evaluator to be severely misaligned.

\IfFileExists{assets/table_llm_summary_appendix.tex}{
\begin{table}[hbt]
\centering
\small
\resizebox{\textwidth}{!}{\begin{tabular}{lllccc}
\toprule
Track & $n$ & Estimator & MSE / PPI & Label savings & Coverage \\
\midrule
PPE Correctness & 100 & Labeled-only & 0.346 & 0.000 & 0.927 \\
PPE Correctness & 100 & PPI & 1.000 & 0.010 & 0.904 \\
PPE Correctness & 100 & AIPW & 0.774 & 0.009 & 0.906 \\
PPE Correctness & 100 & PPI++ & 0.330 & 0.070 & 0.929 \\
PPE Correctness & 100 & AIPW-EM & 0.309 & 0.095 & 0.930 \\
PPE Correctness & 100 & LinearCal & 0.305 & 0.101 & 0.931 \\
PPE Correctness & 100 & AutoCal & 0.310 & 0.094 & 0.929 \\
PPE Correctness & 100 & MonoSpline & 0.309 & 0.092 & 0.926 \\
PPE Correctness & 100 & IsoCal & 0.309 & 0.089 & 0.925 \\
PPE Correctness & 100 & Platt & 0.304 & 0.102 & 0.930 \\
PPE Correctness & 400 & Labeled-only & 0.299 & 0.000 & 0.988 \\
PPE Correctness & 400 & PPI & 1.000 & 0.020 & 0.924 \\
PPE Correctness & 400 & AIPW & 0.405 & 0.009 & 0.941 \\
PPE Correctness & 400 & PPI++ & 0.276 & 0.083 & 0.989 \\
PPE Correctness & 400 & AIPW-EM & 0.262 & 0.103 & 0.989 \\
PPE Correctness & 400 & LinearCal & 0.260 & 0.105 & 0.989 \\
PPE Correctness & 400 & AutoCal & 0.263 & 0.102 & 0.988 \\
PPE Correctness & 400 & MonoSpline & 0.263 & 0.094 & 0.989 \\
PPE Correctness & 400 & IsoCal & 0.265 & 0.089 & 0.989 \\
PPE Correctness & 400 & Platt & 0.260 & 0.105 & 0.989 \\
PPE Human Preference & 100 & Labeled-only & 0.303 & 0.000 & 0.911 \\
PPE Human Preference & 100 & PPI & 1.000 & 0.004 & 0.897 \\
PPE Human Preference & 100 & AIPW & 0.919 & 0.004 & 0.896 \\
PPE Human Preference & 100 & PPI++ & 0.298 & 0.027 & 0.912 \\
PPE Human Preference & 100 & AIPW-EM & 0.295 & 0.031 & 0.909 \\
PPE Human Preference & 100 & LinearCal & 0.295 & 0.031 & 0.909 \\
PPE Human Preference & 100 & AutoCal & 0.297 & 0.028 & 0.908 \\
PPE Human Preference & 100 & MonoSpline & 0.297 & 0.025 & 0.903 \\
PPE Human Preference & 100 & IsoCal & 0.297 & 0.024 & 0.905 \\
PPE Human Preference & 400 & Labeled-only & 0.289 & 0.000 & 0.945 \\
PPE Human Preference & 400 & PPI & 1.000 & 0.005 & 0.915 \\
PPE Human Preference & 400 & AIPW & 0.703 & 0.004 & 0.917 \\
PPE Human Preference & 400 & PPI++ & 0.283 & 0.025 & 0.946 \\
PPE Human Preference & 400 & AIPW-EM & 0.282 & 0.028 & 0.945 \\
PPE Human Preference & 400 & LinearCal & 0.282 & 0.028 & 0.944 \\
PPE Human Preference & 400 & AutoCal & 0.282 & 0.027 & 0.944 \\
PPE Human Preference & 400 & MonoSpline & 0.282 & 0.027 & 0.942 \\
PPE Human Preference & 400 & IsoCal & 0.282 & 0.026 & 0.940 \\
\bottomrule
\end{tabular}
}
\caption{Appendix PPE-only numerical summary at \(n\in\{100,400\}\), expanding the main-text table to include the efficiency-maximized and additional fixed-calibration comparators omitted there for readability.}
\label{tab:llm-eval-summary-full}
\end{table}
}{}

\IfFileExists{assets/fig_llm_eval_by_evaluator.pdf}{
\begin{figure}[hbt]
\centering
\includegraphics[width=\textwidth]{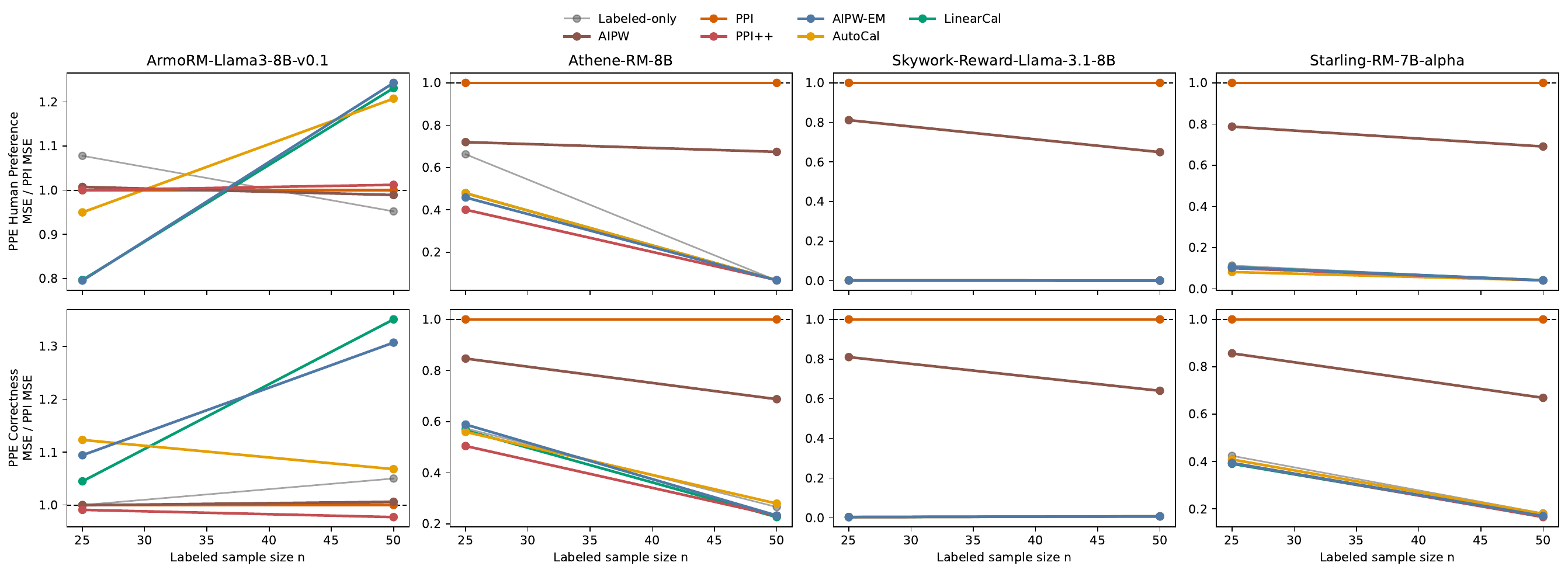}
\caption{Appendix evaluator-specific PPE summary. Each panel repeats the PPE Human and PPE Correctness comparisons for one public evaluator. The large raw-proxy failures are concentrated in Skywork and Athene, while ArmoRM is much more benign.}
\label{fig:llm-eval-by-evaluator}
\end{figure}
}{}

\subsection{LLM ranking supplement}
\label{app:llm-ranking}

The PPE human-preference track also induces a downstream ranking problem across the selected target models. Using the same repeated labeled/unlabeled splits as in \cref{sec:llm-benchmark}, we summarize how well each estimator recovers the target-model ordering induced by the full-data human win shares. For each evaluator, labeled-sample size, replicate, and estimator, we rank the eight selected target models by estimated win share, compare that ordering with the ranking induced by the full-data targets, and then macro-average across evaluators. We report Spearman rank correlation, top-1 identification rate, and top-1 regret, defined as the difference in true win share between the best target model and the model selected by the estimator; see \cref{fig:llm-ppe-ranking,tab:llm-ppe-ranking}. The main pattern mirrors the mean-estimation results: calibrated and efficiency-maximized estimators substantially outperform raw PPI and raw AIPW, while remaining close to labeled-only at small \(n\). As shown in \cref{fig:llm-ppe-ranking}, \texttt{AutoCal} attains the highest average Spearman correlation at \(n=200\), about \(0.905\). By \(n=400\), \texttt{LinearCal} gives the best overall ranking quality, with Spearman correlation about \(0.941\) and top-1 identification rate about \(0.646\); selected numerical summaries are reported in \cref{tab:llm-ppe-ranking}.

\IfFileExists{assets/table_llm_ppe_ranking.tex}{
\begin{table}[hbt]
\centering
\small
\resizebox{0.74\textwidth}{!}{\begin{tabular}{llrrr}
\toprule
$n$ & Estimator & Spearman & Top-1 & Top-1 regret \\
\midrule
25 & AIPW & 0.250 & 0.625 & 0.010 \\
25 & AIPW-EM & 1.000 & 1.000 & 0.000 \\
25 & AutoCal & 1.000 & 1.000 & 0.000 \\
25 & Labeled-only & 1.000 & 1.000 & 0.000 \\
25 & LinearCal & 1.000 & 1.000 & 0.000 \\
25 & PPI & 0.250 & 0.625 & 0.010 \\
25 & PPI++ & 1.000 & 1.000 & 0.000 \\
50 & AIPW & 0.250 & 0.625 & 0.010 \\
50 & AIPW-EM & 0.500 & 0.750 & 0.006 \\
50 & AutoCal & 0.500 & 0.750 & 0.006 \\
50 & Labeled-only & 1.000 & 1.000 & 0.000 \\
50 & LinearCal & 0.500 & 0.750 & 0.006 \\
50 & PPI & 0.000 & 0.500 & 0.013 \\
50 & PPI++ & 0.750 & 0.875 & 0.003 \\
\bottomrule
\end{tabular}
}
\caption{Appendix PPE-ranking summary at selected labeled sample sizes from the LLM benchmark. Spearman and top-1 are larger-is-better, while top-1 regret is smaller-is-better.}
\label{tab:llm-ppe-ranking}
\end{table}
}{}

\IfFileExists{assets/fig_llm_ppe_ranking.pdf}{
\begin{figure}[hbt]
\centering
\includegraphics[width=\textwidth]{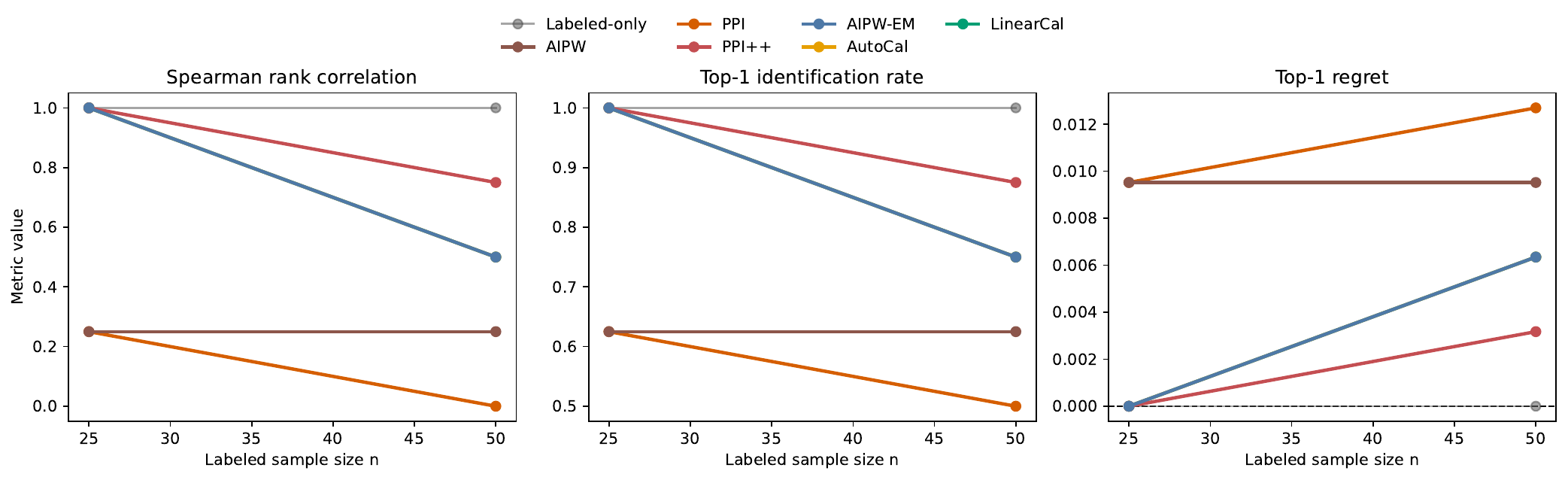}
\caption{Appendix PPE-ranking supplement from the LLM benchmark, macro-averaged across the public evaluator models. The three panels report Spearman rank correlation, top-1 identification rate, and top-1 regret for recovering the ranking of the eight selected PPE target models from a small labeled sample plus evaluator scores.}
\label{fig:llm-ppe-ranking}
\end{figure}
}{}

\section{\texttt{Python} Code}
\label{appendix:code}

\paragraph{Code availability.}
A public repository containing the \texttt{ppi\_aipw} package, experiment scripts, and paper assets is available at
\href{https://github.com/Larsvanderlaan/ppi-aipw}{github.com/Larsvanderlaan/ppi-aipw}.
Package documentation and worked examples are hosted at
\href{https://larsvanderlaan.github.io/ppi-aipw/}{larsvanderlaan.github.io/ppi-aipw/}.

Below, we provide a minimal, self-contained implementation of the method introduced in this paper.
\subsection{Isotonic regression via XGBoost}
\begin{codeblock}
import numpy as np
import xgboost as xgb

def isoreg_with_xgboost(x, y, max_depth=15, min_child_weight=20, weights=None):
    """
    Fit a monotone calibrator f so that f(x) is nondecreasing in x.

    Parameters
    ----------
    x : array-like, shape (n,) or (n, 1)
        Predictor used for calibration. In the PPI application this is usually
        the score m(X), so x is typically one-dimensional.
    y : array-like, shape (n,)
        Labeled outcomes.
    max_depth : int, default=15
        Maximum depth for the one-round XGBoost fit.
    min_child_weight : float, default=20
        Minimum total weight in a terminal node.
    weights : array-like, optional
        Optional observation weights.

    Returns
    -------
    predict_fn : callable
        Function mapping new x values to calibrated predictions f(x).
    """
    x = np.asarray(x)
    y = np.asarray(y).reshape(-1)
    x = x.reshape(len(y), -1)

    data = xgb.DMatrix(data=x, label=y, weight=weights)
    params = {
        "max_depth": max_depth,
        "min_child_weight": min_child_weight,
        "monotone_constraints": "(" + ",".join(["1"] * x.shape[1]) + ")",
        "eta": 1.0,
        "gamma": 0.0,
        "lambda": 0.0,
        "objective": "reg:squarederror",
        "verbosity": 0,
    }
    iso_fit = xgb.train(params=params, dtrain=data, num_boost_round=1)

    def predict_fn(x_new):
        x_new = np.asarray(x_new)
        if x_new.ndim == 1:
            x_new = x_new.reshape(-1, 1)
        data_pred = xgb.DMatrix(data=x_new)
        return iso_fit.predict(data_pred)

    return predict_fn
\end{codeblock}
\subsection{Isotonic-calibrated plug-in}
\begin{codeblock}
import numpy as np
from statistics import NormalDist


def _resolve_scores(
    Y_labeled,
    X_labeled=None,
    X_unlabeled=None,
    score_fn=None,
    score_labeled=None,
    score_unlabeled=None,
):
    Y_labeled = np.asarray(Y_labeled).reshape(-1)

    if score_fn is not None:
        if X_labeled is None or X_unlabeled is None:
            raise ValueError(
                "If score_fn is provided, then X_labeled and X_unlabeled "
                "must also be provided."
            )
        score_labeled = score_fn(X_labeled)
        score_unlabeled = score_fn(X_unlabeled)
    elif score_labeled is None or score_unlabeled is None:
        raise ValueError(
            "Provide either score_fn together with X_labeled/X_unlabeled, "
            "or provide score_labeled and score_unlabeled."
        )

    score_labeled = np.asarray(score_labeled).reshape(-1)
    score_unlabeled = np.asarray(score_unlabeled).reshape(-1)

    if len(score_labeled) != len(Y_labeled):
        raise ValueError("Y_labeled and score_labeled must have the same length.")

    return Y_labeled, score_labeled, score_unlabeled


def isotonic_calibrated_plugin(
    Y_labeled,
    X_labeled=None,
    X_unlabeled=None,
    score_fn=None,
    score_labeled=None,
    score_unlabeled=None,
    alpha=0.05,
    min_obs=20,
):
    Y_labeled, score_labeled, score_unlabeled = _resolve_scores(
        Y_labeled=Y_labeled,
        X_labeled=X_labeled,
        X_unlabeled=X_unlabeled,
        score_fn=score_fn,
        score_labeled=score_labeled,
        score_unlabeled=score_unlabeled,
    )

    n = len(Y_labeled)
    N = len(score_unlabeled)
    M = n + N
    rho = n / M

    calibrator = isoreg_with_xgboost(
        x=score_labeled,
        y=Y_labeled,
        min_child_weight=min_obs,
    )

    m_labeled = calibrator(score_labeled)
    m_unlabeled = calibrator(score_unlabeled)

    psi_hat = (n * m_labeled.mean() + N * m_unlabeled.mean()) / M

    if_labeled = m_labeled - psi_hat + (Y_labeled - m_labeled) / rho
    if_unlabeled = m_unlabeled - psi_hat
    var_hat = (
        rho * np.var(if_labeled, ddof=1)
        + (1 - rho) * np.var(if_unlabeled, ddof=1)
    )
    se_hat = np.sqrt(var_hat / M)

    z = NormalDist().inv_cdf(1 - alpha / 2)
    ci = (psi_hat - z * se_hat, psi_hat + z * se_hat)

    return {
        "estimate": psi_hat,
        "standard_error": se_hat,
        "confidence_interval": ci,
    }


# Example:
# fit = isotonic_calibrated_plugin(
#     Y_labeled=Y,
#     X_labeled=X,
#     X_unlabeled=X_tilde,
#     score_fn=m,
# )
\end{codeblock}

\subsection{Linear calibration}
\begin{codeblock}
import numpy as np
from statistics import NormalDist


def _resolve_scores(
    Y_labeled,
    X_labeled=None,
    X_unlabeled=None,
    score_fn=None,
    score_labeled=None,
    score_unlabeled=None,
):
    Y_labeled = np.asarray(Y_labeled).reshape(-1)

    if score_fn is not None:
        if X_labeled is None or X_unlabeled is None:
            raise ValueError(
                "If score_fn is provided, then X_labeled and X_unlabeled "
                "must also be provided."
            )
        score_labeled = score_fn(X_labeled)
        score_unlabeled = score_fn(X_unlabeled)
    elif score_labeled is None or score_unlabeled is None:
        raise ValueError(
            "Provide either score_fn together with X_labeled/X_unlabeled, "
            "or provide score_labeled and score_unlabeled."
        )

    score_labeled = np.asarray(score_labeled).reshape(-1)
    score_unlabeled = np.asarray(score_unlabeled).reshape(-1)

    if len(score_labeled) != len(Y_labeled):
        raise ValueError("Y_labeled and score_labeled must have the same length.")

    return Y_labeled, score_labeled, score_unlabeled


def linear_calibrated_plugin(
    Y_labeled,
    X_labeled=None,
    X_unlabeled=None,
    score_fn=None,
    score_labeled=None,
    score_unlabeled=None,
    alpha=0.05,
):
    Y_labeled, score_labeled, score_unlabeled = _resolve_scores(
        Y_labeled=Y_labeled,
        X_labeled=X_labeled,
        X_unlabeled=X_unlabeled,
        score_fn=score_fn,
        score_labeled=score_labeled,
        score_unlabeled=score_unlabeled,
    )

    n = len(Y_labeled)
    N = len(score_unlabeled)
    M = n + N
    rho = n / M

    X_design = np.column_stack([np.ones(n), score_labeled])
    beta_hat, _, _, _ = np.linalg.lstsq(X_design, Y_labeled, rcond=None)

    def calibrator(score):
        score = np.asarray(score).reshape(-1)
        return beta_hat[0] + beta_hat[1] * score

    m_labeled = calibrator(score_labeled)
    m_unlabeled = calibrator(score_unlabeled)

    psi_hat = (n * m_labeled.mean() + N * m_unlabeled.mean()) / M

    if_labeled = m_labeled - psi_hat + (Y_labeled - m_labeled) / rho
    if_unlabeled = m_unlabeled - psi_hat
    var_hat = (
        rho * np.var(if_labeled, ddof=1)
        + (1 - rho) * np.var(if_unlabeled, ddof=1)
    )
    se_hat = np.sqrt(var_hat / M)

    z = NormalDist().inv_cdf(1 - alpha / 2)
    ci = (psi_hat - z * se_hat, psi_hat + z * se_hat)

    return {
        "estimate": psi_hat,
        "standard_error": se_hat,
        "confidence_interval": ci,
    }


# Example:
# fit = linear_calibrated_plugin(
#     Y_labeled=Y,
#     score_labeled=mX,
#     score_unlabeled=mX_tilde,
# )
\end{codeblock}

\end{document}